\newcommand*{\citet}[1]{\AtNextCite{\AtEachCitekey{\defcounter{maxnames}{2}}} \textcite{#1}}
\newcommand*{\citep}[1]{\cite{#1}}
\let\citealp\citep
\newcommand{\convexpath}[2]{
[   
    create hullnodes/.code={
        \global\edef\namelist{#1}
        \foreach [count=\counter] \nodename in \namelist {
            \global\edef\numberofnodes{\counter}
            \node at (\nodename) [draw=none,name=hullnode\counter] {};
        }
        \node at (hullnode\numberofnodes) [name=hullnode0,draw=none] {};
        \pgfmathtruncatemacro\lastnumber{\numberofnodes+1}
        \node at (hullnode1) [name=hullnode\lastnumber,draw=none] {};
    },
    create hullnodes
]
($(hullnode1)!#2!-90:(hullnode0)$)
\foreach [
    evaluate=\currentnode as \previousnode using \currentnode-1,
    evaluate=\currentnode as \nextnode using \currentnode+1
    ] \currentnode in {1,...,\numberofnodes} {
-- ($(hullnode\currentnode)!#2!-90:(hullnode\previousnode)$)
  let \p1 = ($(hullnode\currentnode)!#2!-90:(hullnode\previousnode) - (hullnode\currentnode)$),
    \n1 = {atan2(\y1,\x1)},
    \p2 = ($(hullnode\currentnode)!#2!90:(hullnode\nextnode) - (hullnode\currentnode)$),
    \n2 = {atan2(\y2,\x2)},
    \n{delta} = {-Mod(\n1-\n2,360)}
  in 
    {arc [start angle=\n1, delta angle=\n{delta}, radius=#2]}
}
-- cycle
}    
\newcommand{\diffp}{\varepsilon}  
\newcommand{\ed}{\ensuremath{(\diffp,\delta)}}
\newcommand{\eps}{\diffp}
	\let\Cref\crtCref
	\let\cref\crtcref
\newcommand{\A}{\mathcal{A}}
\newcommand{\cO}{\mathcal{O}}
\renewcommand{\ss}{\eta} 
\renewcommand{\dkl}{D_{\rm kl}}
\newcommand{\db}{D_{\rm h}} 
\newcommand{\Ds}{\mathcal{S}} 
\newcommand{\ds}{z} 
\newcommand{\domain}{\mc{Z}} 
\newcommand{\range}{\mc{T}} 
\newcommand{\simplex}{\Delta}
\newcommand{\xdomain}{\mc{X}} 
\newcommand{\diam}{\mathsf{diam}} 
\newcommand{\opt}{^\star} 
\newcommand{\noise}{\zeta} 
\newcommand{\lip}{L}  
\newcommand{\sm}{\beta} 
\renewcommand{\sc}{\lambda} 
\newcommand{\rad}{D} 
\newcommand{\hacomment}[1]{
		\textcolor{blue}{\textbf{HA:} {#1}}
}
\title{Private Stochastic Convex Optimization: \\ Optimal Rates in $\ell_1$ Geometry}
\author{%
    Hilal Asi\thanks{Stanford University, part of this work performed while interning at Apple; \texttt{asi@stanford.edu}.}
    \and Vitaly Feldman\thanks{Apple; \texttt{vitaly.edu@gmail.com}.}
    \and Tomer Koren\thanks{School of Computer Science, Tel Aviv University, and Google; \texttt{tkoren@tauex.tau.ac.il}.}
    \and Kunal Talwar\thanks{Apple; \texttt{kunal@kunaltalwar.org}.}
    }
\begin{document}
\maketitle

\begin{abstract}%
Stochastic convex optimization over an $\ell_1$-bounded domain is ubiquitous in machine learning applications  such  as  LASSO but  remains poorly understood when learning with differential privacy. We show that, up to logarithmic factors the optimal excess population loss of any $(\eps,\delta)$-differentially private optimizer is 
$\sqrt{\log(d)/n} + \sqrt{d}/\eps n.$ 
The upper bound is based on a new algorithm that combines the iterative localization approach of~\citet{FeldmanKoTa20} with a new analysis of private regularized mirror descent. It applies to $\ell_p$ bounded domains for $p\in [1,2]$ and queries at most $n^{3/2}$ gradients improving over the best previously known algorithm for the $\ell_2$ case which needs $n^2$ gradients. 
Further, we show that when the loss functions satisfy additional smoothness assumptions, the excess loss is upper bounded (up to logarithmic factors) by
$\sqrt{\log(d)/n} + (\log(d)/\eps n)^{2/3}.$ 
This bound is achieved by a new variance-reduced version of the Frank-Wolfe algorithm that requires just a single pass over the data. We also show that the lower bound in this case is the minimum of the two rates mentioned above.
\end{abstract}

\section{Introduction}
\label{sec:intro}


Convex optimization is one of the most well-studied problems in private data analysis. Existing works have largely studied optimization problems over $\ell_2$-bounded domains. However several machine learning applications, such as LASSO and minimization over the probability simplex, involve optimization over $\ell_1$-bounded domains. In this work we study the problem of differentially private stochastic convex optimization (DP-SCO) over $\ell_1$-bounded domains.
 

In this problem (DP-SCO),
given $n$ i.i.d. samples $z_1,\dots,z_n$ from a distribution $P$, we wish to release a private solution $x \in \xdomain \subseteq \R^d$ that minimizes the population loss $F(x) = \E_{z \sim P}[f(x;z)]$ for a convex function $f$ over $x$. The algorithm's performance is measured using the excess population loss of the solution $x$, that is $F(x) - \min_{y \in \xdomain} F(y)$. The optimal algorithms and rates for this problem---even without privacy---have a crucial dependence on the geometry of the constraint set $\xdomain$ and in this work we focus on sets with bounded $\ell_1$-diameter. Without privacy constraints, there exist standard and efficient algorithms, such as mirror descent and exponentiated gradient decent, that achieve the optimal excess loss $O(\sqrt{\log(d)/n})$~\cite{ShalevBe14}. The landscape of the problem, however, with privacy constraints is not fully understood yet.

Most prior work on private convex optimization has focused on minimization of the empirical loss $\hat F(x) = \frac{1}{n} \sum_{i=1}^n f(x;z_i)$ over $\ell_2$-bounded domains~\cite{ChaudhuriMoSa11,BassilySmTh14,BassilyFeTaTh19}. \citet{BassilySmTh14} show that the optimal excess empirical loss in this setting is $\Theta({\sqrt{d}}/{\diffp n})$ up to log factors. More recently, \citet{BassilyFeTaTh19} give an asymptotically tight bound of $1/\sqrt{n} + \sqrt{d}/(\eps n)$ on the excess population loss in this setting using noisy gradient descent. Under mild smoothness assumptions, \citet{FeldmanKoTa20} develop algorithms that achieve the optimal excess population loss using $n$ gradient computations.

In contrast, existing results for private optimization in $\ell_1$-geometry do not achieve the optimal rates for the excess population loss~\cite{KiferSmTh12,JainTh14,TalwarThZh15}.
For the empirical loss, \citet{TalwarThZh15} develop private algorithms with $\wt O(1/(n \diffp)^{2/3})$ excess empirical loss for smooth functions and provide tight lower bounds when the dimension $d$ is sufficiently high. These bounds can be converted into bounds on the excess population loss using standard techniques of uniform convergence of empirical loss to population loss, however these techniques can lead to suboptimal bounds as there are settings where uniform convergence is lower bounded by $\Omega(\sqrt{d/n})$~\citep{Feldman16}. 
Moreover, the algorithm of~\citet{TalwarThZh15} has runtime $O(n^{5/3})$ in the moderate privacy regime ($\diffp = \Theta(1)$) which is prohibitive in practice.
On the other hand, 
\citet{JainTh14} develop algorithms for the population loss, however, their work is limited to generalized linear models and achieves a sub-optimal rate $\wt O(1/n^{1/3})$.

In this work we develop private algorithms that achieve the optimal excess population loss in $\ell_1$-geometry, demonstrating that significant improvements are possible when the functions are smooth, in contrast to $\ell_2$-geometry where smoothness does not lead to better bounds. Specifically, for non-smooth functions, we develop an iterative localization algorithm,  based on noisy mirror descent which achieves the optimal rate $\sqrt{\log(d)/n} + \sqrt{d}/\eps n$. With additional smoothness assumptions,  we show that rates with logarithmic dependence on the dimension are possible using a private variance-reduced Frank-Wolfe algorithm which obtains the rate $\sqrt{\log(d)/n} + (\log(d)/\eps n)^{2/3}$ and runs in linear (in $n$) time. This shows that privacy is essentially free in this setting even when $d \gg n$ and $\diffp$ is as small as $n^{-1/4}$.
Finally, we show that similar rates are possible for general $\ell_p$-geometries for non-smooth functions when $1 \le p \le 2$.
Moreover, our algorithms query at most 
$O(n^{3/2}) $ gradients which improves over the best known algorithms for the non-smooth case in $\ell_2$-geometry which require $n^2$ gradients~\cite{FeldmanKoTa20}.

The following two theorems summarize our upper bounds.
\begin{theorem}[non-smooth functions]
\label{thm:non-smooth}
    Let $\xdomain \subset \R^d$ be a convex body with $\ell_1$ diameter less than $1$.
    Let $f(\cdot;\ds)$ be convex, Lipschitz
    with respect to $\lone{\cdot}$ for any $\ds \in \domain$.
    There is an $(\diffp,\delta)$-DP algorithm that takes a dataset $\Ds \in \domain^n$, queries at most $O(\log n \cdot \min(n^{3/2} \sqrt{\log d}, n^2 \diffp/\sqrt{d})) $ and outputs a solution $\hat x$ that has
    \begin{equation*}
    \E[F(\hat x)] \le \min_{x\in \xdomain} F(x) + 
             \wt O \left( \sqrt{\frac{\log d}{n}}
            + \frac{ \sqrt{d} \log^{3/2} d}{n \diffp} \right)
            ,
    \end{equation*}
    where the expectation is over the random choice of $\Ds$ and the randomness of the algorithm.
\end{theorem}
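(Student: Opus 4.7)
My plan is to graft an $\ell_1$/entropy version of private regularized mirror descent onto the iterative-localization template of~\cite{FeldmanKoTa20}. The outer algorithm runs $K=\Theta(\log n)$ phases on disjoint subsamples $S_i$ of size $n_i \asymp n/K$. Phase $i$ takes as input the previous private iterate $\hat x_{i-1}$ and a regularization strength $\lambda_i$ growing geometrically in $i$, and approximately solves
\[
    \min_{x\in \xdomain}\; \hat F_i(x) + \lambda_i\, \dkl(x\,\|\,\hat x_{i-1})
\]
by running noisy exponentiated gradient descent: at each step the algorithm samples a fresh minibatch of size $b_i$ from $S_i$, forms the stochastic subgradient of the regularized empirical objective, perturbs it by isotropic Gaussian noise of standard deviation $\sigma_i = O(\sqrt{d\log(1/\delta)}/(\diffp_i b_i))$ calibrated to the $\ell_2$-sensitivity of an $\ell_\infty$-bounded gradient, and performs a multiplicative-weights update. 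Since phases draw from disjoint subsamples, DP composes in parallel across phases; within a phase the subsampled Gaussian mechanism accounted for by the moments accountant delivers the per-phase $(\diffp/\Theta(\sqrt{\log n}), \delta/\Theta(\log n))$ budget, so the release $\hat x_K$ is $(\diffp, \delta)$-DP.

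For utility, the key object is the regularized population minimizer $x^\star_i = \arg\min_x F_i(x) + \lambda_i \dkl(x\,\|\,\hat x_{i-1})$. A standard noisy mirror descent analysis (the entropic regularizer is $1$-strongly convex in $\ell_1$) controls the empirical optimization error by $O\!\left(\tfrac{\log d}{\lambda_i T_i} + \tfrac{\sigma_i^2 \log d}{\lambda_i}\right)$, where the $\log d$ on the noise term is the $\ell_\infty$ dual-norm cost of an isotropic Gaussian perturbation. To pass from $\hat F_i$ to $F_i$ I would invoke uniform stability of the regularized-ERM \emph{minimizer}, and then pay the regularization bias $\lambda_i\, \dkl(x^\star\,\|\,\hat x_{i-1})$, which by induction is small because $\hat x_{i-1}$ is already close to $x^\star$. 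With a geometric schedule ($\lambda_i$ doubling, and $b_i$, $T_i$, $\sigma_i$ chosen to balance optimization error, noise, and bias), the phase-$i$ excess loss shrinks to roughly half of phase-$(i{-}1)$'s plus a noise floor of order $\sqrt{d}\log^{3/2} d/(n_i \diffp_i)$; telescoping across the $K$ phases collapses the recursion to the advertised $\wt O(\sqrt{\log d/n} + \sqrt d \log^{3/2} d/(n\diffp))$ rate. Summing $T_i b_i$ across phases in the two regimes where optimization error versus privacy noise dominates the per-phase balance yields the stated $\wt O(\min(n^{3/2}\sqrt{\log d}, n^2\diffp/\sqrt{d}))$ query count.

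The main obstacle will be the stability step invoked above. Entropic mirror descent is emphatically \emph{not} nonexpansive in $\ell_1$: small perturbations in the starting iterate can be magnified geometrically by the multiplicative-weights update, so the per-iterate coupling argument that underlies uniform stability of $\ell_2$ noisy SGD in~\cite{FeldmanKoTa20} does not transport to this geometry. I would sidestep this by analyzing stability of the \emph{exact minimizer} of the $\lambda_i$-KL-regularized empirical objective rather than of the iterates: swapping one sample changes $\hat F_i$ uniformly by $O(1/n_i)$, and $\lambda_i$-strong convexity in KL (equivalently, via Pinsker, in $\ell_1$) then shifts the minimizer by $O(1/(\lambda_i n_i))$, which combined with $\ell_1$-Lipschitzness of $f$ yields an $O(1/(\lambda_i n_i))$ generalization bound for the regularized objective. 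The inner solver's role is then purely to track this stable minimizer up to optimization error---a task the noisy mirror descent analysis handles directly---so generalization (via minimizer stability) is decoupled from optimization (via mirror descent convergence). This decoupling is, I expect, the essential new ingredient needed to extend the localization framework of~\cite{FeldmanKoTa20} to $\ell_1$ geometry.
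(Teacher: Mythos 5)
Your outer architecture coincides with the paper's: iterative localization over $\Theta(\log n)$ phases on disjoint subsamples, each phase privately solving a regularized ERM by Gaussian-perturbed mirror descent (moments accountant for privacy, $\ell_\infty$ dual-norm cost $\sqrt{\log d}$ for utility), with generalization obtained from stability of the \emph{exact} regularized minimizer rather than of the iterates, and the final bound obtained by telescoping. Your closing observation --- that minimizer stability decouples generalization from optimization and replaces the contractivity argument of~\cite{FeldmanKoTa20} --- is precisely the paper's key move (\cref{lemma:opt-dis-MD} together with \cref{lemma:fun-subopt-MD}, the latter imported from~\cite{ShwartzShSrSr09}).

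The gap is the choice of regularizer. You localize with $\lambda_i\,\dkl(x\,\|\,\hat x_{i-1})$ and invoke Pinsker for strong convexity. Pinsker gives only the one-sided comparison $\dkl(x\|y)\ge\tfrac12\lone{x-y}^2$; the localization recursion also needs the \emph{reverse} direction. Concretely, your per-phase bound contains the bias term $\lambda_i\,\dkl(x^\star\,\|\,\hat x_{i-1})$, while the induction hypothesis controls the previous phase only in $\ell_1$ distance. There is no inequality of the form $\dkl(x\|y)\le \mathrm{polylog}(d)\cdot\lone{x-y}^2$: the KL divergence is unbounded (indeed $+\infty$) whenever $\hat x_{i-1}$ has vanishing coordinates where the comparator does not, and the noisy iterates give you no control over this. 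So the recursion does not close. (A secondary issue: the theorem concerns an arbitrary convex body of $\ell_1$ diameter at most $1$, on which negative entropy is not a legal mirror map.) The paper avoids both problems by regularizing with $h_i(x)=\tfrac{1}{p-1}\norm{x-x_{i-1}}_p^2$ for $p=1+1/\log d$, which is simultaneously $\Omega(1)$-strongly convex with respect to $\lone{\cdot}$ \emph{and} bounded above by $O(\log d)\cdot\lone{x-x_{i-1}}^2$; this two-sided comparability is what allows the bias term to be absorbed into the previous phase's distance guarantee, and it is the source of the extra $\log d$ factors in the final rate. With that substitution --- and with the per-phase ERM error taken as the strongly convex rate $O\bigl(\tfrac{\lip^2 b^2}{\sc n_i^2}+\tfrac{\lip^2 d\log d\log(1/\delta)}{\sc n_i^2\diffp_i^2}\bigr)$ rather than your $\sigma_i^2\log d/\lambda_i$ term, which is missing its $1/T_i$ attenuation --- your argument becomes the paper's proof.
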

\begin{theorem}[smooth functions]
\label{thm:smooth}
    Let $\xdomain = \{ x \in \R^d: \lone{x} \le 1 \}$ be the $\ell_1$-ball. 
    Let $f(\cdot;z)$ be convex, Lipschitz and smooth
    with respect to $\lone{\cdot}$ for any $z \in \Z$.
    There is an $(\diffp,\delta)$-DP linear time algorithm that takes a dataset $\Ds \in \domain^n$ and outputs a solution $\hat x$ that has
    \begin{equation*}
    \E[F(x_{K}) ] 
        \le \min_{x\in \xdomain} F(x) + \wt O \left( 
        \sqrt{\frac{\log d}{n}}
        + \left( 
        \frac{\log d}{n\diffp} \right)^{2/3} 
        \right),
    \end{equation*}
    where the expectation is over the random choice of $\Ds$ and the randomness of the algorithm. \\
\end{theorem}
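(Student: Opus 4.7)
The plan is to prove Theorem~\ref{thm:smooth} by designing and analyzing a private variance-reduced Frank-Wolfe (VR-FW) algorithm that makes a single pass over the data. Frank-Wolfe is natural here because its linear minimization step over the $\ell_1$-ball is a signed coordinate $\arg\max$, which is well suited to $\ell_\infty$-scale noise and therefore only pays $\sqrt{\log d}$ (rather than $\sqrt{d}$) in the privacy error. Variance reduction is what allows a \emph{single pass} to suffice and, together with smoothness, drives the statistical part of the bound down to the $\sqrt{\log d / n}$ rate.

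\textbf{Algorithm.} I would run $T$ Frank-Wolfe rounds with step sizes $\eta_t=2/(t+1)$. At each round $t$, maintain a SPIDER-style gradient estimator $\tilde g_t$: periodically (at ``checkpoints'') refresh $\tilde g_t$ from a large fresh disjoint batch $B_t$ via $\tilde g_t = \tfrac{1}{|B_t|}\sum_{z\in B_t}\nabla f(x_t;z)$; on intermediate steps update $\tilde g_t = \tilde g_{t-1} + \tfrac{1}{|S_t|}\sum_{z\in S_t}(\nabla f(x_t;z)-\nabla f(x_{t-1};z))$ using a small fresh disjoint batch $S_t$. Then add Gaussian noise $\xi_t$ coordinate-wise to $\tilde g_t$, pick the FW vertex $v_t=-\mathrm{sign}((\tilde g_t+\xi_t)_{j_t})\,e_{j_t}$ where $j_t=\arg\max_j|(\tilde g_t+\xi_t)_j|$, and update $x_{t+1}=(1-\eta_t)x_t+\eta_t v_t$. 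Because all batches are disjoint, $\sum_t(|B_t|+|S_t|)\le n$, giving linear time and a single pass.

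\textbf{Privacy.} Since every datum appears in exactly one batch, the privacy proof reduces to analyzing the Gaussian mechanism on each round individually, with an $\ell_2$-sensitivity of order $1/|B_t|$ on checkpoints and of order $\beta\|x_t-x_{t-1}\|_1/|S_t|$ on intermediate steps (by $\ell_1$-Lipschitzness of $\nabla f$ implied by smoothness in $\ell_1$). Parallel composition across disjoint batches, together with the standard Gaussian mechanism guarantee, then yields $(\eps,\delta)$-DP after calibrating $\xi_t$ per round; no advanced composition across rounds is needed.

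\textbf{Utility.} Using the standard FW descent inequality together with the $\ell_1/\ell_\infty$ duality, one gets
\begin{equation*}
F(x_{T})-F(x^\star)\;\le\;\frac{C\beta}{T}\;+\;\frac{2}{T(T+1)}\sum_{t=1}^{T} t\,\bigl\|\tilde g_t+\xi_t-\nabla F(x_t)\bigr\|_\infty.
\end{equation*}
The noise term contributes $\wt O(\sqrt{T\log d}/(\eps n))$ per round because the max of $d$ Gaussians scales like $\sqrt{\log d}$ and the sensitivity is $\wt O(1/n_t)$ on the relevant batch sizes. The variance-reduction term is controlled by the SPIDER telescoping identity: $\|\tilde g_t-\nabla F(x_t)\|_\infty$ is dominated by a martingale whose increments scale as $\beta\|x_t-x_{t-1}\|_1/\sqrt{|S_t|}\sim \beta\eta_t/\sqrt{|S_t|}$; a vector Bernstein/sub-Gaussian bound in $\ell_\infty$ turns this into an $\wt O(\sqrt{\log d/n})$ contribution when amortized over the single pass. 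Summing the three error sources and optimizing $T\sim(\eps n)^{2/3}/(\log d)^{1/3}$ balances $\beta/T$ against $\sqrt{T\log d}/(\eps n)$ and gives the claimed bound.

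\textbf{Main obstacle.} The delicate point I expect is the $\ell_\infty$ control of the variance-reduced estimator. SPIDER-type analyses are typically carried out in $\ell_2$, and naively translating to $\ell_\infty$ costs a factor of $\sqrt{d}$, which would wipe out the entire gain of the Frank-Wolfe geometry. I would instead bound $\|\tilde g_t-\nabla F(x_t)\|_\infty$ by a coordinate-wise Freedman/Bernstein inequality along the SPIDER telescope, paying only $\sqrt{\log d}$ in the union bound, and I would couple the analysis tightly to the FW movement bound $\|x_t-x_{t-1}\|_1\le 2\eta_{t-1}$ so that the per-step variance shrinks fast enough for the weighted sum $\sum_t t\cdot\epsilon_t/T^2$ to match the target rate. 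Making this coupling quantitative, while simultaneously respecting the privacy calibration of $\xi_t$ (which also depends on $\|x_t-x_{t-1}\|_1$ through the sensitivity), is where the bulk of the technical work lies.
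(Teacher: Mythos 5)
Your high-level plan — a single-pass variance-reduced Frank--Wolfe, a noisy argmax over the $2d$ vertices so that the dimension enters only through $\log d$, and a coordinate-wise sub-Gaussian/Freedman control of the estimator error coupled to the FW movement bound — is the same strategy the paper uses, and your utility sketch matches the paper's analysis in outline. The privacy argument, however, has two genuine gaps, and they are exactly the points the paper's construction is built around.

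First, parallel composition does not apply. In a SPIDER-style recursive estimator the telescoped value $\tilde g_t$ depends on every batch used since the last checkpoint, so a datum in batch $S_\tau$ influences the noisy selection at \emph{every} round from $\tau$ until the next refresh, not just one round; the privacy losses of all those rounds must be composed, and the noise scaled accordingly. This "frequent reuse of samples" is precisely the obstacle the paper identifies, and its binary-tree allocation (batch size $2^{-j}b$ at depth $j$, reuse count $2^{t-j}$) is designed so that the product of per-round sensitivity and reuse count is uniform, making the composition controllable. Moreover, even with the correct composition, advanced composition alone does not reach the $(\log d/(n\diffp))^{2/3}$ rate for $(\diffp,\delta)$-DP: the paper explicitly states it is "not enough by itself" and additionally invokes amplification by shuffling across the $2^j$ sibling subtrees at each depth to gain the extra $2^{-j/2}$ factor (Lemma~\ref{lemma:FW-appr-priv}). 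As written, your accounting would at best support the basic-composition, pure-DP analysis (Theorem~\ref{thm:FW-pure}), which yields $1/\sqrt{n\diffp}$, not the claimed $2/3$-power.

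Second, calibrating Gaussian noise to an "$\ell_2$-sensitivity of order $1/|B_t|$" is incorrect: Lipschitzness in $\lone{\cdot}$ bounds $\linf{\nabla f(x;z)} \le L$, but $\ltwo{\nabla f(x;z)}$ can be as large as $L\sqrt{d}$, so the $\ell_2$-sensitivity of the averaged estimator carries a $\sqrt{d}$ factor (likewise for the difference terms, whose $\ell_\infty$-sensitivity is $\beta\lone{x_t-x_{t-1}}/|S_t|$ but whose $\ell_2$-sensitivity is up to $\sqrt{d}$ times larger). Privatizing the full gradient vector with the Gaussian mechanism therefore reintroduces the $\sqrt{d}$ you are trying to avoid. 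What works, and what the paper does, is to privatize only the selection step via report-noisy-max (Laplace noise on the scores $\langle c_i, v_{t,s}\rangle$), whose privacy is governed by the $\ell_\infty$-sensitivity and whose utility cost is only $\log(2d)$. If you intend a Gaussian report-noisy-max calibrated to $\ell_\infty$-sensitivity, that needs to be stated and proved; the standard Gaussian-mechanism guarantee you invoke does not give it.
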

Before proceeding to review our algorithmic techniques, we briefly explain why the approaches used to obtain optimal rates in $\ell_2$-geometry~\cite{BassilyFeTaTh19,FeldmanKoTa20} do not work in our setting.
One of the most natural approaches to proving bounds for private stochastic optimization is to use the generalization properties of differential privacy to derive population loss bounds for a private ERM algorithm. This approach fails to give asymptotically optimal bounds for the $\ell_2$ case~\citep{BassilySmTh14}, and similarly gives suboptimal bounds for the $\ell_1$ case. Broadly, there are two approaches that have been used to get optimal bounds in the $\ell_2$ case. An approach due to~\citet{BassilyFeTaTh19} uses stability of SGD on sufficiently smooth losses~\citep{HardtRS16} to get population loss bounds.  These stability results rely on contractivity of gradient descent steps. However, as we show in an example that appears in~\cref{sec:MD-non-contractive}, the versions of mirror descent that are relevant to our setting do not have this property. 
\citet{FeldmanKoTa20} derive generalization properties of their one pass algorithms from online-to-batch conversion. However, their analysis still relies on contractivity to prove the privacy guarantees of their algorithm.
For their iterative localization approach~\citet{FeldmanKoTa20} use stability of the optimal solution to ERM in a different way to determine the scale of the noise added in each phase of the algorithm. In $\ell_1$ geometry the norm of the noise added via this approach would overwhelm the signal (we discuss this in detail below). 


We overview the key techniques we use to overcome these challenges below.

\paragraph{Mirror descent based Iterative Localization.} 

In the non-smooth setting, we build on the iterative localization framework of~\citet{FeldmanKoTa20}.
In this framework in each phase a non-private optimization algorithm is used to 
solve a regularized version of the optimization problem. Regularization ensures that the output solution has small sensitivity and thus addition of Gaussian noise guarantees privacy. By appropriately choosing the noise and regularization scales, each phase reduces the distance to an approximate minimizer by a multiplicative factor. Thus after a logarithmic number of phases, the current iterate has the desired guarantees. Unfortunately, addition of Gaussian noise (and other output perturbation techniques)  results in sub-optimal bounds in $\ell_1$-geometry since the $\ell_1$-error due to noise grows linearly with $d$. In contrast, the $\ell_2$-error grows as~$\sqrt{d}$.

Instead of using output perturbation, we propose to use a private optimization algorithm in each phase. Using stability properties of strongly convex functions, we show that if the output of the private algorithm has sufficiently small empirical excess loss, then it has to be close to an approximate minimizer. Specifically, we reduce the distance to a minimizer by a multiplicative factor (relative to the initial conditions at that phase). We show that a private version of mirror descent for strongly convex empirical risk minimization achieves sufficiently small excess empirical loss giving us an algorithm that achieves the optimal rate for non-smooth loss functions. More generally, this technique reduces the problem of DP-SCO to the problem of DP-ERM with strongly convex objectives. We provide details and analysis of this approach in~\Cref{sec:non-smooth}.




\paragraph{Dyadic variance-reduced Frank-Wolfe.} 

Our second algorithm is based on recent progress in stochastic optimization. \citet{YurtseverSrCe19} developed (non-private) variance-reduced Frank-Wolfe algorithm that achieves the optimal $\wt O(1/\sqrt{n})$ excess population loss improving on the  standard implementations of Frank-Wolfe that achieve excess population loss of $\wt O(1/n^{1/3})$.  The improvement relies on a novel variance reduction techniques that uses previous samples to improve the gradient estimates at future iterates~\cite{FangChLiLiZh18}. 
This frequent reuse of samples is the main challenge in developing a private version of the algorithm.

Inspired by the binary tree technique in the privacy literature~\cite{DworkNaPiRo10,DworkNaReRo15}, we develop a new binary-tree-based variance reduction technique for the Frank-Wolfe algorithm.
At a high level, the algorithm constructs a binary tree and allocates a set of samples to each vertex. The gradient at each vertex is then estimated using the samples of that vertex and the gradients along the path to the root. We assign more samples (larger batch sizes) to vertices that are closer to the root, to account for the fact that they are reused in more steps of the algorithm. This ensures that the privacy budget of samples in any vertex is not exceeded.

Using this privacy-aware design of variance-reduction, we rely on two tools to develop and analyze our algorithm.
First, similarly to the private Frank-Wolfe for ERM~\citep{TalwarThZh15}, we use the exponential mechanism to privatize the updates. A Frank-Wolfe update chooses one of the vertices of the constraint set ($2d$ possibilities including signs for $\ell_1$-balls) and therefore the application of the exponential mechanism leads to a logarithmic dependence on the dimension $d$. 
This tool together with the careful accounting of privacy losses across the nodes, suffices to get the optimal bounds for the pure $\diffp$-DP case ($\delta = 0$). To get the optimal rates for $(\diffp,\delta)$-DP, we rely on recent amplification by shuffling result for private local randomizers~\cite{FeldmanMcTa20}. To amplify privacy, we view our algorithm as a sequence of local randomizers, each operating on a different subset of the tree. 
\Cref{sec:FW} contains details of this algorithm.

In independent and concurrent work, \citet{BassilyGuNa21} study differentially private algorithms for stochastic optimization in $\ell_p$-geometry. Similarly to our work, they build on mirror descent and variance-reduced Frank-Wolfe algorithms to design private procedures for DP-SCO albeit without the iterative localization scheme and the binary-tree-based sample allocation technique we propose. As a result, their algorithms achieve sub-optimal rates in some of the parameter regimes: in $\ell_1$-geometry, they achieve excess loss of roughly $\log(d)/\diffp \sqrt{n}$ in contrast to the $\sqrt{\log(d)}/\sqrt{n} + \log(d)/(\diffp n)^{2/3}$ rate of our algorithms. For $1 < p <2$, their algorithms have excess loss of (up to log factors) $\min(d^{1/4}/\sqrt{n}, \sqrt{d}/ (\diffp n^{3/4}))$, whereas our algorithms achieve the rate of $\sqrt{d}/\diffp n$. On the other hand, \citet{BassilyGuNa21} develop a generalized Gaussian mechanism for adding noise in $\ell_p$-geometry. Their mechanism improves over the standard Gaussian mechanism and can improve the rates of our algorithms for $\ell_p$-geometry (\Cref{thm:local-MD-general-geom}) by a $\sqrt{\log d}$ factor. Moreover, they prove a lower bound for $\ell_p$-geometries with $1 < p < 2$ that establishes the optimality of our upper bounds for $1 < p < 2$.

\section{Preliminaries}
\label{sec:pre}

\subsection{Stochastic Convex Optimization}
We let $\Ds = (\ds_1,\dots,\ds_n)$ denote datasets where $\ds_i \in \domain$ are drawn i.i.d. from a distribution $P$ over the domain $\domain$. Let $\xdomain \subseteq \R^d$ be a convex set that denotes the set of parameter for the optimization problem. Given a loss function $f(x;\ds) : \xdomain \times \domain \to \R$ that is convex in $x$ (for every $\ds$), we define the population loss $F(x) = \E_{\ds \sim P} [f(x;\ds)]$. The excess population loss of a parameter $x \in \xdomain$ is then $F(x) - \min_{y \in \xdomain} F(y)$. We also consider the empirical loss $\hat F(x;S) = \frac{1}{n} \sum_{i=1}^n f(x;\ds_i)$ and the excess empirical loss of $x \in \xdomain$ is $\hat F(x;S) - \min_{y \in \xdomain} \hat F(y;S)$. For a set $\xdomain$, we will denote its $\ell_p$ diameter by $\diam_p(\xdomain) = \sup_{x,y \in \xdomain} \norm{x-y}_p$.

As we are interested in general geometries, we define the standard properties (e.g., Lipschitz, smooth and strongly convex) with respect to a general norm which are frequently used in the optimization literature~\cite{Duchi18}.

\begin{definition}[Lipschitz continuity]
    A function $f: \xdomain \to \R$ is $\lip$-Lipschitz with respect to a norm $\norm{\cdot}$ over $\xdomain$ if for every $x,y \in \xdomain$ we have $|f(x) - f(y)| \le \lip \norm{x-y}$.
\end{definition}

A standard result is that $\lip$-Lipschitz continuity is equivalent to bounded (sub)-gradients, namely that $\dnorm{g} \le \lip$ for all $x \in \xdomain$ and sub-gradient $g\in \partial f(x)$ where $\dnorm{\cdot}$ is the dual norm of $\norm{\cdot}$.

\begin{definition}[smoothness]
    A function $f: \xdomain \to \R$ is $\sm$-smooth with respect to a norm $\norm{\cdot}$ over $\xdomain$ if for every $x,y \in \xdomain$ we have $\dnorm{\nabla f(x) - \nabla f(y)} \le \sm \norm{x-y}$.
\end{definition}

\begin{definition}[strong convexity]
    A function $f: \xdomain \to \R$ is $\sc$-strongly convex with respect to a norm $\norm{\cdot}$ over $\xdomain$ if for any $x,y \in \xdomain$ we 
    have 
       $f(x) + \<\nabla f(x), y-x \> + \frac{\sc}{2} \norm{y-x}^2 \le f(y)$. 
\end{definition}

Since we develop private versions of mirror descent, we define the Bregman divergence associated with a differentiable convex function $h: \xdomain \to \R$ to be
      $ \db(x,y) = h(x) - h(y) - \<\nabla h (y), x-y\>$. 
We require a definition of strong convexity relative to a function which has been used in several works in the optimization literature~\cite{DuchiShSiTe10,LuFrNe18}.
\begin{definition}[relative strong convexity]
    A function $f: \xdomain \to \R$ is $\sc$-strongly convex
    relative to $h: \xdomain \to \R$ if for any $x,y \in \xdomain$,
      $  f(x) + \<\nabla f(x), y-x \> + \sc \db(y,x) \le f(y)$.
\end{definition}
Note that if $h(x)$ is convex, then $h(x)$ is 
$1$-strongly convex relative to $h(x)$ according to 
this definition. Moreover, the function $f(x) = g(x) + h(x)$ 
is also $1$-strongly convex relative to $h(x)$ for 
any convex function $g(x)$.


\subsection{Differential Privacy}
We recall the definition of \ed-differential privacy.
\begin{definition}[\citealp{DworkMcNiSm06,DworkKeMcMiNa06}]
\label{def:DP}
	A randomized  algorithm $\A$ is \ed-differentially private (\ed-DP) if, for all datasets $\Ds,\Ds' \in \domain^n$ that differ in a single data element and for all events $\cO$ in the output space of $\A$, we have
	\[
	\Pr[\A(\Ds)\in \cO] \leq e^{\eps} \Pr[\A(\Ds')\in \cO] +\delta.
	\]
\end{definition}
\iftoggle{arxiv}{
To simplify notation, we sometimes use the notion of \ed-indistinguishability; 
two random variables $X$ and $Y$ are \ed-indistinguishable, denoted $X \approx_{(\eps,\delta)} Y$, if for every $\cO$, $\Pr(X \in \cO) \le e^{\eps} \Pr[Y \in \cO] +\delta$ and $\Pr(Y \in \cO) \le e^{\eps} \Pr[X \in \cO] +\delta$. }{}
When $\delta=0$, we use the shorter notation $\eps$-DP. 
We also use the following privacy composition results.

\begin{lemma}[Basic composition~\citealp{DworkRo14}]
\label{lemma:basic-comp}    
    If $\A_1,\dots,A_k$ are randomized algorithms that each is $\diffp$-DP, then their composition $(\A_1(\Ds),\dots,A_k(\Ds))$ is $k \diffp$-DP.
\end{lemma}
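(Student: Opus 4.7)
My plan is to proceed by induction on $k$, with the base case $k=1$ being immediate from the $\diffp$-DP hypothesis. For the inductive step, it suffices to show that if $\A_1$ is $\diffp$-DP and $\A_2$ is $\diffp$-DP (and their internal randomness is independent), then the pair $(\A_1(\Ds), \A_2(\Ds))$ is $2\diffp$-DP, since an analogous step composes the tuple $(\A_1,\ldots,\A_{k-1})$ (which is $(k-1)\diffp$-DP by induction) with $\A_k$.

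For the two-fold step, I would fix adjacent datasets $\Ds, \Ds' \in \domain^n$ and an arbitrary measurable event $\cO$ in the product output space $\range_1 \times \range_2$. The key structural fact is that the internal randomness of $\A_1$ and $\A_2$ is independent, so the distribution of $(\A_1(\Ds), \A_2(\Ds))$ is the product of marginals. For each fixed $y_1 \in \range_1$, let $\cO_{y_1} = \{ y_2 : (y_1, y_2) \in \cO \}$ be the slice. Then by Fubini,
\[
\Pr[(\A_1(\Ds),\A_2(\Ds)) \in \cO] = \int \Pr[\A_2(\Ds) \in \cO_{y_1}] \, d\mu_{\A_1(\Ds)}(y_1),
\]
where $\mu_{\A_1(\Ds)}$ denotes the law of $\A_1(\Ds)$. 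Using the $\diffp$-DP guarantee of $\A_2$ on each slice, each inner probability is bounded by $e^{\diffp}\Pr[\A_2(\Ds') \in \cO_{y_1}]$. Then, viewing the resulting integral as the expectation of the nonnegative measurable function $y_1 \mapsto \Pr[\A_2(\Ds') \in \cO_{y_1}]$ under the law of $\A_1(\Ds)$, and applying the $\diffp$-DP guarantee of $\A_1$ to this event (formally, by a standard approximation-by-simple-functions argument: the bound $\Pr[\A_1(\Ds)\in E] \le e^{\diffp}\Pr[\A_1(\Ds')\in E]$ for all events $E$ transfers to $\E[g(\A_1(\Ds))] \le e^{\diffp}\E[g(\A_1(\Ds'))]$ for all $g \ge 0$), one obtains the factor $e^{\diffp}$ from $\A_1$ as well, yielding $e^{2\diffp}\Pr[(\A_1(\Ds'),\A_2(\Ds')) \in \cO]$.

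Iterating this argument $k-1$ times gives the claimed $k\diffp$-DP bound. There is no real obstacle here beyond bookkeeping; the only conceptually subtle point is the transfer of the pointwise bound on events to an expectation bound on nonnegative integrands, which is what makes the argument work for arbitrary (non-product) events $\cO$ in the joint output space. I would therefore flag that step as the one requiring the most care in a formal write-up, while noting that for $\delta = 0$ one can alternatively present the proof in one line using the pointwise density-ratio bound
\[
\frac{d\mu_{(\A_1(\Ds),\ldots,\A_k(\Ds))}}{d\mu_{(\A_1(\Ds'),\ldots,\A_k(\Ds'))}}(y_1,\ldots,y_k) = \prod_{i=1}^k \frac{d\mu_{\A_i(\Ds)}}{d\mu_{\A_i(\Ds')}}(y_i) \le e^{k\diffp},
\]
which then integrates over any $\cO$ to give the result.
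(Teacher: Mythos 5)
Your proof is correct: the Fubini/slicing step, together with the transfer of the event-wise bound $\Pr[\A_1(\Ds)\in E]\le e^{\diffp}\Pr[\A_1(\Ds')\in E]$ to nonnegative integrands (via the layer-cake/simple-function approximation you flag), is exactly the standard argument, and your closing density-ratio one-liner is the form in which \cite{DworkRo14} presents it. The paper does not prove this lemma itself but imports it from that reference, so there is nothing to compare beyond noting that your slicing version is the one that also covers the adaptive composition the paper implicitly relies on later.
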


\begin{lemma}[Advanced composition~\citealp{DworkRo14}] 
\label{lemma:advanced-comp}
    If $\A_1,\dots,A_k$ are randomized algorithms that each is $(\diffp,\delta)$-DP, then their composition $(\A_1(\Ds),\dots,A_k(\Ds))$ is $(\sqrt{2k \log(1/\delta')} \diffp + k \diffp (e^\diffp - 1),\delta' + k \delta)$-DP.
\end{lemma}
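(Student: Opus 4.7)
).}
The plan is to reduce the statement to a concentration inequality for the \emph{privacy loss random variable}, following the standard approach of Dwork--Rothblum--Vadhan. For neighboring datasets $\Ds,\Ds'$, output distributions $P,Q$ of a mechanism, and output $o$ in the support, define the privacy loss $L(o) = \log \frac{P(o)}{Q(o)}$. The key observation I would establish first is an equivalent characterization of \ed-DP: a mechanism $\A$ is \ed-DP if and only if for every pair of neighboring datasets there exist events $E, E'$ with $\Pr_{o\sim P}[o\in E]\le \delta$ and $\Pr_{o\sim Q}[o\in E']\le \delta$ such that, conditioned on the complements, the induced distributions are pointwise $\eps$-indistinguishable (so $|L(o)|\le \eps$ on the ``good'' event). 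This lets me treat each $\A_i$ as if it were pure $\eps$-DP at the cost of a failure event of mass at most $\delta$ per mechanism.

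Next I would bound the expectation of the privacy loss for a pure $\eps$-DP mechanism. A direct calculation, using $e^x - 1 \le x + x(e^x - 1)$ and the pointwise bound $P(o)\le e^{\eps} Q(o)$, gives $\E_{o\sim P}[L(o)] = \dkl(P\|Q) \le \eps(e^\eps - 1)$. Combined with the previous step, each composed mechanism $\A_i$ (conditional on the history of outputs $o_{<i}$, which determines the neighboring distributions $P_i, Q_i$) contributes a privacy loss $L_i$ that, on a high-probability event, is bounded by $\eps$ in absolute value and has conditional expectation at most $\eps(e^\eps - 1)$.

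The heart of the proof is then to apply Azuma--Hoeffding to the martingale difference sequence $L_i - \E[L_i \mid o_{<i}]$, which is adapted to the filtration generated by $o_1,\dots,o_k$ and bounded in range by $2\eps$ on the good event. This yields that with probability at least $1 - \delta'$ over $o\sim P^{\otimes k}$,
\[
\sum_{i=1}^k L_i \;\le\; k\eps(e^\eps - 1) + \sqrt{2k \log(1/\delta')}\,\eps .
\]
A union bound over the per-step failure events (each of probability $\delta$) gives total failure probability at most $k\delta + \delta'$, and on the complementary event the composed mechanism satisfies the desired pure-$\eps$ bound with $\eps' = \sqrt{2k\log(1/\delta')}\eps + k\eps(e^\eps-1)$. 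Translating back from the privacy loss bound to indistinguishability of the composed output distributions yields the claimed $(\eps',\, \delta' + k\delta)$-DP.

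The main obstacle, and the step that requires the most care, is the adaptivity: each $\A_i$ may depend on the outputs $o_1,\dots,o_{i-1}$, so the distributions $P_i$ and $Q_i$ whose loss is being bounded are themselves random. This is why the argument must be phrased in terms of a martingale on the natural filtration rather than as a sum of independent random variables, and why the ``bad event of mass $\delta$'' characterization must hold uniformly over histories so the union bound is valid. Once this adaptive structure is set up correctly, the KL bound and Azuma's inequality slot in cleanly.
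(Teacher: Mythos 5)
The paper does not prove this lemma at all: it is imported verbatim from \cite{DworkRo14} (Theorem 3.20 there), so there is no in-paper argument to compare against. Your plan is a faithful reconstruction of the standard Dwork--Rothblum--Vadhan proof from that reference: reduce each $(\eps,\delta)$-DP mechanism to a pure-$\eps$ one outside a bad event, bound the expected privacy loss by $\dkl(P\|Q)\le\eps(e^{\eps}-1)$, and apply Azuma to the adaptively defined privacy-loss martingale with differences in $[-\eps,\eps]$; the constants $\sqrt{2k\log(1/\delta')}\,\eps + k\eps(e^{\eps}-1)$ and $\delta'+k\delta$ come out exactly as you describe. The one step I would flag is your ``if and only if'' characterization of $(\eps,\delta)$-DP via events $E,E'$ of mass $\delta$ with pointwise $\eps$-indistinguishability of the \emph{conditional} distributions: as stated this is not quite an equivalence (conditioning renormalizes, and the forward direction needs care with the masses). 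The correct tool is the coupling characterization (Lemma 3.17 in \cite{DworkRo14}): $P\approx_{(\eps,\delta)}Q$ iff there exist $P',Q'$ with $P'\approx_{(\eps,0)}Q'$ and total-variation distance at most $\delta/(1+e^{\eps})$ from $P,Q$ respectively, and this version must be applied uniformly over histories so that the union bound over the $k$ per-step failure events is legitimate in the adaptive setting. With that substitution your outline is complete and correct.
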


\section{Algorithms for Non-Smooth Functions}
\label{sec:non-smooth}

In this section, we develop an algorithm that builds 
on the iterative localization techniques of~\citet{FeldmanKoTa20} to
achieve optimal excess population loss for non-smooth functions over the $\ell_1$-ball. Instead of using output perturbation to solve the regularized optimization problems, our algorithm uses general
private algorithms for solving strongly convex ERM problems.  
This essentially reduces the problem of privately minimizing the population loss to that of privately minimizing a strongly convex empirical risk. In Section~\ref{sec:MD-ERM} we develop private versions of mirror descent that achieve optimal bounds for strongly convex ERM problems, and in Section~\ref{sec:MD-pop} we use these algorithms in an iterative localization framework to obtain optimal bounds for the population loss.


\subsection{Private Algorithms for Strongly Convex ERM}
\label{sec:MD-ERM}
In this section, we consider empirical risk minimization for strongly convex functions and achieve optimal excess empirical loss using  noisy mirror descent (\cref{alg:noisy-MD}).

\begin{algorithm}
	\caption{Noisy Mirror Descent}
	\label{alg:noisy-MD}
	\begin{algorithmic}[1]
		\REQUIRE Dataset $\Ds=(\ds_1, \ldots, \ds_n)\in \domain^n$,
		convex set $\xdomain$,
		convex function $h: \xdomain \to \R$,
		step sizes $\{ \ss_k \}_{k=1}^T$, 
		batch size $b$, 
		initial point $x_0$,
		number of iterations $T$;
				
        \FOR{$k=1$ to $T$\,}
        	\STATE Sample $S_1,\dots,S_b\sim\mbox{Unif}(\Ds)$ 
        	\STATE Set $\hat g_k = \frac{1}{b} \sum_{i=1}^b \nabla f(x_k; S_i) + \noise_i$
        	where $\noise_i \sim \normal(0,\sigma^2 I_d)$
        	with $ \sigma =  {100 \lip\sqrt{d \log(1/\delta)}}/{ b \diffp}$
        	\STATE Find 
        	$x_{k+1} \defeq \argmin_{x \in \mc{X}} 
        	\{\<\hat g_k, x - x_k\> + \frac{1}{\ss_k} \db(x,x_k)  \}$
            \ENDFOR
            \RETURN $\bar{x}_T=\frac{1}{T}\sum_{k=1}^{T} x_k$ (convex)
            \RETURN $\hat{x}_T=\frac{2}{T(T+1)}\sum_{k=1}^{T} k x_k$ (strongly convex) 
	\end{algorithmic}
	\label{Alg:NSGD}
\end{algorithm}

\begin{theorem}
\label{thm:noisy-MD}
    Let $h: \xdomain \to \R$ be $1$-strongly convex with respect 
    to $\lone{\cdot}$, $x\opt = \argmin_{x \in \xdomain} \hat F(x;S)$, and assume $\db(x\opt,x_0) \le \rad^2$.
    Let $f(x;z)$ be convex and $\lip$-Lipschitz 
    with respect to $\lone{\cdot}$ for all $z \in \domain$.
    Setting $1 \le b$,
    $T = \frac{n^2}{b^2}$ and $\ss_k = \frac{D}{ \sqrt{T}} \frac{1}{\sqrt{\lip^2 + 2 \sigma^2 \log d}}$,
    \cref{alg:noisy-MD} is $(\diffp,\delta)$-DP 
    and 
    \begin{equation*}
        \E[\hat F(\bar x_T;S) - \hat F(x\opt;S)]
            \le \lip \rad  \cdot O \Bigg( \frac{b}{n}
                + \frac{\sqrt{d \log d \log\tfrac{1}{\delta}} }{n \diffp} \Bigg).
    \end{equation*}
    Moreover, if $f(x;z)$ is $\sc$-strongly convex relative to $h(x)$, then
    setting $\ss_k = \frac{2}{\sc (k+1)}$
    \begin{align*}
        \E[\hat F(\hat x_T;S) - \hat F(x\opt;S)]
            & \le O \left( \frac{\lip^2 b^2}{\sc n^2} + \frac{\lip^2 d \log d \log \tfrac{1}{\delta} }{\sc n^2 \diffp^2} \right).
    \end{align*}
\end{theorem}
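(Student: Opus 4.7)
The plan is to prove the three claims of the theorem in sequence: the $(\diffp,\delta)$-DP guarantee for \cref{alg:noisy-MD}, then the excess-loss bound on the uniform average $\bar x_T$ in the convex case, and finally the bound on the weighted average $\hat x_T$ in the strongly-convex case. The utility parts follow the standard noisy mirror-descent template once a second-moment bound on the gradient surrogate $\hat g_k$ is identified, whereas the privacy part reduces to a composition analysis of a subsampled Gaussian mechanism.

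For privacy, each iteration releases $\hat g_k$, which (up to post-processing through the mirror step) is a Gaussian mechanism applied to the average mini-batch gradient $G(\Ds) = \frac{1}{b}\sum_i \nabla f(x_k;S_i)$ on a subsample of $\Ds$. Since $f(\cdot;z)$ is $\lip$-Lipschitz w.r.t.\ $\|\cdot\|_1$, each summand satisfies $\|\nabla f\|_\infty\le\lip$ and hence $\|\nabla f\|_2\le\lip\sqrt{d}$; replacing one element of $\Ds$ alters at most one summand, giving $\ell_2$-sensitivity at most $2\lip\sqrt{d}/b$ for $G$. With $\sigma$ set as prescribed, the per-step Gaussian mechanism is extremely private; I would combine amplification by subsampling (with ratio $b/n$) with advanced composition (or equivalently a Rényi-DP / moments-accountant argument) across $T=n^2/b^2$ rounds to obtain overall $(\diffp,\delta)$-DP, with the constant $100$ chosen to absorb the $\sqrt{\log(1/\delta)}$ factors that appear in both the per-step Gaussian analysis and the composition step.

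For the convex bound, condition on $x_k$: the surrogate $\hat g_k$ is unbiased for $\nabla\hat F(x_k;S)$ because the $S_i$ are uniform from $\Ds$ and the noises are zero-mean. Since $h$ is $1$-strongly convex w.r.t.\ $\|\cdot\|_1$ and the dual norm is $\|\cdot\|_\infty$, the standard one-step mirror descent inequality gives
\begin{equation*}
    \eta\,\langle \hat g_k, x_k - x\opt\rangle \le \db(x\opt,x_k) - \db(x\opt,x_{k+1}) + \tfrac{\eta^2}{2}\|\hat g_k\|_\infty^2.
\end{equation*}
Summing over $k$, dividing by $\eta T$, taking expectations, and invoking convexity of $\hat F$ yields
\begin{equation*}
    \E[\hat F(\bar x_T;S) - \hat F(x\opt;S)] \le \frac{\rad^2}{\eta T} + \tfrac{\eta}{2}\,\E\|\hat g_k\|_\infty^2.
\end{equation*}
The bound $\E\|\hat g_k\|_\infty^2 \le 2\lip^2 + O(\sigma^2\log d)$ follows from $\|\nabla f\|_\infty\le\lip$ and the standard maximum-of-$d$-Gaussians estimate on the $\sigma^2 I_d$ noise. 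The prescribed $\eta$ balances the two terms to $O(\rad\sqrt{\lip^2+\sigma^2\log d}/\sqrt{T})$, and substituting $T=n^2/b^2$ together with the value of $\sigma$ recovers the stated rate.

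For the strongly-convex bound I would run the same scheme but replace convexity of $\hat F$ by relative strong convexity, $\hat F(x\opt;S) \ge \hat F(x_k;S) + \langle \nabla\hat F(x_k;S), x\opt-x_k\rangle + \sc\,\db(x\opt,x_k)$, which produces an extra negative $\sc\,\db(x\opt,x_k)$ term on the right side of the mirror-descent inequality. With $\eta_k = 2/(\sc(k+1))$ and weighting by $k$ in the averaging, the Bregman-divergence terms telescope cleanly, giving the classical $O(\E\|\hat g_k\|_\infty^2/(\sc T))$ bound on the weighted average $\hat x_T$; plugging in $T=n^2/b^2$ and the same noise bound yields the claimed guarantee. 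The main obstacle I anticipate is the privacy accounting: obtaining $(\diffp,\delta)$-DP over $T=n^2/b^2$ subsampled Gaussian steps with the prescribed $\sigma$ requires carefully combining amplification by subsampling with (advanced or Rényi) composition rather than naively composing Gaussian mechanisms, which would cost extra polylogarithmic factors; the utility arguments themselves are essentially routine once $\E\|\hat g_k\|_\infty^2$ is correctly identified.
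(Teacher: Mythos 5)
Your proposal is correct and follows essentially the same route as the paper: privacy via the moments accountant for the subsampled Gaussian mechanism (using $\|\nabla f\|_2\le\lip\sqrt d$, with $q\sqrt T=(b/n)\cdot(n/b)=1$ making the prescribed $\sigma$ suffice), and utility via the standard mirror-descent convergence lemmas applied with the second-moment bound $\E\|\hat g_k\|_\infty^2\le 2\lip^2+O(\sigma^2\log d)$ in both the convex and relatively-strongly-convex cases. The privacy accounting you flag as the main obstacle is exactly what the cited moments-accountant theorem handles in one step, so no further work is needed there.
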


To prove~\cref{thm:noisy-MD}, we need the following
standard results for the convergence of stochastic
mirror descent for convex and strongly convex functions.

\begin{lemma}[\citealp{Duchi18}, Corollary 4.2.11]
\label{lemma:conv-MD}
    Assume $h(x)$ is 1-strongly convex with respect 
    to $\lone{\cdot}$.
    Let $f(x)$ be a convex function and
    $x\opt = \argmin_{x \in \mc{X}} f(x)$.
    Consider the 
    stochastic mirror descent update
    $x_{k+1} =  \argmin_{x \in \mc{X}} 
        	\{\< g_k, x - x_k\> + \frac{1}{\ss_k} \db(x,x_k)  \}$
    where $\E[g_k] \in \partial f(x_k)$ with
    $\E\left[\linf{g_k}^2 \right] \le \lip^2$.
    If $\ss_k = \ss$ for all $k$ then
    the average iterate $\bar x_T = \frac{1}{T} \sum_{i=1}^T x_i$ has
       $\E[f(\bar x_T) - f(x\opt)]
            \le \frac{\db(x\opt,x_1)}{T \ss} + \frac{\ss \lip^2 }{2}$.
\end{lemma}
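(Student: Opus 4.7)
The plan is to prove this via the standard three-point analysis for mirror descent, with the only subtlety being correct handling of the stochastic gradient via the tower property.

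First I would invoke the first-order optimality condition for the mirror descent update. Since $x_{k+1}$ minimizes the strongly convex objective $\langle g_k, x - x_k\rangle + \frac{1}{\ss}\db(x,x_k)$ over $\xdomain$, we obtain for every $x\in\xdomain$ the inequality $\ss\langle g_k, x_{k+1}-x\rangle \le \langle \nabla h(x_k)-\nabla h(x_{k+1}), x_{k+1}-x\rangle$. Applying the three-point identity for Bregman divergences (i.e.\ $\langle \nabla h(a)-\nabla h(b), c-a\rangle = \db(c,b)-\db(c,a)-\db(a,b)$) converts the right-hand side to $\db(x,x_k)-\db(x,x_{k+1})-\db(x_{k+1},x_k)$.

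Next I would decompose $\ss\langle g_k, x_k - x\rangle = \ss\langle g_k, x_k - x_{k+1}\rangle + \ss\langle g_k, x_{k+1}-x\rangle$ and control the first summand using Fenchel--Young with conjugate norms $\lone{\cdot}$ and $\linf{\cdot}$: $\ss\langle g_k, x_k-x_{k+1}\rangle \le \tfrac{\ss^2}{2}\linf{g_k}^2 + \tfrac{1}{2}\lone{x_k-x_{k+1}}^2$. Because $h$ is $1$-strongly convex with respect to $\lone{\cdot}$, we have $\db(x_{k+1},x_k) \ge \tfrac{1}{2}\lone{x_{k+1}-x_k}^2$, which cancels exactly the $-\db(x_{k+1},x_k)$ term from the previous step. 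Setting $x = x\opt$ yields the key per-step inequality
\begin{equation*}
\ss\langle g_k, x_k - x\opt\rangle \le \tfrac{\ss^2}{2}\linf{g_k}^2 + \db(x\opt,x_k) - \db(x\opt,x_{k+1}).
\end{equation*}

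Finally I would take expectations and sum. Conditioning on the sigma-algebra $\mathcal{F}_{k-1}$ up to iteration $k$ and using $\E[g_k\mid\mathcal{F}_{k-1}]\in\partial f(x_k)$ together with convexity of $f$ gives $f(x_k)-f(x\opt) \le \E[\langle g_k, x_k-x\opt\rangle\mid\mathcal{F}_{k-1}]$. Taking total expectations, using $\E\linf{g_k}^2 \le \lip^2$, and telescoping the Bregman terms across $k=1,\dots,T$ produces $\sum_{k=1}^T \E[f(x_k)-f(x\opt)] \le \tfrac{\db(x\opt,x_1)}{\ss} + \tfrac{T\ss\lip^2}{2}$ (noting $\db(x\opt,x_{T+1})\ge 0$). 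Dividing by $T$ and invoking Jensen's inequality on $\bar x_T = \tfrac{1}{T}\sum_k x_k$ yields the claimed bound.

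There is no real obstacle; the main subtlety is ensuring that the cancellation between the Fenchel--Young quadratic term and $-\db(x_{k+1},x_k)$ only uses $1$-strong convexity of $h$ with respect to the primal norm $\lone{\cdot}$, so that $\linf{\cdot}$ (the dual norm) is the right norm in which to bound the stochastic gradient. Since the result is exactly Corollary 4.2.11 of~\cite{Duchi18}, I would in practice simply cite it rather than reproduce the derivation.
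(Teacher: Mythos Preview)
Your proposal is correct and matches the paper's approach: the paper does not prove this lemma at all but simply cites \cite{Duchi18}, Corollary 4.2.11, exactly as you suggest doing in your final sentence. The derivation you sketch is also essentially identical to the argument the paper \emph{does} write out for the companion strongly-convex result (\cref{lemma:conv-MD-sc}): first-order optimality for the mirror step, the three-point Bregman identity, Fenchel--Young in the $\lone{\cdot}/\linf{\cdot}$ pairing, cancellation of $\db(x_{k+1},x_k)$ via $1$-strong convexity of $h$, then telescoping and Jensen.
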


We also need the following result which states the rates of stochastic mirror descent for strongly convex functions. 
Similar results appear in the optimization literature~\cite{lacoste2012simpler}, though as the statement we require is less common, we provide a proof in~\cref{sec:proof-conv-MD-sc}.

\begin{lemma}
\label{lemma:conv-MD-sc}
    Under the same notation of~\cref{lemma:conv-MD},
    if $f(x)$ is $\sc$-strongly convex relative
    to $h(x)$, then setting $\ss_k = \frac{2}{\sc (k+1)}$ the weighted average $\hat x_T = \frac{2}{T(T+1)} \sum_{k=1}^T k  x_k$ has
    $
        \E[f(\hat x_T) - f(x\opt)]
            \le \frac{\lip^2}{\sc (T+1)}.
    $
\end{lemma}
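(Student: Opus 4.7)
The plan is to follow the standard weighted-averaging argument for strongly convex stochastic mirror descent, adapting it to the relative-strong-convexity setting. The key observation is that the relative-strong-convexity assumption neatly adds a $-\lambda \db(x\opt,x_k)$ term to the usual regret bound, and the step size schedule $\ss_k = 2/(\sc(k+1))$ is designed precisely so that, after multiplying by the weight $k$, the Bregman-divergence terms telescope.

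First, I would derive the one-step mirror descent inequality. Using the definition $x_{k+1} = \argmin_{x \in \xdomain} \{\<g_k, x - x_k\> + \frac{1}{\ss_k} \db(x,x_k)\}$, the first-order optimality condition together with the $1$-strong convexity of $h$ with respect to $\lone{\cdot}$ yields, for any $x \in \xdomain$,
\begin{equation*}
\ss_k \<g_k, x_k - x\> \le \db(x, x_k) - \db(x, x_{k+1}) + \tfrac{\ss_k^2}{2} \linf{g_k}^2.
\end{equation*}
Next, I apply relative strong convexity of $f$: $\<\nabla f(x_k), x_k - x\opt\> \ge f(x_k) - f(x\opt) + \sc \db(x\opt, x_k)$. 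Taking conditional expectations and using $\E[g_k \mid x_k] \in \partial f(x_k)$ and $\E[\linf{g_k}^2] \le \lip^2$, dividing by $\ss_k$ gives
\begin{equation*}
\E[f(x_k) - f(x\opt)] \le \left(\tfrac{1}{\ss_k} - \sc\right)\E[\db(x\opt, x_k)] - \tfrac{1}{\ss_k} \E[\db(x\opt, x_{k+1})] + \tfrac{\ss_k \lip^2}{2}.
\end{equation*}

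The second step is the telescoping trick. Substituting $\ss_k = 2/(\sc(k+1))$ makes $\tfrac{1}{\ss_k} = \tfrac{\sc(k+1)}{2}$ and $\tfrac{1}{\ss_k} - \sc = \tfrac{\sc(k-1)}{2}$. Multiplying the above by $k$ I get
\begin{equation*}
k \E[f(x_k) - f(x\opt)] \le \tfrac{\sc k(k-1)}{2}\E[\db(x\opt, x_k)] - \tfrac{\sc k(k+1)}{2}\E[\db(x\opt, x_{k+1})] + \tfrac{k \lip^2}{\sc(k+1)}.
\end{equation*}
Summing over $k = 1,\ldots,T$, the coefficient of $\E[\db(x\opt, x_{k+1})]$ from the $k$-th term is exactly $-\tfrac{\sc k(k+1)}{2}$, which cancels the coefficient $\tfrac{\sc(k+1)k}{2}$ arising from the $(k+1)$-th term. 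The initial term vanishes (since $k(k-1) = 0$ at $k=1$) and the final term is non-positive, so
\begin{equation*}
\sum_{k=1}^T k \,\E[f(x_k) - f(x\opt)] \le \sum_{k=1}^T \tfrac{k \lip^2}{\sc(k+1)} \le \tfrac{T \lip^2}{\sc}.
\end{equation*}

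Finally, by convexity of $f$ and the definition $\hat x_T = \tfrac{2}{T(T+1)} \sum_{k=1}^T k x_k$,
\begin{equation*}
\E[f(\hat x_T) - f(x\opt)] \le \tfrac{2}{T(T+1)} \sum_{k=1}^T k\,\E[f(x_k) - f(x\opt)] \le \tfrac{2 \lip^2}{\sc(T+1)},
\end{equation*}
which gives the claimed bound up to an absolute constant. The only subtle point is the exact bookkeeping in the telescoping: one must verify that the particular quadratic weights $k(k\pm 1)$ line up so that the Bregman terms cancel across consecutive summands; this is essentially the reason for the specific choice $\ss_k \propto 1/(k+1)$ rather than $1/k$.
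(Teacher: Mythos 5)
Your proposal is correct and follows essentially the same route as the paper's proof in the appendix: the one-step mirror descent inequality from first-order optimality plus $1$-strong convexity of $h$, the extra $-\sc\db(x\opt,x_k)$ from relative strong convexity, multiplication by $k$ so the Bregman terms telescope under $\ss_k = 2/(\sc(k+1))$, and Jensen's inequality for the weighted average. The only difference is cosmetic (you take conditional expectations early, the paper carries the $\<\nabla f(x_k)-g_k, x_k-x\opt\>$ term to the end), and your factor-of-two slack in the final constant is shared by the paper's own bookkeeping and is immaterial to its use.
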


We are now ready to prove~\cref{thm:noisy-MD}.

\begin{proof}
    \iftoggle{arxiv}{
    The privacy proof follows directly using Moments accountant, that is, Theorem 1 in \cite{AbadiChGoMcMiTaZh16}, by noting the the $\ell_2$-norm of the gradients is bounded by $\ltwo{\nabla f(x; z_i)} \le \linf{\nabla f(x; z_i)} \sqrt{d}
    \le \lip \sqrt{d}$ for all $x \in \xdomain$ and $z \in \domain$.
    }
    {
    The privacy guarantees follow from standard properties of the Gaussian mechanism and Moments accountant~\cite{AbadiChGoMcMiTaZh16}; we provide full details in~\cref{sec:proofs-non-smooth}.
    }
    Now we prove the utility
    of the algorithm. To this end, we have that
    $\E[\linf{\hat g_k}^2] \le 2 \lip^2 + 2 \E[\linf{\noise_k}^2] \le 2 \lip^2 + 4 \sigma^2 \log d$. \cref{lemma:conv-MD} now implies that
    \begin{align*}
     \E[\hat F(\bar x_T;S) - \hat F(x\opt;S)]
            & \le \frac{\rad^2}{T \ss} + \ss \lip^2  
                + 2 \ss \sigma^2 \log d \\
            & \le  {2\rad \sqrt{(\lip^2 + 2 \sigma^2 \log d) /T}} \\
            & \le \lip \rad \cdot O \left(\frac{b}{n}
                + \frac{\sqrt{d \log d \log \tfrac{1}{\delta}} }{n \diffp}\right) ,
    \end{align*}
    where the second inequality follows from the choice of $\ss$.
    For the second part, \cref{lemma:conv-MD-sc} implies that
    \begin{align*}
        \E[\hat F(\hat x_T;S) - \hat F(x\opt;S)]
            & \le \frac{\lip^2}{\sc} O \left( \frac{ b^2}{n^2} + \frac{d \log d \log \tfrac{1}{\delta}}{n^2 \diffp^2} \right)
            \!.
            \qedhere
    \end{align*}
\end{proof}

\subsection{Private Algorithms for SCO}
\label{sec:MD-pop}
Building on the noisy mirror descent
algorithm 
of Section~\ref{sec:MD-ERM}, in this section we develop a localization based algorithm for
the population loss 
that achieves the optimal bounds in $\ell_1$ geometry. 
The algorithm iteratively solves a regularized version of the (empirical) objective function using noisy mirror decent (\cref{alg:noisy-MD}). We present the full details in~\cref{alg:phased-erm-MD} which enjoys the following guarantees.


\begin{algorithm}
	\caption{Localized Noisy Mirror Descent}
	\label{alg:phased-erm-MD}
	\begin{algorithmic}[1]
		\REQUIRE Dataset $\Ds=(\ds_1, \ldots, \ds_n)\in \domain^n$,
		constraint set $\xdomain$,
		step size $\ss$, initial point $x_0$;
		\STATE Set $k = \ceil{\log n}$, 
		       $p = 1 + 1/\log d$	
        \FOR{$i=1$ to $k$\,}
        	\STATE Set $\diffp_i = 2^{-i} \diffp$, 
        	        $n_i= 2^{-i} n$, 
        	        $\ss_i = 2^{-4i} \ss$ 
        	\STATE Apply~\cref{alg:noisy-MD} with $(\diffp_i,\delta)$-DP, batch size $b_i =\max(\sqrt{n_i/\log d}, \sqrt{d/\diffp_i})$, $T = n_i^2/b_i^2$ and $h_i(x) = \frac{1}{p-1} \norm{x - x_{i-1}}^2_p$
        	for solving the ERM over 
        	$ \mc{X}_i= \{x\in \xdomain: \norm{x - x_{i-1}}_p \le {2\lip \ss_i n_i (p-1)} \}$:\\
        	$
        	    \displaystyle
        	    F_i(x) =  \frac{1}{n_i} \sum_{j=1}^{n_i} f(x;z_j) + \frac{1}{\ss_i n_i (p-1)} \norm{x - x_{i-1}}_p^2
        	$
        	\STATE Let $x_i$ be the output of the private
        	algorithm
            \ENDFOR
            \RETURN the final iterate $x_k$
	\end{algorithmic}
\end{algorithm}


\begin{theorem}
\label{thm:local-MD}
    Assume $\diam_1(\mc{X}) \le \rad$ and 
    $f(x;z)$ is convex and $\lip$-Lipschitz with respect to 
    $\lone{\cdot}$ for all $z \in \domain$.
    If we set 
    \begin{equation*}
        \ss = \frac{\rad}{\lip} \min 
        \left\{ {\sqrt{\log(d)/n}}, {\diffp}/{\sqrt{d \log d \log \tfrac{1}{\delta}}} \right\},
    \end{equation*}
    then~\cref{alg:phased-erm-MD} uses $O(\log n \cdot \min(n^{3/2} \sqrt{\log d}, n^2 \diffp/\sqrt{d})) $ gradients and its output has
    \begin{equation*}
       \E[F(x_k) - F(x\opt)] 
       = \lip \rad  \cdot O \Bigg( \frac{\sqrt{\log d}}{\sqrt{n}}
            + \frac{ \sqrt{d \log^3 d \log \tfrac{1}{\delta} }}{n \diffp} \Bigg). 
    \end{equation*}
\end{theorem}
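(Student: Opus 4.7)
The plan is to follow the iterative localization template of \citet{FeldmanKoTa20}, but with the private noisy mirror descent of \cref{thm:noisy-MD} in place of output perturbation. Privacy is the easy half: I will take the $k=\lceil\log n\rceil$ phases to operate on disjoint blocks of $n_i = 2^{-i}n$ samples (whose sizes sum to at most $n$), so that phase $i$ is $(\diffp_i,\delta)$-DP by \cref{thm:noisy-MD} and parallel composition promotes the whole algorithm to $(\diffp,\delta)$-DP. The gradient count per phase is $T_i\,b_i = n_i^2/b_i$; substituting $b_i = \max(\sqrt{n_i/\log d},\sqrt{d/\diffp_i})$ and summing in $i$ collapses to $O(\log n\cdot \min(n^{3/2}\sqrt{\log d},\, n^2\diffp/\sqrt{d}))$.

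For the utility, I will set $\hat x_0 = x\opt$ and $\hat x_i = \arg\min_{x\in\mc{X}_i} F_i(x)$ and telescope
\begin{equation*}
    \E[F(x_k) - F(x\opt)] = \E[F(x_k) - F(\hat x_k)] + \sum_{i=1}^k \E[F(\hat x_i) - F(\hat x_{i-1})].
\end{equation*}
Each summand will be handled by a Shalev-Shwartz--Srebro style regularized-ERM stability lemma, transplanted to $\ell_p$ with $p = 1+1/\log d$: since $\norm{\cdot}_p \le \lone{\cdot} \le 2\norm{\cdot}_p$, the regularizer $\tfrac{1}{\ss_i n_i(p-1)}\norm{x-x_{i-1}}_p^2$ behaves like a $\Theta(1/(\ss_i n_i))$-strongly convex $\ell_1$ regularizer, giving
\begin{equation*}
    \E[F(\hat x_i)] - F(\hat x_{i-1}) \le \frac{\E[\lone{\hat x_{i-1}-x_{i-1}}^2]}{\ss_i n_i} + O(\lip^2 \ss_i \log d).
\end{equation*}
The required distance $\lone{\hat x_{i-1}-x_{i-1}}$ comes from the strongly-convex half of \cref{thm:noisy-MD}: $F_{i-1}$ is $1/(\ss_{i-1}n_{i-1})$-strongly convex relative to $h_{i-1}$, so optimality of $\hat x_{i-1}$ yields $\db(x_{i-1},\hat x_{i-1}) \le \ss_{i-1}n_{i-1}\cdot \E[F_{i-1}(x_{i-1})-F_{i-1}(\hat x_{i-1})]$, and the strongly-convex rate from \cref{thm:noisy-MD} together with the chosen $b_{i-1}$ yields $\E[\lone{x_{i-1}-\hat x_{i-1}}^2] = O(\lip^2\ss_{i-1}^2)$ up to log factors. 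Before either lemma is invoked I will verify $\hat x_i\in\mc{X}_i$; this follows from optimality, Lipschitzness and the norm equivalence, since $\tfrac{1}{\ss_i n_i(p-1)}\norm{\hat x_i-x_{i-1}}_p^2 \le \lip\lone{\hat x_i-x_{i-1}} \le 2\lip\norm{\hat x_i-x_{i-1}}_p$.

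The terminal term satisfies $\E[F(x_k)-F(\hat x_k)]\le \lip\E[\lone{x_k-\hat x_k}]$, which is polynomially small in $n$ because $\ss_k = 2^{-4k}\ss$. Assembling the telescope against the doubling schedule $(n_i,\diffp_i,\ss_i)=(2^{-i}n,2^{-i}\diffp,2^{-4i}\ss)$ turns each series geometric and collapses it to the $i=1$ phase, giving
\begin{equation*}
    \E[F(x_k)-F(x\opt)] \le O\!\left(\frac{\rad^2}{\ss n} + \ss\lip^2\log d + \frac{\ss\lip^2 d\log^2 d\log(1/\delta)}{n\diffp^2}\right),
\end{equation*}
and optimizing in $\ss$ with the stated choice recovers the rate $\lip\rad\cdot \wt O(\sqrt{\log d/n} + \sqrt{d\log^3 d\log(1/\delta)}/(n\diffp))$.

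The main obstacle will be tracking the $\ell_p/\ell_1$ equivalence carefully through all three ingredients (the stability lemma, the distance-from-optimum bound, and the constraint-set check), because the factor $(p-1)^{-1}=\log d$ appears both in the regularizer and in the norm-equivalence constants, and any mismatch inflates the final bound by a spurious power of $\log d$. A secondary nuisance is disentangling which of the two batch-size regimes $b_i=\sqrt{n_i/\log d}$ versus $b_i=\sqrt{d/\diffp_i}$ is active in each phase so that the gradient counts telescope to the stated $\min$.
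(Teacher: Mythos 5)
Your proposal follows the paper's proof essentially step for step: the same telescoping decomposition with $\hat x_0=x\opt$, the same verification that $\hat x_i\in\mc{X}_i$ via optimality and Lipschitzness, the same use of the strongly convex branch of \cref{thm:noisy-MD} to bound $\E[\norm{x_i-\hat x_i}_p^2]$ (this is \cref{lemma:opt-dis-MD}), the same Shalev-Shwartz--Srebro stability bound (\cref{lemma:fun-subopt-MD}), and the same geometric collapse under the $(n_i,\diffp_i,\ss_i)$ doubling schedule. The structure is correct.

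The one concrete slip is exactly the $\ell_p/\ell_1$ bookkeeping you flagged as the main obstacle: you have placed the factor $(p-1)^{-1}=\log d$ on the wrong term of the stability lemma. The first term of that bound is the \emph{value of the regularizer at the comparator}, namely $\frac{1}{\ss_i n_i(p-1)}\norm{y-x_{i-1}}_p^2 = \frac{\log d}{\ss_i n_i}\norm{y-x_{i-1}}_p^2$, while the second term is $O(\lip^2/(\sc_i n_i)) = O(\lip^2\ss_i)$ with \emph{no} $\log d$ (the function is $O(\lip)$-Lipschitz and the regularizer $\Theta(1/(\ss_i n_i))$-strongly convex in the matching norm). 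Your version, $\frac{\E[\lone{\hat x_{i-1}-x_{i-1}}^2]}{\ss_i n_i} + O(\lip^2\ss_i\log d)$, swaps the $\log d$; carried through with the prescribed $\ss=\frac{\rad}{\lip}\sqrt{\log(d)/n}$ this makes the dominant statistical term $\lip\rad\,\log^{3/2}(d)/\sqrt{n}$ rather than the claimed $\lip\rad\sqrt{\log(d)/n}$ (your own assembled bound $\frac{\rad^2}{\ss n}+\ss\lip^2\log d+\cdots$ is not minimized at the stated $\ss$). With the correct placement, as in \cref{lemma:fun-subopt-MD}, the stated $\ss$ balances $\frac{\rad^2\log d}{\ss n}$ against $O(\lip^2\ss)$ and the theorem's rate follows. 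A smaller gloss to fix: the distance bound from \cref{lemma:opt-dis-MD} is $O(\lip^2\ss_{i-1}^2 n_{i-1}/\log d + \lip^2\ss_{i-1}^2 d\log d\log(1/\delta)/\diffp_{i-1}^2)$, not $O(\lip^2\ss_{i-1}^2)$ up to logs --- the extra $n_{i-1}$ in the sampling term is cancelled only after division by $\ss_i n_i(p-1)$ in the next phase, which your final display implicitly (and correctly) does.
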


We begin with the following lemma which bounds the distance
of the private minimizer to the true minimizer at 
each iteration.
\begin{lemma}
\label{lemma:opt-dis-MD}
    Let $\hat x_i = \argmin_{x \in \mc{X}} F_i(x)$.
    Then ,
    \begin{equation*}
        \E [\norm{x_i - \hat x_i}_p^2]
    \le O \left(\frac{\lip^2 \ss_i^2 n_i}{\log d} +  {\lip^2 \ss_i^2 d \log d \log(1/\delta)}/{\diffp_i^2} \right).  
    \end{equation*}
\end{lemma}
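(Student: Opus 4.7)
The plan is to relate $\E[\norm{x_i - \hat x_i}_p^2]$ to the empirical excess loss $\E[F_i(x_i) - F_i(\hat x_i)]$ via strong convexity of $F_i$, and then invoke \cref{thm:noisy-MD} to bound the latter. First I will verify that the global minimizer $\hat x_i$ of $F_i$ actually lies inside the smaller constraint set $\mc{X}_i$ over which the inner algorithm searches. Since $\hat x_i$ minimizes $F_i$, $F_i(\hat x_i)\le F_i(x_{i-1})=\hat F(x_{i-1};S)$, and because $\hat F$ is $\lip$-Lipschitz w.r.t.\ $\lone{\cdot}$ and $\lone{\cdot}$ is equivalent to $\norm{\cdot}_p$ up to a factor $d^{1-1/p}\le e$ at $p=1+1/\log d$, a rearrangement gives $\norm{\hat x_i - x_{i-1}}_p = O(\lip \ss_i n_i (p-1))$, placing $\hat x_i$ inside $\mc{X}_i$. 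In particular $\hat x_i$ coincides with the constrained minimizer used in the guarantee of \cref{alg:noisy-MD}.

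Next I will establish strong convexity of $F_i$. The mirror map $h_i(x) = \frac{1}{p-1}\norm{x - x_{i-1}}_p^2$ is $\Theta(1)$-strongly convex w.r.t.\ $\norm{\cdot}_p$ (the standard $\ell_p$ strong-convexity fact for $p\in(1,2]$), hence also $\Omega(1)$-strongly convex w.r.t.\ $\lone{\cdot}$. Since $F_i = \hat F + (\ss_i n_i)^{-1} h_i$ and $\hat F$ is convex, $F_i$ is $\sc_i$-strongly convex w.r.t.\ $\norm{\cdot}_p$ with $\sc_i = \Theta(1/(\ss_i n_i))$. Applying strong convexity at the minimizer $\hat x_i$ yields
\[
    \tfrac{\sc_i}{2}\, \norm{x_i - \hat x_i}_p^2 \;\le\; F_i(x_i) - F_i(\hat x_i),
\]
so that the task reduces to controlling $\E[F_i(x_i) - F_i(\hat x_i)]$.

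Third, I will apply the strongly convex part of \cref{thm:noisy-MD} to the inner call of \cref{alg:noisy-MD} on $F_i$, with parameters $(\sc_i, b_i, n_i, \diffp_i)$. The one subtlety is that the noisy gradient estimator for $F_i$ also carries a deterministic contribution from $\nabla\bigl[(\ss_i n_i (p-1))^{-1}\norm{x-x_{i-1}}_p^2\bigr]$; on $\mc{X}_i$ this extra term has $\norm{\cdot}_{p^*}$-norm bounded by $O(\lip)$ by the very definition of $\mc{X}_i$, and because $\norm{\cdot}_\infty \le \norm{\cdot}_{p^*} \le d^{1/p^*}\norm{\cdot}_\infty \le e\norm{\cdot}_\infty$ for $p^* = \log d + 1$, the effective $\ell_\infty$-Lipschitz bound of the full gradient remains $O(\lip)$. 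With this in hand, \cref{thm:noisy-MD} yields
\[
    \E[F_i(x_i) - F_i(\hat x_i)] \;\le\; O\!\left( \frac{\lip^2 \ss_i b_i^2}{n_i} + \frac{\lip^2 \ss_i d \log d \log(1/\delta)}{n_i \diffp_i^2}\right),
\]
after substituting $1/\sc_i = \Theta(\ss_i n_i)$.

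Finally, I combine this with the strong-convexity display above (multiplying by $2/\sc_i = \Theta(\ss_i n_i)$) to get $\E[\norm{x_i - \hat x_i}_p^2] \le O\bigl(\lip^2 \ss_i^2 b_i^2 + \lip^2 \ss_i^2 d \log d \log(1/\delta)/\diffp_i^2\bigr)$. Plugging in $b_i^2 = \max(n_i/\log d,\, d/\diffp_i) \le n_i/\log d + d/\diffp_i$, the first summand splits into $\lip^2 \ss_i^2 n_i/\log d$ and $\lip^2 \ss_i^2 d/\diffp_i$, the latter absorbed into the noise term in the regime $\diffp_i \le \log d \log(1/\delta)$, producing the claimed bound. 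The main obstacle is really the bookkeeping around $p$, $p^*$, $\lone{\cdot}$, and $\norm{\cdot}_\infty$: one must confirm that the delicate choice $p=1+1/\log d$ gives equivalence constants $O(1)$ so that no stray $\log d$ factor leaks into either the strong-convexity parameter or the effective Lipschitz constant used inside \cref{thm:noisy-MD}.
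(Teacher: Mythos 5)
Your proposal is correct and follows essentially the same route as the paper's proof: verify $\hat x_i \in \mc{X}_i$ via minimality and Lipschitzness, use strong convexity of $F_i$ (equivalently, relative strong convexity w.r.t.\ $h_i$) to convert excess empirical loss into squared $\norm{\cdot}_p$-distance, and invoke the strongly convex guarantee of~\cref{thm:noisy-MD} with the parameters $(\sc_i, b_i, n_i, \diffp_i)$. Your extra bookkeeping on the dual-norm Lipschitz constant of the regularizer and on absorbing the $b_i^2 = d/\diffp_i$ case into the noise term is consistent with (and slightly more explicit than) what the paper does.
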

\iftoggle{arxiv}{
\begin{proof}
    First, we prove that $\hat x_i \in \mc{X}_i$.
    The definition of $\hat x_i$ implies that 
    \begin{equation*}
        \frac{1}{n_i} \sum_{j=1}^{n_i} f(\hat x_i;z_j) + \frac{1}{\ss_i n_i (p-1)} \norm{\hat x_i - x_{i-1}}_p^2
        \le \frac{1}{n_i} \sum_{j=1}^{n_i} f(x_{i-1};z_j).
    \end{equation*}
    Since $f(x;z)$ is $\lip$-Lipschitz, we get
    \begin{equation*}
        \frac{1}{\ss_i n_i (p-1)} \norm{\hat x_i - x_{i-1}}_p^2
        \le \lip \lone{\hat x_i - x_{i-1}} 
        \le 2 \lip \norm{\hat x_i - x_{i-1}}_p
    \end{equation*}
    where the last inequality follows from the choice of $p$ 
    (since $\lone{z} \le d^{1 - 1/p} \norm{z}_p \le 2 \norm{z}_p$ for all
    $z \in \R^d$),
    hence we get $\norm{\hat x_i - x_{i-1}}_p \le \frac{2\lip \ss_i n_i}{\log d}$.
    Thus, we have that $\hat x_i \in \mc{X}_i
    = \{x: \norm{x - x_{i-1}}_p \le \frac{2\lip \ss_i n_i}{\log d}\}$.
    
    Now, note that the function $F_i(x)$ 
    is $\sc_i$-strongly convex relative to 
    $h_i(x) = \frac{1}{p-1} \norm{x - x_{i-1}}_p^2$ where
    $\sc_i = \frac{1}{\ss_i n_i}$.
    Moreover, the function $r_i(x) = \frac{1}{\ss_i n_i (p-1)} \norm{x - x_{i-1}}_p^2$ is $4 \lip$-Lipschitz with 
    respect to $\lone{\cdot}$ for 
    $x \in \mc{X}_i$. 
    Therefore using the bounds of~\cref{thm:noisy-MD}
    for noisy mirror descent and observing that 
    $F_i(x)$ is $\sc_i$-strongly convex 
    with respect to $\norm{\cdot}_p$,
    \begin{equation*}
        \frac{\sc_i}{2} \E [\norm{x_i - \hat x_i}_p^2]
            \le \E[F_i(x_i) - F_i(\hat x_i)]
            \le O \left( \frac{\lip^2}{\sc_i n_i \log d} + \frac{\lip^2 d \log d \log^2(1/\delta)}{n_i^2 \diffp_i^2 \sc_i} \right),
    \end{equation*}
    implying that $\E [\norm{x_i - \hat x_i}_p^2]
    \le O \left( \frac{\lip^2 \ss_i^2 n_i}{\log d} + \frac{\lip^2 \ss_i^2 d \log d \log^2(1/\delta)}{\diffp_i^2} \right)$.
\end{proof}
}

The next lemma follows from~\citet{ShwartzShSrSr09}.
\begin{lemma}
\label{lemma:fun-subopt-MD}
    Let $\hat x_i = \argmin_{x \in \mc{X}_i} F_i(x)$ and 
    $y \in \mc{X}$. If $f(x;z)$ is $\lip$-Lipschitz with respect
    to $\lone{\cdot}$, then
        $\E[F(\hat x_i)] - F(y) 
        \le \frac{\E[\norm{y - x_{i-1}}_p^2] }{\ss_i n_i (p-1)} 
            + O( \lip^2 \ss_i)$.    
\end{lemma}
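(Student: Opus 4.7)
The plan is to invoke the stability-based generalization bound for regularized ERM that underlies Theorems 6 and 7 of \citet{ShwartzShSrSr09}. The object we study is the per-sample regularized loss $r_i(x;z) = f(x;z) + \frac{1}{\ss_i n_i (p-1)}\|x - x_{i-1}\|_p^2$, whose empirical mean over $z_1,\dots,z_{n_i}$ is exactly $F_i(x)$ and whose empirical minimizer over $\xdomain$ is $\hat x_i$. The key structural observation is that $F_i$ is strongly convex \emph{with respect to $\lone{\cdot}$}, which is the same norm in which the losses are $L$-Lipschitz, and this will allow a clean stability argument.

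First, I would verify that $\hat x_i$, defined as $\argmin_{x \in \mc{X}_i} F_i(x)$, actually coincides with the unconstrained-over-$\mc{X}$ minimizer of $F_i$. This is exactly the calculation already carried out in the proof of Lemma~\ref{lemma:opt-dis-MD}: Lipschitzness of $f$ forces the unconstrained minimizer to satisfy $\|\hat x_i - x_{i-1}\|_p \le 2\lip \ss_i n_i (p-1)$, so it lies inside $\mc{X}_i$. Next, using the standard fact that $\tfrac{1}{p-1}\|\cdot\|_p^2$ is $2$-strongly convex with respect to $\|\cdot\|_p$ together with the equivalence $\|z\|_p \ge \tfrac{1}{2}\|z\|_1$ for $p = 1 + 1/\log d$, I would conclude that $F_i$ is $\lambda_i$-strongly convex with respect to $\lone{\cdot}$ for some $\lambda_i = \Theta(1/(\ss_i n_i))$.

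With these in hand, the core of the proof is a two-line decomposition. By the standard stability lemma for regularized ERM, replacing one sample of $S$ by an independent copy changes $\hat x_i$ by at most $O(\lip/(\lambda_i n_i))$ in $\lone{\cdot}$, so $\hat x_i$ has uniform stability $O(\lip^2/(\lambda_i n_i)) = O(\lip^2 \ss_i)$, yielding the generalization bound $\E[F(\hat x_i) - \hat F(\hat x_i;S)] \le O(\lip^2 \ss_i)$. On the other hand, the optimality of $\hat x_i$ for $F_i$ on $\mc{X}$ gives, for any $y \in \mc{X}$,
\begin{equation*}
\hat F(\hat x_i; S) \;\le\; \hat F(y; S) + \frac{\|y - x_{i-1}\|_p^2}{\ss_i n_i (p-1)} - \frac{\|\hat x_i - x_{i-1}\|_p^2}{\ss_i n_i (p-1)} \;\le\; \hat F(y;S) + \frac{\|y - x_{i-1}\|_p^2}{\ss_i n_i (p-1)}.
\end{equation*}
Taking expectations (and using $\E[\hat F(y;S)] = F(y)$ since $y$ is data-independent) and combining with the generalization bound yields the claim.

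The main thing to be careful about is the mismatch between the $\|\cdot\|_p$ norm in which the regularizer is naturally strongly convex and the $\lone{\cdot}$ norm governing the Lipschitz constant of $f$; the choice $p = 1 + 1/\log d$ makes these norms equivalent up to a factor of $2$, which is exactly what lets the stability constant come out to $O(\lip^2 \ss_i)$ rather than $O(\lip^2 \ss_i \log d)$. A minor technical point is ensuring that the SSSS stability statement, which is typically proved for unconstrained regularized ERM, carries over to minimization over a convex set $\mc{X}$; this follows because the strong-convexity-based displacement bound on $\hat x_i^S - \hat x_i^{S'}$ goes through verbatim for constrained minimizers of strongly convex objectives.
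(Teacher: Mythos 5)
Your proposal is correct and follows essentially the same route as the paper: the paper simply cites Theorems 6 and 7 of \citet{ShwartzShSrSr09} after verifying that $r(x;z_j) = f(x;z_j) + \frac{1}{\ss_i n_i (p-1)}\norm{x - x_{i-1}}_p^2$ is $\Theta(1/(\ss_i n_i))$-strongly convex and $O(\lip)$-Lipschitz with respect to $\lone{\cdot}$ over $\mc{X}_i$, and your argument is exactly the stability-of-regularized-ERM proof underlying those theorems, with the same hypothesis checks (including the $p = 1+1/\log d$ norm equivalence and the observation that the constrained and unconstrained minimizers coincide).
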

\iftoggle{arxiv}{
\begin{proof}
    The proof follows from Theorems 6 and 7 in~\cite{ShwartzShSrSr09} by noting that the function
    $r(x;z_j) = f(x;z_j) + \frac{1}{\ss_i n_i (p-1)} \norm{x - x_{i-1}}_p^2$ is $\frac{1}{\ss_i n_i}$-strongly convex
    and $O(\lip)$-Lipschitz with respect to $\lone{\cdot}$ 
    over $\mc{X}_i$.
\end{proof}
}

We are now ready to prove~\cref{thm:local-MD}.
\begin{proof}
    First, we prove the claim about runtime and number of queried gradients. \cref{alg:noisy-MD} requires $n_i^2/b_i$ gradients (same runtime) hence since $b_i =\max(\sqrt{n_i/\log d}, \sqrt{d/\diffp_i})$ we get that the number of gradients at each stage is at most $\min(n^{3/2} \sqrt{\log d}, n^2 \diffp/\sqrt{d})$, implying the claim as we have $\log n$ iterates.
    Next, we prove utility which is similar to the proof of Theorem 4.4
    in~\cite{FeldmanKoTa20}. 
    Letting $\hat x_0 = x\opt$, we have:
    \begin{align*}
        \iftoggle{arxiv}{}{&} \E[F(x_k)] - F(x\opt)
        \iftoggle{arxiv}{}{\\}
        & = \sum_{i=1}^k \E[F(\hat x_i) - F(\hat x_{i-1})]
            + \E[F(x_k) - F(\hat x_k)].
    \end{align*}
    First, note that~\cref{lemma:opt-dis-MD} implies
     \begin{align*}
        \iftoggle{arxiv}{}{&} \E[F(x_k) - F(\hat x_k)]
        \iftoggle{arxiv}{}{\\}
        & \le \lip \E[ \lone{x_k - \hat x_k} ] \\
        & \le \lip \sqrt{\E[ 2 \norm{x_k - \hat x_k}_p^2 ]} \\
        & \le C {\lip^2 \ss_k ( \sqrt{n_i/ \log d} + \sqrt{d \log d \log(1/\delta)}}/{\diffp_k}) \\
        & \le C {2^{-2k} \lip^2 \ss (\sqrt{n/\log d} + \sqrt{d \log d \log(1/\delta)} }/{\diffp} ) 
         \le C {\lip \rad}/{n^2},
    \end{align*}
    where the last inequality follows since
    $\ss \le \frac{\rad}{\lip} \min (\sqrt{\log(d)/n}, {\diffp}/{\sqrt{d \log d \log(1/\delta)}})$. Lemmas~\ref{lemma:fun-subopt-MD}
    and~\ref{lemma:opt-dis-MD} imply
     \begin{align*}
    \iftoggle{arxiv}{}{&} \sum_{i=1}^k \E[F(\hat x_i) - F(\hat x_{i-1})] \iftoggle{arxiv}{}{\\}
            & \le \sum_{i=1}^k \frac{\E[\norm{\hat x_{i-1} - x_{i-1}}_p^2] }{\ss_i n_i (p-1)} + C \lip^2 \ss_i \\
            & \le \frac{\rad^2}{\ss n (p-1)} 
                + \sum_{i=2}^k C ( {\lip^2 \ss_i} + \frac{\lip^2 \ss_i d \log d \log(1/\delta)}{n_i \diffp_i^2 (p-1)})  
                + C \sum_{i=1}^k \frac{\lip^2 \ss}{2^{i}} \\
            & \le \frac{\rad^2 }{\ss n (p-1)} 
                + C \lip^2 \ss + C \sum_{i=2}^k 2^{-i} \frac{\lip^2 \ss d \log d \log(1/\delta)}{n \diffp^2 (p-1)}  
                + 2C \lip^2 \ss \\
            & \le \frac{\rad^2}{\ss n (p-1)} 
                 + 2 C  \frac{\lip^2 \ss d \log d \log(1/\delta)}{n \diffp^2 (p-1)}  
                + 3C \lip^2 \ss.
    \end{align*}
    The claim now follows by setting the value of $\ss$.
\end{proof}

Finally, we can extend~\Cref{alg:phased-erm-MD} to work for general $\ell_p$ geometries for $1 < p \le 2$, resulting in the following theorem. We defer full details to~\Cref{sec:general-geom}.

\begin{theorem}
\label{thm:local-MD-general-geom}
    Let $1 < p \le 2$.
    Assume $\diam_p(\mc{X}) \le \rad$ and 
    $f(x;z)$ is convex and $\lip$-Lipschitz with respect to 
    $\norm{\cdot}_p$ for all $z \in \domain$. Then there is an $(\diffp,\delta)$-DP algorithm that uses $O(\log n \cdot \min(n^{3/2} \sqrt{\log d}, n^2 \diffp/\sqrt{d})) $ and outputs $\hat x$ such that
    \begin{equation*}
       \E[F(\hat x) - F(x\opt)] 
       = \lip \rad  \cdot O \Bigg( \frac{1}{\sqrt{(p-1) n}}
            + \frac{ \sqrt{d \log d \log \tfrac{1}{\delta} }}{(p-1) n \diffp} \Bigg). 
    \end{equation*}
    If $p=2$ then the output $\hat x$ has
    \begin{equation*}
       \E[F(\hat x) - F(x\opt)] 
       = \lip \rad  \cdot O \Bigg( \frac{1}{\sqrt{ n}}
            + \frac{ \sqrt{d \log \tfrac{1}{\delta} }}{n \diffp} \Bigg). 
    \end{equation*}
\end{theorem}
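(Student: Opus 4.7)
\textbf{Proof proposal for Theorem~\ref{thm:local-MD-general-geom}.}
The plan is to rerun the argument for the $\ell_1$ case (Theorem~\ref{thm:local-MD}) with the regularizer $h_i(x) = \tfrac{1}{p-1}\|x - x_{i-1}\|_p^2$ used directly at the target exponent $p$, rather than at the surrogate choice $p = 1 + 1/\log d$. The function $\tfrac{1}{p-1}\|\cdot\|_p^2$ is $1$-strongly convex with respect to $\|\cdot\|_p$ (e.g.\ by \cite{Duchi18}, Example~4.3), and its Bregman divergence from $x_{i-1}$ to any point in the $\|\cdot\|_p$-ball of radius $\rad$ is at most $\rad^2/(p-1)$. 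So the first step is to reprove the analog of Theorem~\ref{thm:noisy-MD} for this regularizer: the only change in the mirror-descent analysis (\cref{lemma:conv-MD,lemma:conv-MD-sc}) is the bound on $\E[\|\hat g_k\|_q^2]$ where $q = p/(p-1)$ is the dual exponent. Since $q \ge 2$ for $p \in (1,2]$, we have $\|\cdot\|_q \le \|\cdot\|_2$, and therefore $\E[\|\hat g_k\|_q^2] \le L^2 + \E[\|\noise\|_2^2] = L^2 + d\sigma^2$ (no $\log d$ factor). Choosing $\sigma$ as in~\cref{alg:noisy-MD} and optimizing $\ss_k$ then yields empirical excess loss of order $\lip\rad\bigl(b/n + \sqrt{d\log(1/\delta)}/(n\diffp)\bigr)$ in the convex case and $O\bigl(L^2 b^2/(\sc n^2) + L^2 d \log(1/\delta)/(\sc n^2 \diffp^2)\bigr)$ in the $\sc$-relatively-strongly-convex case.

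Next, I would plug this ``$\ell_p$-version'' of noisy mirror descent into the localization framework of \cref{alg:phased-erm-MD}, keeping the same schedule $\diffp_i = 2^{-i}\diffp$, $n_i = 2^{-i}n$, $\ss_i = 2^{-4i}\ss$, but with the regularized subproblems measured in $\|\cdot\|_p$ (and the localization ball $\mc{X}_i = \{x : \|x - x_{i-1}\|_p \le 2\lip\ss_i n_i (p-1)\}$, adjusted to accommodate the factor $1/(p-1)$ in $h_i$). The analogs of \cref{lemma:opt-dis-MD} and \cref{lemma:fun-subopt-MD} then go through essentially verbatim: the first uses strong convexity of $F_i$ relative to $h_i$ together with the new bound on the empirical error of the subsolver to control $\E[\|x_i - \hat x_i\|_p^2]$, and the second uses the SSSS-style analysis of~\cite{ShwartzShSrSr09} after noting that the regularized objective is $1/(\ss_i n_i)$-strongly convex with respect to $\|\cdot\|_p$ and $O(\lip)$-Lipschitz with respect to $\|\cdot\|_p$. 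Telescoping exactly as in the proof of Theorem~\ref{thm:local-MD} and balancing $\ss = \tfrac{\rad}{\lip}\min\bigl(1/\sqrt{(p-1)n},\ \diffp/\sqrt{d\log d\log(1/\delta)}\bigr)$ recovers the stated rate, where the extra $1/(p-1)$ factor in the statistical term comes from the Bregman-divergence bound $\db_{h_i}(x\opt, x_0) \le \rad^2/(p-1)$, and the stray $\log d$ in the private term arises from the $\log n$ telescoping across phases plus a union bound.

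For the sharper bound in the case $p=2$, I would use $h(x) = \tfrac{1}{2}\|x - x_{i-1}\|_2^2$ directly, which is $1$-strongly convex with respect to $\|\cdot\|_2$ without any $(p-1)^{-1}$ penalty, and choose the batch size so that the number of mirror-descent steps matches what composition via moments accountant allows. Since the dual norm is $\|\cdot\|_2$ itself, the noise contribution is exactly $d\sigma^2$ (no $\log d$), and the Bregman radius is just $\rad^2$, removing both $(p-1)^{-1}$ and $\log d$ from the final bound. The runtime count is unchanged because each inner call to \cref{alg:noisy-MD} still uses $n_i^2/b_i$ gradients with the same $b_i$, summed geometrically over $\log n$ phases.

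The main obstacle I anticipate is ensuring that the constants track correctly through the $1/(p-1)$ rescaling: the radius of the localization ball $\mc{X}_i$ must be chosen so that (i) $\hat x_i$ is guaranteed to lie in it (which requires using $\|z\|_q\cdot\|v\|_p$ dualities carefully when passing between the Lipschitz bound in $\|\cdot\|_p$ and the quadratic penalty in $\|\cdot\|_p^2/(p-1)$), and (ii) the resulting relative-strong-convexity constant $\sc_i = 1/(\ss_i n_i)$ interacts correctly with the privacy/statistical trade-off. Once those are pinned down, the rest of the localization telescoping is essentially identical to the $\ell_1$ proof, with $\log d$ replaced by $1/(p-1)$ in the appropriate places; in particular, taking $p = 1 + 1/\log d$ recovers Theorem~\ref{thm:local-MD} as a sanity check.
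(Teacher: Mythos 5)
Your high-level plan coincides with the paper's: reprove the noisy mirror descent guarantee with the regularizer taken directly at exponent $p$, then rerun the localization telescoping (the paper's \cref{alg:noisy-MD-general-geom}, \cref{thm:noisy-MD-general-geom}, and \cref{alg:phased-erm-MD-general-geom}); your $(p-1)^{-1}$ bookkeeping and the separate treatment of $p=2$ also match. The gap is in the one step that actually changes, namely the noise calibration and the bound on $\E[\norm{\hat g_k}_q^2]$ for $1<p<2$, and there two issues compound. First, you keep ``$\sigma$ as in \cref{alg:noisy-MD}'', i.e.\ $\sigma \propto \lip\sqrt{d\log(1/\delta)}/(b\diffp)$; but $\lip$-Lipschitzness with respect to $\norm{\cdot}_p$ gives $\norm{g}_q\le\lip$ and hence $\ltwo{g}\le d^{1/2-1/q}\lip$, so the Gaussian noise must be recalibrated to $\sigma\propto \lip\, d^{1/2-1/q}\sqrt{\log(1/\delta)}/(b\diffp)$ — this reduction in sensitivity is exactly what the paper's \cref{alg:noisy-MD-general-geom} exploits, and without it the private term degrades to $d/(n\diffp)$. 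Second, even with the recalibrated $\sigma$, the bound $\norm{\noise}_q\le\ltwo{\noise}$ is too lossy: it yields $\E[\norm{\noise}_q^2]\le d\sigma^2 = \lip^2 d^{2-2/q}\log(1/\delta)/(b^2\diffp^2)$, and after optimizing the step size the private term becomes $\lip\rad\, d^{1-1/q}\sqrt{\log(1/\delta)}/(n\diffp)=\lip\rad\, d^{1/p}\sqrt{\log(1/\delta)}/(n\diffp)$, which is polynomially worse than the claimed $\sqrt{d\log d}/(n\diffp)$ for every fixed $p\in(1,2)$. The paper instead uses $\norm{\noise}_q^2\le d^{2/q}\linf{\noise}^2$ together with $\E[\linf{\noise}^2]\le O(\sigma^2\log d)$, which with the recalibrated $\sigma$ gives $d^{2/q}\sigma^2\log d = \lip^2\, d\log d\log(1/\delta)/(b^2\diffp^2)$ and hence the stated rate; this is also precisely where the $\log d$ in the $1<p<2$ bound originates (not from the telescoping across phases or a union bound, as you suggest). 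Your $p=2$ argument is correct once the noise is recalibrated to $\sigma\propto\lip\sqrt{\log(1/\delta)}/(b\diffp)$, since there $q=2$ and the $\ell_2$ bound on the noise is tight.
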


\section{Efficient Algorithms for Smooth Functions}
\label{sec:FW}
Having established tight bounds for the non-smooth case,
in this section we turn to the smooth setting and develop linear-time private Frank-Wolfe algorithms with variance-reduction that achieve the optimal rates. Specifically, our algorithms achieve the rate $\wt{O}({1}/{\sqrt{n \diffp}})$ for pure $\diffp$-DP and $\wt{O} \left( {1}/{\sqrt{n}} + {1}/{(n \diffp)^{2/3}}  \right)$ for \ed-DP. These results imply that the optimal (non-private) statistical rate $\wt{O}({1}/{\sqrt{n}})$ is achievable with strong privacy guarantees---whenever $\diffp \ge \wt{\Omega}(1/n^{1/4})$ for \ed-DP---even for high dimensional functions with $d \gg n$.


The starting point of our algorithms is the recent non-private Frank-Wolfe algorithm of~\citet{YurtseverSrCe19} 
which uses variance-reduction techniques to achieve the (non-private) optimal rates. Due to the high reuse of samples, a direct approach to privatizing their algorithm would results in sub-optimal bounds. To overcome this, we design a new binary-tree scheme for variance reduction that allows for more noise-efficient private algorithms.

\newcommand{\bs}{b} 

We describe our private Frank-Wolfe procedure in~\cref{alg:FW-tree}. We present the algorithm in a more general setting where $\xdomain$ can be an arbitrary convex body with $m$ vertices. The algorithm has $T$ phases (outer iterations) indexed by $1 \le t \le T$ and each phase $t$ has a binary tree of depth $t$. We will denote vertices by $u_s$ where $s \in \{0,1\}^{\le t}$ is the path to the vertex; i.e., $u_\emptyset$ will denote the root of the tree, 
$u_{01}$ will denote the right child of $u_{0}$.
Each vertex $u_s$ is associated with a parameter $x_{t,s}$, a gradient estimate $v_{t,s}$, and a set of samples $S_{t,s}$ of size $2^{-j} \bs$ where $j$ is the depth of the vertex. Roughly, the idea is to improve the gradient estimate at a vertex (reduce the variance) using the gradient estimates at vertices along the path to the root; since these vertices have more samples, this procedure can result in better gradient estimates. 

More precisely, the algorithm traverses through the graph vertices according to the Depth-First-Search (DFS) approach. 
At each vertex, the algorithm improves the gradient estimate at the current vertex using the estimate at the parent vertex. When the algorithm visits a leaf vertex, it also updates the current iterate using the Frank-Wolfe step with the gradient estimate at the leaf.

For notational convenience, we let $\mathsf{DFS}(t)$ denote the DFS order of the vertices in a binary tree of depth $t$ (root not included), i.e., for $t=2$ we have $\mathsf{DFS}(t)= \{u_0,u_{00},u_{01},u_{1},u_{10},u_{11}\}$. Moreover, for $s \in \{0,1\}^t$ we let $\ell(s)$ denote the integer whose binary representation is $s$. In the description of the algorithm, we denote iterates by $x_{t,s}$ where $t$ is the phase and $s \in \{0,1\}^t$ is the path from the root. In our proofs, we sometimes use the equivalent notation $x_k$ where $k = 2^{t-1} + \ell(s)$.

\begin{figure*}
\begin{center}
\resizebox{0.75\textwidth}{!}{
\begin{tikzpicture} 
    \node [circle,draw=red, thick]{$u_\emptyset$} [level distance=10mm,sibling distance=62mm] 
    child { node [circle,draw]{$u_0$} [level distance=10mm ,sibling distance=30mm]  
    child {node [circle,draw] {$u_{00}$} [level distance=10mm ,sibling distance=15mm] 
    child {node [circle,draw] {$u_{000}$}} 
    child {node [circle,draw]{$u_{001}$}}} 
    child {node [circle,draw]{$u_{01}$} [level distance=10mm ,sibling distance=15mm] 
    child {node [circle,draw] {$u_{010}$}} 
    child {node [circle,draw]{$u_{011}$}}} } 
    child {node [circle,draw=red, thick] {$u_1$} [level distance=10mm ,sibling distance=30mm] 
    child {node [circle,draw=red, thick] {$u_{10}$} [level distance=10mm ,sibling distance=15mm]  
    child {node [circle,draw] {$u_{100}$}} 
    child {node [circle,draw=red, thick]{$u_{101}$}}} 
    child {node [circle,draw=green, thick]{$u_{11}$} [level distance=10mm ,sibling distance=15mm] 
    child {node [circle,draw] {$u_{110}$}} 
    child {node [circle,draw]{$u_{111}$}}} 
    }; 
\end{tikzpicture}
}
\end{center}
\caption{Binary tree at phase $t=3$ of the algorithm. At the leaf $u_{101}$, the algorithm has the gradient estimate $v_{t,101}$ which is calculated along the path to the root where every right son applies a correction step to the estimate. Using the gradient estimate $v_{t,101}$, the algorithm applies a Frank-Wolfe step to calculate the next iterate and put its value in the next DFS vertex, namely $u_{t,11}$.}
\label{fig:tree}
\end{figure*}
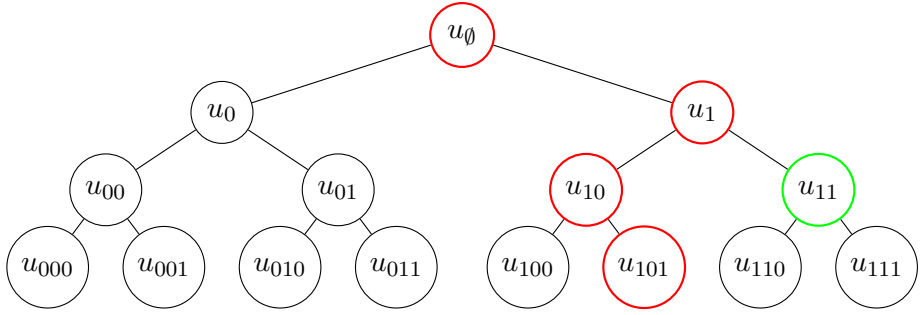

\begin{algorithm}[ht]
	\caption{Private Variance Reduced Frank-Wolfe}
	\label{alg:FW-tree}
	\begin{algorithmic}[1]
		\REQUIRE Dataset $\Ds=(\ds_1, \ldots, \ds_n)\in \domain^n$, 
		constraint set 
		$\xdomain = \mathsf{conv}\{c_1,\dots,c_m\}$,
		number of phases $T$,
		batch size $\bs$,
		initial point $x_0$;
        \FOR{$t=1$ to $T$\,}
            \STATE Set $x_{t,\emptyset} = x_{t-1,L_{t-1}}$
            \STATE Draw $\bs$ samples to the set $S_{t,\emptyset}$
            \STATE  $v_{t,\emptyset} \gets \nabla f(x_{t,\emptyset}; S_{t,\emptyset})$
            \FOR{$u_s \in \mathsf{DFS}[2^t]$\,}
                \STATE Let $s = s' c$ where $c \in \{0,1\}$ and $j = |s|$
                \IF{$c=0$}
                    \STATE $v_{t,s} \gets v_{t,s'}$; \ $x_{t,s} \gets x_{t,s'}$
                \ELSE
                    \STATE Draw $2^{-j} \bs  $ samples to the set $S_{t,s}$
                    \STATE $v_{t,s} \gets v_{t,s'} 
        	        + \nabla f(x_{t,s};S_{t,s})
                    - \nabla f(x_{t,s'};S_{t,s})$
                \ENDIF    
                \IF{$j=t$}
                    \STATE Let $s_+$ be the next vertex in the DFS iteration
                    \STATE $w_{t,s} \gets \argmin_{c_i : 1 \le i \le m} \<c_i, v_{t,s} \> + \noise_i$ where $\noise_i \sim \mathsf{Laplace}(\lambda_{t,s})$
        	    \STATE $x_{t,s_+} \gets (1 - \ss_{t,s}) x_{t,s} + \ss_{t,s} w_{t,s}$ where
        	$\ss_{t,s} = \frac{2}{2^{t-1} + \ell(s) + 1}$
                \ENDIF
        	     
            \ENDFOR
        
        \ENDFOR
            \RETURN the final iterate $x_K$
	\end{algorithmic}
\end{algorithm}

We analyze~\cref{alg:FW-tree} for pure and approximate DP.

\subsection{Pure Differential Privacy}
\label{sec:FW-pure}
The following theorem summarizes our guarantees 
for pure privacy. 
\iftoggle{arxiv}{}{We defer missing proofs to~\cref{sec:proofs-FW}.}
\begin{theorem}
\label{thm:FW-pure}
    Assume that $\diam_1(\xdomain) \le \rad$, $m \le O(d)$ and that $f(x;z)$ is convex,  $\lip$-Lipschitz and $\sm$-smooth
    with respect to $\lone{\cdot}$.
    Assume also that $\frac{\lip \log m \log^2 n}{n \diffp \rad} \le \sm \le \frac{n \lip \log m}{\diffp \rad \log^2 n}$.
    Setting  $\bs = n/\log^2 n $, $\lambda_{t,s} =\frac{2 \lip \rad 2^t }{\bs \diffp}$ and $T = \frac{1}{2} \log \left( \frac{\bs \diffp \sm \rad}{\lip \log m} \right)$,
    \cref{alg:FW-tree} is $\diffp$-DP, queries $n$ gradients, and has
    \begin{align*}
     \iftoggle{arxiv}{}{&} \E[F(x_{K}) - F(x\opt)] 
     \iftoggle{arxiv}{}{\\}
        & \iftoggle{arxiv}{}{\quad} 
        \le O \left( \rad (\lip + \sm \rad) \frac{\sqrt{\log d} \log n}{\sqrt{n}} 
        + \frac{\sqrt{\sm \lip \rad^3 \log d} \log n}{\sqrt{n \diffp}} \right).
    \end{align*}
    Moreover, if $\sm \le \frac{\lip \log m \log^2 n}{n \diffp \rad}$ then setting $T=1$ and $\bs = n$, \cref{alg:FW-tree} is $\diffp$-DP, queries $n$ gradients, and has
     \begin{equation*}
     \E[F(x_{K}) - F(x\opt)] 
        \le \rad \lip  \cdot  O\left(  \frac{\sqrt{\log d} }{\sqrt{n}} 
        +  \frac{\log d}{n \diffp}\right) + O( \sm \rad^2).
    \end{equation*}
\end{theorem}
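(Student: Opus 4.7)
The plan is to establish three claims in sequence: (i) pure $\diffp$-DP, (ii) that at most $n$ gradients are queried, and (iii) the excess loss bound.

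For privacy, I would first observe that fresh samples are drawn at every node across phases, so each datum participates in exactly one batch $S_{t,s}$. Fix a phase $t$ and a node $u_s$ at depth $j = |s|$, with $|S_{t,s}| = 2^{-j}\bs$. The sample influences $v_{t,s}$ and, through the correction rule $v_{t,\tilde s} = v_{t,\mathrm{parent}(\tilde s)} + \nabla f(x_{t,\tilde s};S_{t,\tilde s}) - \nabla f(x_{t,\mathrm{parent}(\tilde s)};S_{t,\tilde s})$, also every $v_{t,\tilde s}$ for $\tilde s$ a descendant of $s$, but only via $v_{t,s}$ itself (the samples in descendant sets are disjoint). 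Thus the sample affects the utility $\<c_i, v_{t,\tilde s}\>$ of exactly $2^{t-j}$ leaves. Lipschitzness in $\lone{\cdot}$ gives $\linf{\nabla f(\cdot;z)} \le \lip$, and since the sample appears in two gradient-average terms at node $s$, the $\linf{\cdot}$-sensitivity of $v_{t,s}$ is at most $4\lip/(2^{-j}\bs) = 4 \lip \cdot 2^j/\bs$; then $|\<c_i,v_{t,s}\>|$ has sensitivity $4\lip\rad\cdot 2^j/\bs$ since $\lone{c_i} \le \rad$. With $\lambda_{t,s} = 2\lip\rad\cdot 2^t/(\bs\diffp)$, Report-Noisy-Max at each leaf is $(2\diffp\cdot 2^{j-t})$-DP for this sample. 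Basic composition over the $2^{t-j}$ affected leaves yields $O(\diffp)$ per sample, and independence across phases gives $\diffp$-DP overall (up to the constant).

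For the gradient count, at phase $t$ the tree has $2^j$ nodes at depth $j$, each with a batch of size $2^{-j}\bs$, so phase $t$ uses $(t+1)\bs$ fresh samples, and over $T$ phases the total is $O(T^2 \bs)$. With $\bs = n/\log^2 n$ and $T = \frac{1}{2}\log(\cdot) = O(\log n)$, this is $O(n)$. For utility, index leaves globally by $k = 2^{t-1} + \ell(s)$ and view the algorithm as standard Frank-Wolfe with step size $\ss_k = 2/(k+1)$ driven by the estimated gradients $v_k := v_{t,s}$ and the noise in the Report-Noisy-Max step. The standard Frank-Wolfe recursion gives
\begin{equation*}
F(x_{k+1}) - F(x\opt) \le (1-\ss_k)\bigl(F(x_k) - F(x\opt)\bigr) + \ss_k \rad \bigl(\linf{v_k - \nabla F(x_k)} + \xi_k\bigr) + \tfrac{1}{2}\ss_k^2 \sm \rad^2,
\end{equation*}
where $\xi_k = O(\lambda_{t,s}\log m)$ bounds the cost of using Report-Noisy-Max over $m$ candidates, since the maximum of $m$ independent Laplace($\lambda$) variables concentrates at $O(\lambda\log m)$ in expectation.

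The heart of the argument, and what I expect to be the main obstacle, is controlling $\E[\linf{v_k - \nabla F(x_k)}]$. The variance-reduced construction lets us telescope along the root-to-leaf path: $v_{t,s} - \nabla F(x_{t,s})$ is a sum over the depths $j$ at which corrections were added, of centered averages of $\nabla f(x_{t,s_{\le j}};z) - \nabla f(x_{t,s_{\le j-1}};z)$ over $2^{-j}\bs$ samples, plus the error of the root estimate. By smoothness, each summand has coordinates bounded by $\sm \lone{x_{t,s_{\le j}} - x_{t,s_{\le j-1}}} \le \sm \rad \ss_{k'}$ for the appropriate $k'$ on the path, and a Bernstein-type concentration together with a union bound over $d$ coordinates gives an $\ell_\infty$ error of $O\bigl(\sm \rad \ss_{k'} \sqrt{\log d/(2^{-j}\bs)}\bigr)$ per level, with the root contributing $O(\lip \sqrt{\log d/\bs})$. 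Summing along the path of length $t$ and plugging into the recursion, then solving it with weights $k$ (so lower-order terms telescope out), yields the excess-loss bound. Matching the three error sources (Frank-Wolfe curvature gap $\sm\rad^2/K$, variance-reduction sampling error, and noise $\rad\lambda_{t,s}\log m$) with the stated $\bs$, $\lambda_{t,s}$, and $T = \tfrac{1}{2}\log(\bs\diffp\sm\rad/(\lip\log m))$ reproduces the claimed bound; in the regime $\sm \le \lip\log m\log^2 n/(n\diffp\rad)$ the curvature term dominates, so taking $T=1$, $\bs=n$ (no variance reduction) removes a $\sqrt{\log d}$ factor from the second summand and adds the residual $\sm\rad^2$ gap.
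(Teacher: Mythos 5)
Your overall architecture is exactly the paper's: per-sample privacy accounting over the $2^{t-|s|}$ leaves a batch influences, report-noisy-max with basic composition, the standard Frank--Wolfe recursion with error terms for the gradient bias and the Laplace maximum, a path-telescoping sub-Gaussian bound on $\linf{v_{t,s}-\nabla F(x_{t,s})}$, and the same parameter balancing. Two small imprecisions in the privacy part are harmless: the iterates of later phases are handled by post-processing, not ``independence across phases,'' and at depth $j$ only the $2^{j-1}$ right children draw fresh batches (so a phase uses $O(t\bs)$, not $(t+1)\bs$, samples) --- neither changes the conclusion.

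There is, however, one genuinely wrong step in your variance argument: you bound the displacement between consecutive nodes on the root-to-leaf path by a single step, $\lone{x_{t,s_{\le j}} - x_{t,s_{\le j-1}}} \le \rad\,\ss_{k'}$. This is false. Between the moment the parent $x_{t,s'}$ is set and the moment its right child $x_{t,s}$ is set, the algorithm performs a Frank--Wolfe update at \emph{every leaf of the left subtree} of $s'$, i.e.\ up to $2^{t-j}$ updates, so the correct bound is $\lone{x_{t,s} - x_{t,s'}} \le \rad\,\ss_{t,s'}\, 2^{t-j} = O(\rad\, 2^{-j})$. This factor is not cosmetic: it is precisely what the dyadic batch allocation is designed to absorb. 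With the correct bound, the correction at depth $j$ contributes sub-Gaussian parameter $(\sm\rad 2^{-j})^2/(2^{-j}\bs) = \sm^2\rad^2 2^{-j}/\bs$, and the geometric sum over $j$ gives the $O((\lip^2+\sm^2\rad^2)/\bs)$ total that yields $\E\linf{v_{t,s}-\nabla F(x_{t,s})} \le (\lip+\sm\rad)\,O(\sqrt{\log(d)/\bs})$. Your version would give a strictly smaller (and unprovable) variance bound and would suggest, incorrectly, that uniform batch sizes suffice. Since the corrected computation still lands on the bound you plug into the recursion, the rest of your derivation goes through unchanged, but as written this step does not.
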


To prove the theorem, we begin with the following lemma that gives pure privacy guarantees.
\begin{lemma}
\label{lemma:FW-pure-priv}
    Assume $2^T \le \bs$.
    Setting $\lambda_{t,s} = \frac{ 2 \lip \rad 2^t }{ \bs \eps}$, \cref{alg:FW-tree} is $\diffp$-DP with $\diffp \le 1$. Moreover, $\E[\<v_{t,s}, w_{t,s}\>] \le \E[\min_{w \in \xdomain} \<v_{t,s},w\>] + O(\frac{\lip \rad 2^t}{ \bs \eps} {\log m})$.
\end{lemma}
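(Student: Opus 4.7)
The plan is to track how a single sample in the dataset propagates through the algorithm and then apply basic (adaptive) composition across the leaves it affects. First, I would observe that the sets $S_{t,s}$ drawn in line~3 and line~10 are disjoint partitions of a public subset of the data (the assumption $2^T \le \bs$ guarantees that even the deepest leaves receive at least one sample, and an easy counting shows the total samples needed is $O(T^2 \bs) = O(n)$). Thus any fixed sample $z$ belongs to exactly one set $S_{t^*,s^*}$ at depth $j^* = |s^*|$ in phase $t^*$.

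Next I would identify exactly which randomized outputs of the algorithm depend on $z$. Inspecting the recursion $v_{t,s} \leftarrow v_{t,s'} + \nabla f(x_{t,s};S_{t,s}) - \nabla f(x_{t,s'};S_{t,s})$, the sample $z$ contributes the fixed correction term $\nabla f(x_{t^*,s^*};S_{t^*,s^*}) - \nabla f(x_{t^*,\text{parent}(s^*)};S_{t^*,s^*})$ to the gradient estimate $v_{t^*,s''}$ of every leaf $u_{s''}$ in the subtree rooted at $u_{s^*}$ within phase $t^*$; there are exactly $2^{t^*-j^*}$ such leaves, and no other leaf is affected. I would then bound the $\ell_\infty$ sensitivity of this correction term under a swap in $S_{t^*,s^*}$: since each $\nabla f(\cdot;z)$ has $\linf{\cdot} \le \lip$ by Lipschitzness, a swap changes the correction by at most $4\lip / |S_{t^*,s^*}| = 4\lip\cdot 2^{j^*}/\bs$ in $\ell_\infty$, and by Hölder the induced change in any score $\<c_i, v_{t^*,s''}\>$ is at most $\rad \cdot 4\lip \cdot 2^{j^*}/\bs$ (absorbing the factor $\|c_i\|_1 \le \rad$ up to constants).

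Now I would apply the standard guarantee that report-noisy-min with $\mathsf{Laplace}(\lambda_{t^*,s''})$ noise is $(\Delta/\lambda_{t^*,s''})$-DP with respect to the data entering its scores, where $\Delta$ is the score's $\ell_\infty$-sensitivity. With the prescribed $\lambda_{t^*,s''} = 2\lip\rad \cdot 2^{t^*}/(\bs\eps)$, each affected leaf is $O(\eps\cdot 2^{j^*-t^*})$-DP with respect to the sample $z$. Adaptive basic composition across all $2^{t^*-j^*}$ affected leaves yields a total privacy cost of $O(\eps)$; since no other leaf depends on $z$, and since the selection of $w_{t,s}$ is the only data-dependent output (the iterates $x_{t,s}$ are deterministic post-processing of the $w$'s), the entire algorithm is $O(\eps)$-DP, which gives $\eps$-DP after absorbing absolute constants into $\lambda_{t,s}$. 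The geometric match between the decaying per-leaf sensitivity $2^{j^*-t^*}$ and the growing number of affected leaves $2^{t^*-j^*}$ is precisely what makes the tree construction achieve the claimed dependence.

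For the utility claim, I would invoke the standard utility bound for report-noisy-min with Laplace noise: among $m$ candidates, the expected additive sub-optimality is $O(\lambda_{t,s} \log m)$. Substituting $\lambda_{t,s} = 2\lip\rad \cdot 2^{t}/(\bs\eps)$ immediately gives $\E[\<v_{t,s}, w_{t,s}\>] \le \E[\min_{w \in \xdomain}\<v_{t,s},w\>] + O(\lip\rad \cdot 2^t \log m / (\bs\eps))$, where we used that minimizing a linear functional over $\xdomain = \mathsf{conv}\{c_1,\dots,c_m\}$ is attained at a vertex. The main subtlety in the whole argument is not any single step but rather the careful bookkeeping of the three scales (sample-set size $2^{-j}\bs$, number of affected leaves $2^{t-j}$, and noise scale $\lambda_{t,s}$ that must match the worst $j$) so that the per-sample budget sums to $\eps$ regardless of where in the tree the sample lives.
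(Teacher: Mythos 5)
Your proposal is correct and follows essentially the same route as the paper's proof: a sample at depth $j$ of phase $t$ affects exactly the $2^{t-j}$ report-noisy-min selections at the leaves of its subtree, each with score sensitivity $O(\lip\rad\,2^{j}/\bs)$, so the chosen $\lambda_{t,s}$ makes each such selection $O(\eps\,2^{j-t})$-DP and basic composition plus post-processing gives $O(\eps)$-DP overall, with the utility claim following from the standard Laplace report-noisy-min bound and the fact that a linear functional over $\mathsf{conv}\{c_1,\dots,c_m\}$ is minimized at a vertex. The only difference is a constant factor in the sensitivity bookkeeping, which you correctly absorb into the noise scale.
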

\iftoggle{arxiv}{
\begin{proof}
The main idea for the privacy proof is that 
each sample in the set $S_{t,s}$ is used in the calculation of $v_{t,s}$ at most $N_{t,s} = 2^{t - |s|}$ times,
hence setting the noise large enough so that each iterate is $\frac{\diffp}{N_{t,s}}$-DP, we get that the final mechanism is $\diffp$-DP using basic composition. Let us now provide a more formal argument.
Let $\Ds=(\ds_1,\dots,\ds_{n-1},\ds_n),\Ds'=(\ds_1,\dots,\ds_{n-1},\ds'_n)$ be two neighboring datasets with iterates $x = (x_1,\dots,x_K)$
and $x' = (x'_1,\dots,x'_K)$, respectively. We prove that $x$ and $x'$
are $\diffp$-indistinguishable, i.e., $x \approx_{(\eps,0)} x'$.
Let $S_{t,s}$ be the set (vertex) that contains the last sample (i.e., $z_n$ or $z'_n$) and let 
$j = |s|$ denote the depth of this vertex. 
We will prove privacy given that the $n$'th sample is in $S_{t,s}$, which will imply our general privacy guarantee as this holds for every choice of $t$ and $s$.
    
Note that $|S_{t,s}| = 2^{-j} \bs$ and that this set is used in the calculation of $v_k$ for at most $2^{t-j}$ (consecutive) iterates, namely these are leafs that are descendants of the vertex $u_{t,s}$.
Let $k_0$ and $k_1$ be the first and last iterate such that the set $S_{t,s}$ is used for the calculation of $v_k$, hence $k_1 - k_0 + 1 \le  2^{t - j} $.
The iterates $(x_1,\dots,x_{k_0-1})$ and $(x'_1,\dots,x'_{k_0-1})$ do not depend on the last sample and therefore has the same distribution (hence $0$-indistinguishable).
Moreover, given that $(x_{k_0},\dots,x_{k_1}) \approx_{(\diffp,0)} (x'_{k_0},\dots,x'_{k_1})$, it is clear that
the remaining iterates $(x_{k_1+1},\dots,x_{K}) \approx_{(\diffp,0)} (x'_{k_1+1},\dots,x'_{K})$
by post-processing as they do depend on the last sample only through the previous iterates. It is therefore enough to prove that 
$(x_{k_0},\dots,x_{k_1}) \approx_{(\diffp,0)} (x'_{k_0},\dots,x'_{k_1})$. 
To this end, we prove that for each such iterate, $w_k \approx_{({\eps}/{ 2^{t - j}},0)} w'_k$, hence using post-processing and basic composition the iterates are 
$\eps$-indistinguishable as $k_1 - k_0 + 1 \le 2^{t - j}$.
Note that for every $k_0 \le k \le k_1$ the sensitivity $|\<c_i, v_k - v'_k \> | \le  \frac{\rad \lip}{2^{-j} \bs }$.
Hence, using privacy guarantees of report noisy max~\citep[claim 3.9]{DworkRo14}, we have that $w_k \approx_{({\eps}/{ 2^{t - j}},0)} w'_k$ since 
$\lambda_{t,s} = \frac{ 2 \lip \rad 2^t }{ \bs \eps}$.
    
Now we prove the second part of the claim. 
Standard results for the expectation of the maximum of $m$ Laplace random variables imply that $\E[\<v_{t,s}, w_{t,s}\>] \le \min_{1 \le i \le m } \<v_{t,s},c_i\> + O(\frac{\lip \rad 2^t}{\bs \eps} {\log m})$.
Since $\xdomain = \mathsf{conv}\{c_1,\dots,c_m\}$, we know that for any $v \in \R^d$, $\argmin_{w \in \xdomain} \<w,v\>  \cap \{c_1,\dots,c_m\} \neq \emptyset $~\citep{TalwarThZh15}(Fact 2.3) which proves the claim.
\end{proof}
}{
\begin{proof}(sketch)
    Let $\Ds=(\ds_1,\dots,\ds_{n-1},\ds_n)$ and $\Ds'=(\ds_1,\dots,\ds_{n-1},\ds'_n)$ be two neighboring datasets.
    We prove privacy given that the $n$'th sample belongs to the set $S_{t,s}$, which will imply our general privacy guarantee as this holds for every choice of $t$ and $s$. The main idea is that each sample in the set $S_{t,s}$ is (directly) involved in the 
    calculation of a Frank-Wolfe update at most $N_{t,s} = 2^{t - |s|}$ times. Hence, setting the noise level $\lambda_{t,s}$ large enough to guarantee that each iterate is ${\diffp}/{N_{t,s}}$-DP, basic composition implies the final output is $\diffp$-DP.
\end{proof}
}

The next lemma upper bounds the variance of the gradients. 
\begin{lemma}
\label{lemma:FW-var-tree}
    At the vertex $(t,s)$, we have
    \begin{equation*}
        \E\linf{v_{t,s} - \nabla F(x_{t,s})} \le (\lip + \sm \rad) \cdot O \left( \sqrt{\log (d)/b} \right).
    \end{equation*}
\end{lemma}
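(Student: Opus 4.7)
The plan is to bound $\linf{v_{t,s} - \nabla F(x_{t,s})}$ by unrolling the tree recurrence along the root-to-$u_s$ path and then applying concentration term by term. Let $s_i$ denote the length-$i$ prefix of $s$ and $c_i$ its $i$-th bit. Unrolling the definition of $v_{t,s}$ yields
\[
v_{t,s} - \nabla F(x_{t,s}) = \big[\nabla f(x_{t,\emptyset}; S_{t,\emptyset}) - \nabla F(x_{t,\emptyset})\big] + \sum_{i\,:\,c_i = 1} E_i,
\]
where $E_i := [\nabla f(x_{t,s_i}; S_{t,s_i}) - \nabla f(x_{t,s_{i-1}}; S_{t,s_i})] - [\nabla F(x_{t,s_i}) - \nabla F(x_{t,s_{i-1}})]$. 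The $\nabla F$ telescoping works because $x_{t,s_i} = x_{t,s_{i-1}}$ on left-child edges ($c_i = 0$). Since the mini-batch $S_{t,s_i}$ is drawn fresh, each $E_i$ is mean-zero conditional on the history up to visiting $u_{s_i}$, so each term can be controlled by concentration.

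The next step is to bound each summand in the $\ell_\infty$ norm via coordinate-wise Hoeffding followed by a maximum over $d$ coordinates. For the root contribution, each coordinate is an average of $\bs$ i.i.d.\ samples of $\nabla_j f(x_{t,\emptyset};z)$ bounded by $\lip$ (Lipschitzness), giving $\E\linf{\nabla f(x_{t,\emptyset};S_{t,\emptyset}) - \nabla F(x_{t,\emptyset})} = O(\lip\sqrt{\log(d)/\bs})$. For $E_i$, $\sm$-smoothness with respect to $\lone{\cdot}$ gives the dual bound $\linf{\nabla f(x;z)-\nabla f(y;z)} \le \sm\lone{x-y}$, so each sample-wise term has coordinates bounded by $2\sm\lone{x_{t,s_i}-x_{t,s_{i-1}}}$. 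Since $|S_{t,s_i}|=2^{-i}\bs$, Hoeffding plus a maximum over $d$ coordinates gives $\E\linf{E_i} = O\big(\sm\lone{x_{t,s_i}-x_{t,s_{i-1}}}\sqrt{2^i\log(d)/\bs}\big)$.

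The main obstacle is the distance bound $\lone{x_{t,s_i}-x_{t,s_{i-1}}} = O(\rad/2^i)$, which requires tracing the Frank-Wolfe dynamics inside phase $t$. When $c_i=1$, the point $x_{t,s_i}$ is placed by the Frank-Wolfe step triggered at the last leaf of the left subtree of $u_{s_{i-1}}$, while $x_{t,s_{i-1}}$ equals the value placed at the most recent right-child ancestor on the path (or the starting point $x_{t,\emptyset}$ if none). A direct count on DFS positions---using that leaves in the subtree rooted at a depth-$j$ vertex form a contiguous block of length $2^{t-j}$---shows that exactly $2^{t-i}$ Frank-Wolfe updates separate these two placements. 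Combined with the per-step bound $\lone{y_{k+1}-y_k} \le \eta_{t,s}\rad \le O(2^{-t}\rad)$, which follows from the update rule $y_{k+1}=(1-\eta_k)y_k+\eta_k w_k$, $\diam_1(\xdomain)\le\rad$, and $\eta_{t,s}=2/(2^{t-1}+\ell(s)+1)$, this yields $\lone{x_{t,s_i}-x_{t,s_{i-1}}} \le O(2^{t-i}\cdot 2^{-t}\rad) = O(\rad/2^i)$.

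Plugging this in gives $\E\linf{E_i} = O(\sm\rad\sqrt{\log(d)/(2^i\bs)})$. Since the path has depth at most $t$, the triangle inequality and summation over $i$ produce a convergent geometric series bounded by $O(\sm\rad\sqrt{\log(d)/\bs})$; combining this with the root contribution gives the claimed bound $(\lip+\sm\rad)\cdot O(\sqrt{\log(d)/\bs})$.
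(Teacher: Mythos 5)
Your proof is correct and follows essentially the same route as the paper's (which proceeds via a coordinate-wise sub-Gaussian bound proved by induction on depth): the same telescoping decomposition of $v_{t,s}-\nabla F(x_{t,s})$ along the root-to-vertex path, the same conditionally mean-zero treatment of each fresh mini-batch correction with the smoothness bound on its magnitude, and the same key distance estimate $\lone{x_{t,s_i}-x_{t,s_{i-1}}}\le O(\rad\, 2^{-i})$ obtained from $2^{t-i}$ Frank--Wolfe steps each of length $O(2^{-t})\rad$. The only difference is bookkeeping: the paper carries a single moment-generating-function bound through the induction and takes the maximum over $d$ coordinates once at the end, whereas you take the expected $\ell_\infty$-norm of each telescoping term separately and combine them by the triangle inequality; this is slightly lossier in general but costs only a constant here because your per-level contributions decay geometrically like $2^{-i/2}$.
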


\iftoggle{arxiv}{
The claim follows directly from the following lemma.
\begin{lemma}
\label{lemma:coord-subg-tree}
    Let $(s,t)$ be a vertex and $\sigma^2 =  (\lip^2  + \sm^2 \rad^2)/\bs$.
    For every index $1 \le i \le d$, 
    \begin{equation*}
    \E \left[ e^{\lambda (v_{t,s,i} - \nabla F_i(x_{t,s}))} \right] 
        \le e^{O(1) \lambda^2 \sigma^2}.
    \end{equation*}
\end{lemma}

\begin{proof}(\Cref{lemma:FW-var-tree})
\Cref{lemma:coord-subg-tree} says that $v_{t,s,i} - \nabla F_i(x_{t,s})$ is $O(\sigma^2)$-sub-Gaussian for every $1 \le i \le d$, hence standard results imply that the maximum of $d$ sub-Gaussian random variables is 
$\E\linf{v_{t,s} - \nabla F(x_{t,s})} \le O(\sigma) \sqrt{\log d}$. The claim follows.
\end{proof}

\begin{proof}(\Cref{lemma:coord-subg-tree})
    Let us fix $i$ for simplicity and let 
    $B_{t,s} = v_{t,s,i} - \nabla F_i(x_{t,s})$.
    We prove the claim by induction on the depth of the vertex, i.e., $j = |s|$.
    If $j=0$ then $s = \emptyset$ which implies that 
    $v_{t,\emptyset} = \nabla f(x_{t,\emptyset}; S_{t,\emptyset})$ where $S_{t,\emptyset}$ is a sample of size $b$. Thus we have
    \begin{align*}
    \E[e^{\lambda B_{t,\emptyset}}]
       & = \E \left[ e^{\lambda (v_{t,\emptyset,i} - \nabla F_i(x_{t,\emptyset})} \right] \\
       & = \E \left[ e^{\lambda (\frac{1}{\bs} \sum_{s \in S_{t,\emptyset}} \nabla f_i(x_{t,\emptyset};s) - \nabla F_i(x_{t,\emptyset})} \right] \\
       & = \prod_{s \in S_{t,\emptyset}}
            \E[ e^{\frac{\lambda}{\bs} (\nabla f_i(x_{t,\emptyset};s) - \nabla F_i(x_{t,\emptyset}))}  ] \\
        & \le  e^{\lambda^2 \lip^2/2 \bs},
    \end{align*}
    where the last inequality follows since for a random variable $X \in [-\lip,\lip]$ and $\E[X]=0$, we have $\E[e^{\lambda X}] \le e^{\lambda^2 \lip^2/2}$~(\citealp{Duchi19}, example 3.6).
    Assume now we have $s$ with $|s| = j >0$ and let $s = s' c$ where $c \in \{0,1\}$. If $c=0$ the claim clearly holds so we assume $c=1$.
    Recall that in this case
    $v_{t,s} = v_{t,s'} + \nabla f(x_{t,s};S_{t,s}) - \nabla f(x_{t,s'};S_{t,s})$, hence
    $B_{t,s} = v_{t,s,i} - \nabla F_i(x_{t,s})
    = B_{t,s'} + \nabla f_i(x_{t,s};S_{t,s}) - \nabla f_i(x_{t,s'};S_{t,s}) - \nabla F_i(x_{t,s}) + \nabla F_i(x_{t,s'})$
    Letting $S_{< {t,s}} = \cup_{(t_1,s_1) < (t,s)} S_{t_1,s_1}$ be the set of all samples used up to vertex $t,s$, the law of iterated expectation implies
    \begin{align*}
    \E[e^{\lambda B_{t,s}}] 
        & = \E[e^{\lambda( B_{t,s'} + \nabla f_i(x_{t,s};S_{t,s}) - \nabla f_i(x_{t,s'};S_{t,s}) - \nabla F_i(x_{t,s}) + \nabla F_i(x_{t,s'}) )}] \\
        & = \E \left[ \E[e^{\lambda( B_{t,s'} + \nabla f_i(x_{t,s};S_{t,s}) - \nabla f_i(x_{t,s'};S_{t,s}) - \nabla F_i(x_{t,s}) + \nabla F_i(x_{t,s'}) )}] \mid S_{<(t,s)}\right] \\
        & = \E \left[ \E[e^{\lambda B_{t,s'} } \mid S_{<(t,s)} ] \cdot
        \E[e^{\lambda( \nabla f_i(x_{t,s};S_{t,s}) - \nabla f_i(x_{t,s'};S_{t,s}) - \nabla F_i(x_{t,s}) + \nabla F_i(x_{t,s'}) )} \mid S_{<{t,s}}  ] \right] \\
        & =  \E[e^{\lambda B_{t,s'} }] \cdot
        \E[e^{\lambda( \nabla f_i(x_{t,s};S_{t,s}) - \nabla f_i(x_{t,s'};S_{t,s}) - \nabla F_i(x_{t,s}) + \nabla F_i(x_{t,s'}) )} \mid S_{<{t,s}}].
    \end{align*}
    Since $f(\cdot;s)$ is $\sm$-smooth with respect to $\lone{\cdot}$, we have that $|\nabla f_i(x_{t,s};S_{t,s}) - \nabla f_i(x_{t,s'};S_{t,s})| \le \sm \lone{x_{t,s} - x_{t,s'}}$.
    Moreover, as $u_{t,s}$ is the right son of $u_{t,s'}$,
    the number of updates between $x_{t,s}$ and $x_{t,s'}$
    is at most the number of leafs visited between these two vertices
    which is $2^{t-j}$. Hence we get that
    \begin{equation*}
    \lone{x_{t,s} - x_{t,s'}} 
       \le \rad \ss_{t,s'} 2^{t-j}  
        \le \rad 2^{-j+2},    
    \end{equation*}
    which implies that  $|\nabla f_i(x_{t,s};S_{t,s}) - \nabla f_i(x_{t,s'};S_{t,s})| \le \sm \rad 2^{-j+2}$. 
    Since $\E[\nabla f_i(x_{t,s};S_{t,s}) - \nabla f_i(x_{{t,s'}};S_{t,s})  \mid S_{< {t,s}} ] = \nabla F_i(x_{t,s}) - \nabla F_i(x_{t,s'})$,
    by repeating similar arguments to the case $\ell=0$, we get that
    \begin{align*}
     \E[e^{\lambda( \nabla f_i(x_{t,s};S_{t,s}) - \nabla f_i(x_{t,s'};S_{t,s}) - \nabla F_i(x_{t,s}) + \nabla F_i(x_{t,s'}) )} \mid S_{< {t,s}}]
     & \le e^{O(1) \lambda^2 \sm^2 \rad^2 2^{-2j}/|S_{t,s}|} \\
     & \le e^{O(1) \lambda^2 \sm^2 \rad^2 2^{- j}/ \bs }.
    \end{align*}
    Overall we have that $\E[e^{\lambda B_{t,s}}] \le \E[e^{\lambda B_{t,s'}}] \cdot e^{O(1) \lambda^2 \sm^2 \rad^2 2^{-j}/\bs }$. Applying this inductively, we get that for every $(t,s)$
    \begin{equation*}
    \E[e^{\lambda B_{t,s}}]
        \le e^{O(1) \lambda^2 (\lip^2 + \sm^2 \rad^2)/ \bs}.
        \qedhere
    \end{equation*}
\end{proof}
}{}

Using the previous two lemmas, we can prove~\cref{thm:FW-pure}.
\begin{proof}
The setting of the parameters and the condition on $\sm$ ensures that $2^T \le \bs$ hence~\cref{lemma:FW-pure-priv} implies the claim about privacy. Now we proceed to prove utility.
In this proof, we use the equivalent representation $k = 2^{t-1} + \ell(s)$
for a leaf vertex $(t,s)$ where $\ell(s)$ is the number whose binary
representation is $s$.
By smoothness we get,
\begin{align*}
   \iftoggle{arxiv}{}{&} F(x_{k+1}) 
   \iftoggle{arxiv}{}{\\} 
        & \le F(x_k) + \< \nabla F(x_k), x_{k+1} - x_k \> + \sm \lone{x_{k+1} - x_k}^2 / 2 \\
        & \le F(x_k) + \ss_k \< \nabla F(x_k), w_k - x_k \> + {\sm \ss_k^2 \rad^2 }/{2} \\
        & = F(x_k) 
        + \ss_k \<\nabla F(x_k), x\opt - x_k \>
        + \ss_k \< v_k, w_k - x\opt \>  \\
        & \quad + \ss_k \< \nabla F(x_k) - v_k, w_k - x\opt \> + {\sm \ss_k^2 \rad^2 }/{2} \\
        & \le F(x_k) 
        + \ss_k (F(x\opt) - F(x_k)) 
        + \ss_k \rad \linf{\nabla F(x_k) - v_k} 
        \\
        & \quad + \ss_k (\< v_k, w_k\>  -  \min_{w \in \xdomain} \<v_k,w \>) 
        + {\sm \ss_k^2 \rad^2 }/{2}
        .
\end{align*}
Subtracting $F(x\opt)$ from each side,
using~\cref{lemma:FW-pure-priv,lemma:FW-var-tree} and taking expectations, we have
\begin{align*}
    \iftoggle{arxiv}{}{&} \E[F(x_{k+1}) - F(x\opt)] \iftoggle{arxiv}{}{\\}
    & \le  (1- \ss_k) \E[F(x_k) - F(x\opt)]
        + \ss_k \rad (\lip + \sm \rad) \sqrt{\frac{\log d}{\bs}} \\
        & \quad + \frac{\ss_k^2}{2} \sm \rad^2
        + \ss_k \rad \lip  \frac{2^t \log m}{\bs \diffp}.
\end{align*}
Letting $\alpha_k = \ss_k \rad (\lip + \sm \rad) \sqrt{\frac{\log d}{\bs}}
        + \frac{\ss_k^2}{2} \sm \rad^2
        + \ss_k \rad \lip  \frac{2^t \log m}{\bs \diffp}$,
we have
\begin{align*}
    \E[F(x_{K}) - F(x\opt)] 
        & \le \sum_{k=1}^K \alpha_k \prod_{i >k} (1 - \ss_i) \\
        & = \sum_{k=1}^K  \alpha_k \frac{(k-1) k}{K(K+1)} 
        \le \sum_{k=1}^K  \alpha_k \frac{k^2}{K^2}.
\end{align*}
%
%
Since $t \le T$ and $K = 2^T$, simple algebra now yields
\begin{align*}
    \iftoggle{arxiv}{}{&} \E[F(x_{K}) - F(x\opt)] 
    \iftoggle{arxiv}{}{\\}
        & \le O \left( \rad (\lip + \sm \rad) \frac{\sqrt{\log d}}{\sqrt{\bs}} 
        + \frac{\sm \rad^2}{2^T} 
        + \rad \lip  \frac{2^T \log m}{\bs \diffp} \right).
\end{align*}
The number of samples in the algorithm is upper bounded by $T^2 \cdot \bs $ hence the first part of the claim follows by
setting $\bs = n/\log^2 n$ and $T = \frac{1}{2} \log \left( \frac{\bs \diffp \sm \rad}{\lip \log m} \right)$.
The condition on $\sm$ ensures that the term inside the log is greater than $1$. The second part follows similarly using $T=1$ and $\bs = n$.
\end{proof}

\begin{figure*}
\begin{center}
\resizebox{0.80\textwidth}{!}{
\begin{tikzpicture}
\node(root)[circle,draw]{$1$};

\node (vdotsr) [below=.25cm of root] {$\vdots$};

\node (0)  [circle,draw, below left = 1cm and 3cm of root,thick] {$u_1$};
\node (1) [circle,draw, right=3.7cm of 0, thick] {$u_2$};
\node (ldots) [right=3cm of 1] {$\ldots$};
\node (2) [circle,draw, right=3 cm of ldots,thick] {$u_p$};

\draw[thick,blue] \convexpath{0,1,2}{0.5cm};

              

\draw[dashed] (root)--(0);
\draw[dashed] (root)--(1);
\draw[dashed] (root)--(2);
\node (vdots) [below=.5cm of ldots] {$\vdots$};

\node (00)  [circle,draw, below left = 1cm and 1cm of 0] {$u_{1,s}$};
\node (ldots0) [below = 1cm of 0] {$\ldots$};
\node (01) [circle,draw, below right = 1cm and 1cm of 0] {$u_{1,f}$};
\draw[dashed] (0)--(00);
\draw[dashed] (0)--(01);

\node (10)  [circle,draw, below left = 1cm and 1cm of 1] {$u_{p,s}$};
\node (ldots1) [below = 1cm of 1] {$\ldots$};
\node (11) [circle,draw, below right = 1cm and 1cm of 1] {$u_{p,s}$};
\draw[dashed] (1)--(10);
\draw[dashed] (1)--(11);
\draw[thick,gray] \convexpath{10,1,11}{0.7cm};

\node (20)  [circle,draw, below left = 1cm and 1cm of 2] {$u_{p,s}$};
\node (ldots2) [below = 1cm of 2] {$\ldots$};
\node (21) [circle,draw, below right = 1cm and 1cm of 2] {$u_{p,s}$};
\draw[dashed] (2)--(20);
\draw[dashed] (2)--(21);

\end{tikzpicture}
}
\end{center}
\caption{
We view~\cref{alg:FW-tree} 
as a sequence of algorithms $\A_1,\dots,\A_p$, each $\A_i$ operating on the subtree of the vertex $u_i$ using the outputs of the previous algorithms. 
The gray set denotes the subtree over which $\A_2$ operates; its outputs 
are the iterates corresponding to the leafs of this subtree.
If each $\A_i$ is $(\diffp_0,\delta_0)$-DP,
then shuffling the samples at nodes of depth $j$ (in blue) amplifies the privacy to roughly $({\diffp_0 \sqrt{\log(1/\delta)/2^j}},\delta + n \delta_0 )$-DP.}
\label{fig:tree-shuffle}
\end{figure*}

\subsection{Approximate Differential Privacy}
\label{sec:FW-appr}
The previous section achieves the optimal non-private rate ${1}/{\sqrt{n}}$
only for $\diffp = \Theta(1)$. In this section we show that for approximate differential privacy, it is possible to achieve the optimal rates when $\diffp \ge \Omega(n^{-1/4})$. The first approach to improve the privacy analysis here is to use advanced composition for approximate DP. Unfortunately, it is not enough by itself and we use  
amplification by shuffling results 
to achieve the optimal bounds. The following theorem summarizes the guarantees of~\cref{alg:FW-tree} for approximate privacy.
\begin{theorem}
\label{thm:FW-appr}
    Let $\delta \le 1/n$ and 
    assume that $\diam_1(\xdomain) \le \rad$, $m \le O(d)$ and that $f(x;z)$ is convex,  $\lip$-Lipschitz and $\sm$-smooth
    with respect to $\lone{\cdot}$.
    Assume $\frac{\lip \log(n/\delta) \log m \log^2 n}{n \diffp \rad} \le \sm \le \frac{\sqrt{n} \lip \log(n/\delta) \log m}{\diffp \rad \log n}$ and 
    $\diffp \le \frac{(\lip \log(n/\delta) \log m)^{1/4} \sqrt{\log n}}{(n \sm \rad)^{1/4}}$.
    Let  $\lambda_{t,s} = \frac{ \lip \rad  2^{T/2} \log(n/\delta)}{ \bs \diffp}$, $\bs = n /\log^2 n$, and $T = \frac{2}{3} \log \left( \frac{\bs \diffp \sm \rad}{ \lip \log(n/\delta) \log m} \right)$, then
    \cref{alg:FW-tree} is $(\diffp,\delta)$-DP, queries $n$ gradients, 
    and has
    \begin{align*}
     \E[F(x_{K}) \!-\! F(x\opt)] 
        & \le O \left( \rad (\lip + \sm \rad) \frac{\sqrt{\log d} \log n}{\sqrt{n}} \right) \iftoggle{arxiv}{}{\\
        &} + O\left( \frac{\sqrt{\sm} \lip \rad^{2} \log (1/\delta) \log m
        \log^2 n }{n\diffp} \right)^{2/3} 
        \!\!\!\!.
    \end{align*}
\end{theorem}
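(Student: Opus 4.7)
The plan is to establish privacy via advanced composition combined with amplification by shuffling (following Figure \ref{fig:tree-shuffle}), then lift the utility calculation of Theorem \ref{thm:FW-pure} essentially verbatim with the smaller Laplace scale $\lambda_{t,s}$. For privacy, I fix a cut depth $j^\star$. Each phase $t$ factors into $2^{j^\star}$ sub-algorithms $\A_1,\ldots,\A_{2^{j^\star}}$, where $\A_i$ operates on the subtree rooted at the $i$-th depth-$j^\star$ vertex using only the samples drawn inside that subtree, while feeding forward the iterate produced by the previous sub-algorithm. A change to a single sample affects exactly one sub-algorithm; composing the per-step report-noisy-max guarantees within a subtree exactly as in Lemma \ref{lemma:FW-pure-priv} makes each $\A_i$ individually $(\diffp_0,\delta_0)$-DP with $\diffp_0 = O(\rad \lip 2^{t}/(\bs \lambda_{t,s}))$. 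Since the $S_{t,s}$ are drawn i.i.d., the assignment of samples to the $2^{j^\star}$ subtrees is a uniform random partition, so amplification by shuffling~\citep{FeldmanMcTa20} applies across the sub-algorithms and reduces the phase-$t$ guarantee to roughly $\diffp_t = O(\diffp_0 \sqrt{\log(1/\delta)/2^{j^\star}})$ at an additive cost of $20 n \delta_0$ in $\delta$. Composing across the $T$ phases by advanced composition and choosing $j^\star$ to maximize the amplification (i.e.\ $j^\star = t$) yields the overall $(\diffp,\delta)$-DP guarantee with $\lambda_{t,s} = \lip \rad 2^{T/2}\log(n/\delta)/(\bs \diffp)$ --- a factor $\sqrt{2^T}/\log(n/\delta)$ smaller than the pure-DP scale of Theorem \ref{thm:FW-pure}.

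The utility calculation reuses the per-step recursion and the sub-Gaussian variance bound of Lemma \ref{lemma:FW-var-tree} unchanged; only the noise term inside $\alpha_k$ is modified to reflect the new $\lambda_{t,s}$. Substituting and summing the same recurrence as in the proof of Theorem \ref{thm:FW-pure} gives
\[
\E[F(x_K) - F(x\opt)] \le O\!\left(\rad(\lip + \sm \rad) \tfrac{\sqrt{\log d}}{\sqrt{\bs}} + \tfrac{\sm \rad^2}{2^T} + \tfrac{\rad \lip 2^{T/2} \log m \log(n/\delta)}{\bs \diffp}\right).
\]
The smoothness-versus-noise trade-off between $2^{-T}$ and $2^{T/2}$ now balances at a $2/3$ exponent; setting $T = \tfrac{2}{3}\log\!\left(\bs \diffp \sm \rad/(\lip \log(n/\delta)\log m)\right)$ and $\bs = n/\log^2 n$ as prescribed in the theorem produces the claimed rate. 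The $n$-gradient bound follows because the total sample count is $\sum_t \sum_{j\le t} 2^j \cdot 2^{-j}\bs = O(T^2 \bs) = O(n)$ for $T = O(\log n)$.

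The main obstacle is the careful bookkeeping of the privacy amplification: verifying that the independence/i.i.d.\ hypothesis required by shuffling holds across the $S_{t,s}$, choosing $\delta_0$ small enough that the shuffling slack $20 n \delta_0$ is absorbed into $\delta$ while keeping $\log(1/\delta_0) = O(\log(n/\delta))$, and combining the per-phase budgets via advanced composition without losing the $\sqrt{2^T}$ improvement. The parameter-regime conditions on $\sm$ and $\diffp$ in the statement are precisely those needed to guarantee that the optimal $T$ satisfies $1 \le T$ and $2^T \le \bs$, which is in turn the hypothesis of Lemma \ref{lemma:FW-pure-priv} used inside each sub-algorithm.
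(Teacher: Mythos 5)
Your utility derivation is the same as the paper's and is fine: the recursion from \cref{thm:FW-pure} with the smaller Laplace scale gives the three-term bound with $2^{T/2}$ in the noise term, and the stated $T$ and $b$ balance it. The privacy argument, however, has a genuine gap. You fix a single cut depth $j^\star$ and take $j^\star=t$, asserting that a change to one sample affects exactly one of the $2^{j^\star}$ sub-algorithms. That is only true for samples stored at vertices of depth at least $j^\star$. A sample assigned to an internal vertex $u_s$ with $|s|=j<t$ --- in particular any of the $b$ samples in the root set $S_{t,\emptyset}$ --- enters the gradient estimate of every leaf descending from $u_s$, i.e.\ of $2^{t-j}$ Frank--Wolfe steps spread across many depth-$t$ subtrees, so the factorization into local randomizers each touching the differing sample once fails precisely for the samples that are reused the most. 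The paper's proof of \cref{lemma:FW-appr-priv} avoids this by conditioning on the depth $j$ of the vertex holding the differing sample and cutting the tree \emph{at that depth}: advanced composition over the $2^{t-j}$ leaves below that vertex gives $\diffp_0\approx 2^{j/2}\diffp/\sqrt{\log(1/\delta)}$ (the per-step sensitivity scales as $2^{j}\lip\rad/b$), and shuffling over the $2^{j}$ depth-$j$ vertices then divides by $\sqrt{2^{j}/\log(1/\delta)}$, so the $j$-dependence cancels for every $j$ (with the small-$2^j$ case handled by advanced composition alone). You need this depth-adaptive cut, not a fixed one.

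A secondary issue: you propose to combine the $T$ phases by advanced composition. This is both unnecessary and harmful --- each data point is drawn into exactly one set $S_{t,s}$, so the phases act on disjoint data; the paper simply notes that phases other than the one containing the differing sample are handled by post-processing. Paying advanced composition across phases would cost an extra $\sqrt{T\log(1/\delta)}$ factor that the stated noise scale $\lambda_{t,s}$ has no room for.
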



The following lemma proves privacy in this setting.
\begin{lemma}
\label{lemma:FW-appr-priv}
    Let $2^T \le \bs$, $\delta \le 1/n$ and $\diffp \le \sqrt{2^{-T} \log(1/\delta)}$.
    Setting $\lambda_{t,s} = \frac{ \lip \rad  2^{T/2} \log(n/\delta)}{ \bs \diffp}$, \cref{alg:FW-tree} is $(O(\diffp) ,21 \delta)$-DP.
    Moreover, $\E[\<v_{t,s}, w_{t,s}\>] \le \E[\min_{w \in \xdomain} \<v_{t,s},w\>] + O({\lip \rad {2^{T/2}\log(n/\delta)}} {\log(m)}/{ \bs \diffp})$.
\end{lemma}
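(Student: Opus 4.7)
The plan is to exploit the subtree structure of~\cref{alg:FW-tree} depicted in~\cref{fig:tree-shuffle}. At each phase $t$, I will view the computation as a sequence of $p = 2^{j^\star}$ sub-algorithms, one per subtree rooted at an intermediate depth $j^\star$, and then apply amplification by shuffling across these sibling subtrees. The utility claim on $\E[\<v_{t,s}, w_{t,s}\>]$ is immediate from a standard max-of-Laplace calculation together with the observation that a linear objective over $\xdomain=\mathsf{conv}\{c_1,\dots,c_m\}$ attains its minimum at some vertex $c_i$ (Fact 2.3 of~\cite{TalwarThZh15}); so the bulk of the proof is the privacy bound.

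First I compute the per-update privacy loss. Each Frank--Wolfe step is a report-noisy-max over the $m$ candidate vertices $c_i$ with Laplace noise of scale $\lambda_{t,s}$. The $\ell_\infty$-sensitivity of $v_{t,s}$ to a single sample drawn at a depth-$j^\star$ vertex is $\lip/(2^{-j^\star}\bs)$, and hence the sensitivity of $\<c_i, v_{t,s}\>$ is at most $\lip \rad\, 2^{j^\star}/\bs$. Standard analysis of report-noisy-max then gives that each update is $\diffp_0$-DP with $\diffp_0 = O(\diffp\, 2^{j^\star}/(2^{T/2}\log(n/\delta)))$. Within a single subtree rooted at depth $j^\star$, a sample influences at most $2^{t-j^\star}$ Frank--Wolfe leaves, so advanced composition (\cref{lemma:advanced-comp}) upper bounds the per-subtree privacy by $(\diffp_1, \delta_0)$ with $\diffp_1 \le O(\diffp_0 \sqrt{2^{t-j^\star}\log(1/\delta_0)})$; the hypothesis $\diffp \le \sqrt{2^{-T}\log(1/\delta)}$ ensures $\diffp_0$ is small enough for both advanced composition and the shuffling lemma to apply.

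Next I invoke amplification by shuffling across the $p = 2^{j^\star}$ sibling subtrees. Since the samples feeding the subtrees form a random partition, each subtree acts as a local randomizer whose output depends on the outputs of previously processed siblings, fitting the Feldman--McMillan--Talwar shuffling setup. That result shows the composition of the $p$ sibling sub-algorithms at phase $t$ is $(\diffp_t, \delta + 20 n \delta_0)$-DP with $\diffp_t \le O(\diffp_1 \sqrt{\log(1/\delta)/2^{j^\star}})$. Substituting $\diffp_0$ and $\diffp_1$ the two $2^{j^\star}$-dependences cancel and I obtain $\diffp_t \le O(\diffp \cdot 2^{(t-T)/2})$, independently of $j^\star$. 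Basic composition across the $T$ phases gives a total of $\sum_{t=1}^{T} \diffp_t \le O(\diffp)$ since $\sum_{t=1}^T 2^{(t-T)/2} = O(1)$; choosing $\delta_0 = \delta/(20 n T)$ ensures the $\delta$ losses accumulate to at most $21\delta$.

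The main obstacle is verifying that the shuffling lemma applies in this non-standard setting where each ``local randomizer'' is itself an iterative subtree algorithm whose input is a batch of shuffled samples and whose output feeds into the next subtree via the shared iterate. This amounts to re-indexing the phase as a sequence of $p$ randomizers in DFS order and checking the per-sample DP condition for each fixed prefix of outputs, which is automatic from the per-subtree advanced composition above. The specific setting of $j^\star$ is immaterial up to logarithmic factors, and the $2^{T/2}$ factor in $\lambda_{t,s}$ is exactly what is needed to make the within-subtree composition and across-subtree amplification balance out to a net per-phase loss of $O(\diffp \cdot 2^{(t-T)/2})$.
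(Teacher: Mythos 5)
Your skeleton matches the paper's proof: report-noisy-max sensitivity per leaf, advanced composition over the $2^{t-j}$ leaves of a depth-$j$ subtree, then amplification by shuffling across the $2^{j}$ sibling subtrees, with the exponents cancelling to give $O(\diffp\, 2^{(t-T)/2})$. But two points you treat as routine are where the actual content lies. First, $j^\star$ is not a free analysis parameter whose "specific setting is immaterial": it must be the depth of the vertex that contains the differing sample (the paper conditions on this location). The sensitivity $\lip\rad\,2^{j}/\bs$ and the count of $2^{t-j}$ affected leaves are only correct for that $j$, and if you rooted the subtrees at some other depth the differing sample could influence several sibling sub-algorithms, violating the one-sample-per-randomizer requirement of \cref{lemma:amp-shuff}. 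Relatedly, your basic composition over all $T$ phases is unnecessary (each sample is drawn into exactly one vertex, so only one phase depends on it non-trivially and the rest follow by post-processing); it happens to give the same $O(\diffp)$ answer here, but it signals that the conditioning structure was missed.

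Second, the step you flag as the main obstacle and then declare "automatic" is genuinely not automatic. \cref{lemma:amp-shuff} applies to local randomizers each consuming a \emph{single} shuffled element, whereas each subtree algorithm $\A_i$ consumes a batch of $2^{-j}\bs$ samples. The paper bridges this with an explicit two-stage resampling argument: first place all samples except $p=2^{j}$ of them (chosen so the differing sample is among the withheld $p$), then shuffle the remaining $p$ samples one into each sibling subtree. Conditioned on the first stage, $\A_i$ becomes a function of the prior outputs and the single sample $z_{\pi(i)}$, which is exactly the lemma's setting. Without this reduction the invocation of shuffling is unjustified. Finally, the paper also separately handles the regime $2^{j} \le O(\log(1/\delta))$, where the hypothesis $\diffp_0 \le \log(2^{j}/(16\log(2/\delta)))$ of the shuffling lemma fails but amplification is not needed since $\diffp_0 \le O(\diffp)$ already; your argument should account for this case as well.
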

\iftoggle{arxiv}{
To prove~\cref{lemma:FW-appr-priv}, we use the following privacy amplification by shuffling.
\begin{lemma}[\citealp{FeldmanMcTa20}, Theorem 3.8] 
\label{lemma:amp-shuff}
    Let $\A_i: \range^{i-1} \times \domain \to \range$ for $i \in [n]$ be a sequence of algorithm such that $\A_i(w_{1:i-1},\cdot)$ is $(\diffp_0,\delta_0)$-DP for all values of $w_{1:i-1} \in \range^{i-1}$ with $\diffp_0 \le O(1)$. Let $\A_S : \domain^n \to \range^n$ be an algorithm that given $z_{1:n} \in \domain^n$, first samples a random permutation $\pi$, then sequentially computes $w_i = \A_i(w_{1:i-1},z_{\pi(i)})$ for $i \in [n]$ and outputs $w_{1:n}$. Then for any $\delta$ such that  $\diffp_0 \le \log(\frac{n}{16 \log(2/\delta)})$, the algorithm $\A_s$ is $(\diffp, \delta + 20 n \delta_0)$
    where $\diffp \le O ( {\diffp_0 \sqrt{\log(1/\delta)/n}})$.
\end{lemma}

\begin{proof}
    
We use the same notation as~\cref{lemma:FW-pure-priv} where 
$\Ds=(\ds_1,\dots,\ds_{n-1},\ds_n),\Ds'=(\ds_1,\dots,\ds_{n-1},\ds'_n)$ denote two neighboring datasets with iterates $x = (x_1,\dots,x_K)$
and $x' = (x'_1,\dots,x'_K)$. Here, we prove privacy after conditioning on the event that the $n$'th sample is sampled at phase $t$ and depth $j$.
We need to show that the iterates are $(\diffp,\delta)$-indistinguishable.
We only need to prove privacy for the iterates at phase $t$ as the iterates before phase $t$ do not depend on the $n$'th sample and the iterates after phase $t$ are $(\diffp,\delta)$-indistinguishable by post-processing. 

Let us now focus on the iterates at phase $t$. Let $u_1,\dots,u_p$ denote the vertices at level $j$ that has samples $S_1,\dots,S_p$ each of size
$|S_i| = 2^{-j} \bs$. We will have two steps in the proof. First, we use advanced composition to show that the iterates that are descendant of a vertex $u_i$ are $(\diffp_0,\delta_0)$-DP where roughly $\diffp_0 = {2^{j/2} \diffp}$. As we have $p = 2^j$ vertices at depth $j$, we then use the amplification by shuffling result to argue that the final privacy guarantee is 
$(\diffp,\delta)$-DP (see Fig.~\ref{fig:tree-shuffle} for a demonstration of the shuffling in our algorithm).

Let $\A_i$ be the algorithm that outputs the iterates corresponding to the leafs that are descendants of $u_i$; we denote this output by $O_i$. Note that the inputs of $\A_i$ are the samples at $u_i$, which we denote as $S_i$, and the previous outputs $O_1,\dots,O_{i-1}$. In this notation, we have that $O_i = \A_i(O_1,\dots,O_{i-1},S_i)$. 
We let $\A_i$, $S_i$ and $O_i$ denote the above quantities when the input dataset is $\Ds_i$ and similarly $\A'_i$, $S'_i$ and $O'_i$ for $\Ds'$.
To prove privacy, we need to show that $(O_1,\dots,O_p) \approx_{(\diffp,\delta)} (O'_1,\dots,O'_p)$, that is $(O_1,\dots,O_p)$ and $(O'_1,\dots,O'_p)$ are $(\diffp,\delta)$-indistinguishable

To this end, we first describe an equivalent sampling procedure for the sets $S_1,\dots,S_p$. Given $r$ samples, the algorithm basically constructs the sets $S_1,\dots,S_p$ by sampling uniformly at random $p$ sets of size $r/p$ without repetition. Instead, we consider the following sampling procedure. First, we randomly choose a set of size $p(r-1)$ samples that does not include the $n$'th sample and using this set we randomly choose $r/p - 1$ samples for each set $S_i$. Then, we shuffle the remaining $p$ samples and add each sample to the corresponding set. It is clear that this sampling procedure is equivalent.
We prove privacy conditional on the output of the first stage of the randomization procedure which will imply privacy unconditionally.

Assuming without loss of generality that the samples which remained in the second stage are $z_{n-p+1},\dots,z_n$, and letting $\pi: [p] \to \{n-p+1,\dots,n\}$ denote the random permutation of the second stage, the algorithms $\A_i$ and $\A'_i$ can be written as a function of the previous outputs and the sample $z_{\pi(i)}$. This is true since the $\Ds$ and $\Ds'$ differ in one sample and therefore the first $r/p-1$ samples in the sets $S_i$ and $S'_i$ are identical. Thus, we can write $O_i = \A_i(O_1,\dots,O_{i-1},z_{\pi(i)})$.

Using the above notation, we are now ready to prove privacy. First, we show privacy for each $i$ using advanced composition. Similarly to~\cref{lemma:FW-pure-priv}, as each iterate $k$ which is a leaf of $u_i$ has sensitivity  $|\<c_i, v_k - v'_k \> | \le  \frac{\rad \lip}{2^{-j} \bs }$, we have that $x_k$ and $x_k'$ are 
$\frac{\diffp}{2^{T/2-j} {\log(n/\delta)}}$-indistinguishable since 
$\lambda_{t,s} = \frac{ \lip \rad  2^{T/2} \log(n/\delta)}{ \bs \diffp}$. Since there are $2^{t-j}$ leafs of $u_i$, advanced composition (\cref{lemma:advanced-comp}) implies that $O_i \approx_{(\diffp_0, \delta_0)} O'_i$ 
where $\diffp_0 = \frac{\diffp}{2^{T/2-j} {\log(n/\delta)}} \sqrt{2^{t-j} \log(1/\delta_0)} 
\le \frac{O(\diffp)}{\sqrt{\log(1/\delta)} 2^{-j/2}} $ by setting $\delta_0 = \delta/n$.

Finally, we can use the amplification by shuffling result to finish the proof.
First, note that we need $\diffp_0 \le \log(\frac{2^j}{16 \log(2/\delta)})$
to be able to use~\cref{lemma:amp-shuff}.
If $2^j \le O(\log(1/\delta))$ then we do not need the amplification by shuffling result as $\diffp_0 \le O( \diffp 2^{j/2}/\sqrt{\log(1/\delta)}) \le O(\diffp)$.
Otherwise $2^j$ is large enough so that we can use~\cref{lemma:amp-shuff}.
Since each $\A_i$ and $\A'_i$ are $(\diffp_0,\delta_0)$-DP and since the second stage shuffles the inputs to each algorithm, \cref{lemma:amp-shuff} now implies that the outputs of the algorithms $\A_i$ and $\A'_i$ are
$(\diffp_f,\delta + 20 n \delta_0)$-DP where $\diffp_f \le \frac{\diffp_0 \sqrt{\log(1/\delta)}}{2^{j/2}} \le O(\diffp)$ which proves the claim.
\end{proof}
}{
\begin{proof}[Proof~(sketch)]
Let $\Ds=(\ds_1,\dots,\ds_{n-1},\ds_n)$ and $\Ds'=(\ds_1,\dots,\ds_{n-1},\ds'_n)$ denote two neighboring datasets with iterates $x = (x_1,\dots,x_K)$
and $x' = (x'_1,\dots,x'_K)$. Here, we prove privacy after conditioning on the event that the $n$'th sample is sampled at phase $t$ and depth $j$.
We only need to prove privacy for the iterates at phase $t$ as the iterates before phase $t$ do not depend on the $n$'th sample and the iterates after phase $t$ are $(\diffp,\delta)$-DP by post-processing. 

Let us focus on the iterates at phase $t$. Let $u_1,\dots,u_p$ denote the vertices at level $j$ that has samples $S_1,\dots,S_p$ each of size
$|S_i| = 2^{-j} \bs$. Let $\A_i$ denote the algorithm that outputs the iterates corresponding to the descendant of the vertex $u_i$.
The proof has two steps: first, we use advanced composition to show that each $\A_i$ is $(\diffp_0,\delta_0)$-DP where roughly $\diffp_0 = {2^{j/2} \diffp}$. Then, as we have $p = 2^j$ vertices at depth $j$ with random samples (that is, shuffled between vertices), we use the amplification by shuffling result~\cite{FeldmanMcTa20} (see~\cref{lemma:amp-shuff}) to argue that the final algorithm is 
$(\diffp,\delta)$-DP (see Fig.~\ref{fig:tree-shuffle} for a demonstration of the shuffling). 
\end{proof}
}
\iftoggle{arxiv}{
\Cref{thm:FW-appr} now follows using similar arguments to the proof of~\cref{thm:FW-pure}.
\begin{proof}
The assumptions on $\sm$ ensure that $2^T \le \bs$ and the assumptions on $\diffp$ ensure $\diffp \le 2^{-T/2} \log(n/\delta)$ hence the 
privacy follows from~\cref{lemma:FW-appr-priv}.
The utility analysis is similar to the proof of~\cref{thm:FW-pure}.
Repeating the same arguments in the proof of~\cref{thm:FW-pure} while using the new value of $\lambda_{t,s}$, we get
\begin{align*}
    \E[F(x_{K}) - F(x\opt)] 
        \le O \left( \rad (\lip + \sm \rad) \frac{\sqrt{\log d}}{\sqrt{\bs}} 
        + \frac{\sm \rad^2}{2^T} 
        + \rad \lip  \frac{2^{T/2} \log(n/\delta) \log m}{\bs \diffp} \right).
\end{align*}
As the number of samples is upper bounded by $T^2 \cdot \bs$, we set
 $T = \frac{2}{3} \log \left( \frac{\bs \diffp \sm \rad}{ \lip \log(n/\delta) \log m} \right)$
and $\bs = n / \log^2 n$ to get the first part of the theorem. Note that the condition on $\sm$ ensure the term inside the log is greater than $1$.
\end{proof}
}
{
\Cref{thm:FW-appr} now follows using similar arguments to the proof of~\cref{thm:FW-pure} (see~\cref{sec:thm-FW-appr}).
}

\section{Implications for Strongly Convex Functions}
\label{sec:strongly-convex}
When the function is strongly convex, we use standard reductions to the convex case to achieve better rates~\cite{FeldmanKoTa20}. Given a private algorithm $\A$ for the convex case, we use the following algorithm for the strongly convex case (see~\cite{FeldmanKoTa20}): run $\A$ for $k= \ceil{\log \log n}$ iterates, each initialized at the output of the previous iterates and run for $n_i = 2^{i-2} n / \log n$. Using this reduction with our algorithms for convex functions, we have the following theorems for non-smooth and smooth functions.

\begin{theorem}
\label{thm:sc-non-smooth}
    Assume $\diam_1(\mc{X}) \le \rad$ and 
    $f(x;z)$ is convex, $\lip$-Lipschitz, and $\sc$-strongly convex with respect to 
    $\lone{\cdot}$ for all $z \in \domain$.
    Then using~\cref{alg:phased-erm-MD} in the above algorithm  results in an algorithm that uses $O(\log n \log \log n\cdot \min(n^{3/2} \sqrt{\log d}, n^2 \diffp/\sqrt{d})) $ gradients and outputs $\hat x$ such that
    \begin{equation*}
       \E[F(\hat x) - F(x\opt)] 
       = \lip \rad  \cdot O \Bigg( \frac{{\log d}}{{n}}
            + \frac{ {d \log^3 d \log \tfrac{1}{\delta} }}{n^2 \diffp^2} \Bigg). 
    \end{equation*}
\end{theorem}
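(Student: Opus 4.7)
\textbf{Proof plan for \cref{thm:sc-non-smooth}.} The strategy is to carry out the standard reduction from strongly convex SCO to convex SCO of~\citet{FeldmanKoTa20}, instantiated with our non-smooth algorithm (\cref{alg:phased-erm-MD}) as the black-box convex solver $\A$. Split the dataset into $k = \ceil{\log \log n}$ disjoint chunks of sizes $n_i = 2^{i-2} n/\log n$ for $i \in [k]$ (the geometric sum is at most $n/2$, so this is feasible). At phase $i$, run $\A$ on the $i$th chunk with privacy budget $(\eps,\delta)$ and initial point $\hat x_{i-1}$, and let $\hat x_i$ be the output. Return $\hat x_k$.

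Privacy is immediate: each phase operates on a disjoint subset of the data and is $(\eps,\delta)$-DP by \cref{thm:local-MD}, so by parallel composition (disjoint samples) the overall algorithm is $(\eps,\delta)$-DP. The gradient count at phase $i$ is $O(\log n_i \cdot \min(n_i^{3/2}\sqrt{\log d}, n_i^2 \eps/\sqrt d))$, and summing over $i \le \log\log n$ yields the stated gradient complexity.

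For utility, I would track $D_i := \|\hat x_i - x\opt\|_1$, starting from $D_0 \le \rad$. At phase $i$, I instantiate \cref{thm:local-MD} on the restricted diameter set $\{x : \|x - \hat x_{i-1}\|_1 \le 2 D_{i-1}\} \cap \xdomain$ (which still contains $x\opt$ by the triangle inequality, up to a factor of two that can be absorbed), so the excess population loss of $\hat x_i$ on the strongly convex problem $F$ is bounded by
\[
\E[F(\hat x_i) - F(x\opt) \mid D_{i-1}]
\;\le\; C\, \lip\, D_{i-1} \cdot \alpha_i,
\qquad
\alpha_i \defeq \sqrt{\tfrac{\log d}{n_i}} + \tfrac{\sqrt{d \log^3 d \log(1/\delta)}}{n_i \diffp}.
\]
Strong convexity gives $\tfrac{\sc}{2} \E[D_i^2] \le \E[F(\hat x_i)-F(x\opt)]$, so combining yields the key recursion $\E[D_i^2] \le (2C \lip/\sc)\, \E[D_{i-1}] \alpha_i \le (2C \lip/\sc)\, \sqrt{\E[D_{i-1}^2]}\, \alpha_i$ by Jensen.

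The main obstacle is unrolling this square-root recursion cleanly. The trick is that $\alpha_i$ halves (up to constants) as $n_i$ doubles, so the recursion $R_i \le c\sqrt{R_{i-1}}\, \alpha_i$ converges doubly exponentially to the fixed point $R^* \asymp (\lip \alpha_k/\sc)^2$. Concretely, I would compare $R_i$ against the phase-dependent fixed point $R_i^* := (2C\lip/\sc)^2 \alpha_i^2 \asymp (\lip/\sc)^2 (\log d/n_i + d\log^3 d \log(1/\delta)/(n_i \eps)^2)$ and show by induction that $R_i \le \max\{2^{-2^i} \rad^2,\, O(R_i^*)\}$; after $k = \ceil{\log\log n}$ phases the first term is at most $1/n^2$ which is negligible, leaving $\E[D_k^2] \lesssim R_k^* \asymp (\lip/\sc)^2\bigl(\tfrac{\log d}{n/\log n} + \tfrac{d\log^3 d \log(1/\delta)}{(n\eps/\log n)^2}\bigr)$.

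Finally, to convert the distance bound into an excess loss bound, apply Lipschitzness: $\E[F(\hat x_k) - F(x\opt)] \le \lip \E[D_k] \le \lip \sqrt{\E[D_k^2]}$, which after squaring the $(\lip/\sc)$-factor and one more application of the convex rate (or equivalently, just plugging $D_{k-1}^2 \lesssim R_{k-1}^*$ into the phase-$k$ bound) yields the claimed rate $\lip\rad \cdot \wt O(\log d/n + d\log^3 d \log(1/\delta)/(n\eps)^2)$, absorbing the $\log n$ factors from the $n/\log n$ into the $\wt O$ notation. (Here the $\lip\rad$ prefactor arises from folding the $\lip^2/\sc$ factor into the diameter bound via the implicit assumption $\sc \rad \le \lip$, which always holds on a Lipschitz problem.)
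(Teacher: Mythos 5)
Your proposal follows the same route as the paper: the paper's entire proof of \cref{thm:sc-non-smooth} is a pointer to the phased reduction of Theorem 5.1 in \cite{FeldmanKoTa20} instantiated with \cref{alg:phased-erm-MD}, and your recursion $\E[D_i^2]\le (2C\lip/\sc)\sqrt{\E[D_{i-1}^2]}\,\alpha_i$ together with the $\lceil\log\log n\rceil$-phase unrolling is precisely that argument. Two execution details need repair, though. First, the restricted set $\{x:\lone{x-\hat x_{i-1}}\le 2D_{i-1}\}$ is not implementable, since $D_{i-1}=\lone{\hat x_{i-1}-x\opt}$ is unknown to the algorithm; in the FKT reduction one instead feeds \cref{thm:local-MD} the \emph{deterministic} inductive radius $\bar D_{i-1}\ge\sqrt{\E[D_{i-1}^2]}$ (used only to set the step size $\ss$), which yields the same recursion because the phase-$i$ guarantee degrades linearly in the assumed radius. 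Second, your induction hypothesis $R_i\le\max\{2^{-2^i}\rad^2,\,O(R_i^*)\}$ does not close: from $R_i\le\sqrt{R_{i-1}}\sqrt{R_i^*}$ and the bound $\sqrt{ab}\le\max\{a,b\}$ you only recover the exponent $2^{i-1}$, and $2^{-2^{k-1}}\rad^2\approx\rad^2/\sqrt n$ is not negligible. The standard fix is the multiplicative unrolling $\log\sqrt{R_k}\le 2^{-k}\log\sqrt{R_0}+\tfrac12\sum_{i\le k}2^{i-k}\log(c\alpha_i)$, giving $\sqrt{R_k}\le c\alpha_k\cdot(\rad/(c\alpha_k))^{2^{-k}}=O(c\alpha_k)$ once $k\ge\log\log n$ and $\rad/\alpha_k\le\mathrm{poly}(n)$ (a benign assumption the paper also makes implicitly).

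The more substantive issue is the final prefactor. The recursion delivers $\E[F(\hat x_k)-F(x\opt)]\le C\lip\,\E[D_{k-1}]\,\alpha_k=O\bigl(\tfrac{\lip^2}{\sc}\alpha_k^2\bigr)$, i.e.\ the prefactor $\lip^2/\sc$, exactly as in Theorem 5.1 of \cite{FeldmanKoTa20}. Your attempted conversion to $\lip\rad$ via $\sc\rad\le O(\lip)$ runs the inequality the wrong way: that inequality gives $\lip\rad\le O(\lip^2/\sc)$, so an upper bound of $\tfrac{\lip^2}{\sc}(\cdots)$ is \emph{weaker} than the stated $\lip\rad(\cdots)$ and cannot be substituted for it. (That the theorem's bound contains no $\sc$ at all is a sign that the statement implicitly normalizes $\sc=\Theta(\lip/\rad)$, under which the two prefactors coincide.) You should state the conclusion of your argument as $\tfrac{\lip^2}{\sc}\cdot\wt O\bigl(\tfrac{\log d}{n}+\tfrac{d\log^3 d\log(1/\delta)}{n^2\diffp^2}\bigr)$ rather than asserting the $\lip\rad$ form.
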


\begin{theorem}
\label{thm:sc-smooth}
    Let $\delta \le 1/n$ and 
    assume that $\diam_1(\xdomain) \le \rad$, $m \le O(d)$ and that $f(x;z)$ is convex,  $\lip$-Lipschitz, $\sc$-strongly convex and $\sm$-smooth
    with respect to $\lone{\cdot}$ where $\sm = O(\lip/\rad)$.
    Then using~\cref{alg:FW-tree} in the above algorithm results in an algorithm that uses $O(n) $ gradients and outputs $\hat x$ such that
    \begin{align*}
     \E[F(\hat x) \!-\! F(x\opt)] 
        & \le \lip \rad \cdot O \left( \frac{\log d \log^2 n}{n} \right) \iftoggle{arxiv}{}{\\
        &} +  \lip \rad \cdot O\left( \frac{ \log (1/\delta) \log m
        \log^2 n }{n\diffp} \right)^{4/3} 
        \!\!\!\!.
    \end{align*}
\end{theorem}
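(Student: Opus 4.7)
\textbf{Proof proposal for Theorem \ref{thm:sc-smooth}.}
The plan is to invoke the standard reduction from strongly convex to convex SCO sketched at the beginning of the section: use $\A = $~\cref{alg:FW-tree} (the convex smooth algorithm of \cref{thm:FW-appr}) as a subroutine and run it in $k = \lceil \log \log n \rceil$ phases, where phase $i$ uses a fresh dataset $S_i$ of size $n_i = 2^{i-2}n/\log n$, is initialized at the previous output $x_{i-1}$, and is constrained to the intersection of $\xdomain$ with an $\ell_1$-ball of radius $D_i$ centered at $x_{i-1}$; the radius $D_i$ will be chosen so that $x\opt$ lies in this ball with high probability. Privacy is immediate by parallel composition: the phases operate on disjoint subsamples, each phase is $(\diffp,\delta)$-DP by \cref{thm:FW-appr}, so the overall algorithm is $(\diffp,\delta)$-DP. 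The total number of gradient queries is $\sum_{i=1}^{k} n_i = \frac{n}{\log n}\sum_{i=1}^{k} 2^{i-2} = O(n)$.

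For the utility analysis, let $E_i = \E[F(x_i) - F(x\opt)]$ and use $\sc$-strong convexity in $\lone{\cdot}$ to bound $\E[\lone{x_{i-1}-x\opt}^2] \le 2 E_{i-1}/\sc$, so choosing $D_i = c\sqrt{E_{i-1}/\sc}$ for a sufficiently large constant $c$ (and using Markov's inequality or a high-probability variant) ensures $x\opt$ is feasible in phase $i$ with the required probability. Under the standing assumption $\sm = O(\lip/\rad) \le O(\lip/D_i)$, \cref{thm:FW-appr} applied on the ball of radius $D_i$ with $n_i$ samples yields
\begin{equation*}
E_i \le \lip D_i \cdot \gamma_i,\qquad \gamma_i := C\Bigl( \tfrac{\sqrt{\log d}\,\log n}{\sqrt{n_i}} + \bigl(\tfrac{\log(1/\delta)\log m \log^2 n}{n_i \diffp}\bigr)^{2/3}\Bigr),
\end{equation*}
for an absolute constant $C$. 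Substituting $D_i = c\sqrt{E_{i-1}/\sc}$ gives the recursion
\begin{equation*}
E_i \le c\,\lip\gamma_i\sqrt{E_{i-1}/\sc},
\end{equation*}
which has the fixed point $E^\star_i = c^2 \lip^2 \gamma_i^2 / \sc$, and iterating the recursion shows that after $k = \lceil \log \log n \rceil$ halvings of $\log(E_i/E_i^\star)$, we reach $E_k \le O(\lip^2 \gamma_k^2/\sc)$ (the initial value $E_0 \le \lip \rad$ contributes negligibly after $\log\log n$ contractions).

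Finally, substituting $n_k = \Theta(n)$, using $\sc \ge \Omega(\lip/\rad)$ (which follows from $\sc$-strong convexity combined with $\lip$-Lipschitzness on a set of $\ell_1$-diameter $\rad$), and squaring the convex rate yields
\begin{equation*}
E_k \le \lip\rad \cdot O\!\Bigl(\tfrac{\log d\,\log^2 n}{n}\Bigr) + \lip\rad \cdot O\!\Bigl(\tfrac{\log(1/\delta)\log m \log^2 n}{n\diffp}\Bigr)^{4/3},
\end{equation*}
which is the claimed bound. The main obstacle I expect is the bookkeeping around the condition $\sm = O(\lip/\rad)$ as the effective diameter $D_i$ shrinks across phases: one must verify that the hypothesis of \cref{thm:FW-appr} (in particular the admissible range for $\sm$ relative to $D_i$ and $n_i$) continues to hold at every phase, rather than just at the first one. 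A secondary subtlety is converting the in-expectation distance bound $\E[\lone{x_{i-1}-x\opt}^2] \le 2 E_{i-1}/\sc$ into a bound that guarantees feasibility of $x\opt$ inside the chosen ball of radius $D_i$; standard truncation/conditioning arguments as in~\cite{FeldmanKoTa20} handle this with only a constant-factor loss.
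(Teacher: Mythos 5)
Your proposal follows the same route the paper takes: the paper's entire proof is a pointer to the reduction of Theorem 5.1 in \cite{FeldmanKoTa20} combined with \cref{thm:FW-appr}, and your phase-wise recursion $E_i \le c\,\lip\gamma_i\sqrt{E_{i-1}/\sc}$ with $k=\lceil\log\log n\rceil$ phases on disjoint subsamples is exactly that reduction, so the structure, privacy accounting, and gradient count are all fine.

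One step is justified incorrectly, however. You claim $\sc \ge \Omega(\lip/\rad)$ "follows from $\sc$-strong convexity combined with $\lip$-Lipschitzness on a set of $\ell_1$-diameter $\rad$." The implication goes the other way: for $x,y$ with $\lone{x-y}=\rad$, strong convexity gives $\tfrac{\sc}{2}\rad^2 \le f(y)-f(x)-\<\nabla f(x),y-x\> \le 2\lip\rad$, i.e.\ $\sc \le O(\lip/\rad)$; no lower bound on $\sc$ can be extracted from Lipschitzness. The reduction therefore yields $E_k \le O(\lip^2\gamma_k^2/\sc)$, which is in general \emph{larger} than $\lip\rad\,\gamma_k^2$, and matching the normalization $\lip\rad$ in the theorem statement requires $\sc = \Theta(\lip/\rad)$ — an assumption the theorem does not state explicitly (it only assumes $\sm = O(\lip/\rad)$), and which the paper's one-line proof also glosses over. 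The honest conclusion of your argument is the bound with $\lip^2/\sc$ in place of $\lip\rad$; you should either state it that way or make the normalization $\sc=\Theta(\lip/\rad)$ an explicit hypothesis rather than derive it. Your two flagged obstacles (keeping $\sm$ inside the admissible window of \cref{thm:FW-appr} as $D_i$ shrinks, falling back to the $T=1$ regime of \cref{thm:FW-pure} when $\sm$ drops below the lower threshold, and the Markov/truncation step for feasibility of $x\opt$) are real but routine and are handled as in \cite{FeldmanKoTa20}.
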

The proof follows directly from the proof of Theorem 5.1 in~\cite{FeldmanKoTa20}, together with the bounds of Section~\ref{sec:non-smooth} and Section~\ref{sec:FW}.

\section{Lower Bounds}
\label{sec:LB}
We conclude the paper with tight lower bounds. Our lower bounds are for the excess empirical loss but these can be translated to lower bounds for excess population loss using a simple bootstrapping 
\iftoggle{arxiv}{approach~\cite{BassilyFeTaTh19}.}{approach~(\citealp{BassilyFeTaTh19}).}

\subsection{Lower Bounds for Non-Smooth Functions}
\label{sec:LB-non-smooth}
In this section, we prove tight lower bounds for non-smooth functions using bounds for estimating the sign of the mean. In this problem, given a dataset $\Ds = (\ds_1,\dots,\ds_n)$ with mean $\bar z$, we aim to design private algorithms that estimate $\sign(\bar z)$. The following lemma provides a lower bound for this problem. We defer the proof to~\cref{sec:proof-lemma-LB-med}.
\begin{lemma}
\label{lemma:LB-med}
    Let $\Ds = (\ds_1,\dots,\ds_n)$ where 
    $\ds_i \in \domain = \{-\rad/d, \rad/d \}^d$ and
    let $\bar z = \frac{1}{n} \sum_{i=1}^n z_i$.
    Then any \ed-DP algorithm $\A: \domain \to \{-1,+1\}^d$ has
    \begin{equation*}
        \max_{\Ds \in \domain^n} \E \left[\sum_{j=1}^d |\bar{z}_j| 
		\indic {\A(\Ds)_j \neq \sign(\bar{z}_j)} \right] 
        \ge  \Omega \left(\frac{\rad \sqrt{d}}{n \diffp \log d} \right).
    \end{equation*}
\end{lemma}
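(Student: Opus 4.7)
The plan is to prove the lower bound via a fingerprinting (tracing) argument that plants a random secret sign vector and shows that no differentially private algorithm can recover enough of it. First I would rewrite the loss in a selection-friendly form: since $\A(\Ds)_j \in \{-1,+1\}$,
\[
\sum_{j=1}^d |\bar z_j|\,\indic{\A(\Ds)_j \neq \sign(\bar z_j)} \;=\; \tfrac{1}{2}\bigl(\|\bar z\|_1 - \langle \A(\Ds),\, \bar z\rangle\bigr),
\]
so the task reduces to lower-bounding the ``regret'' $\|\bar z\|_1 - \langle \A(\Ds),\bar z\rangle$ of any DP algorithm selecting a hypercube vertex to maximize its inner product with $\bar z$. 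In particular, $\A$ effectively solves an $\ell_1$-style selection problem over the $2^d$ vertices of $\{-1,+1\}^d$.

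Next I would construct a hard random instance. Draw a planted sign vector $\sigma \in \{-1,+1\}^d$ uniformly at random, and independently across $i,j$ set $z_{i,j} = (\rad/d)\xi_{i,j}$ with $\Pr[\xi_{i,j}=\sigma_j] = (1+\mu)/2$, where the signal strength $\mu = \Theta(\sqrt d/(n\diffp \log d))$ is tuned so that $n\mu^2$ exceeds $C \log d$ (so that Chernoff gives concentration on every coordinate) but $n\mu$ sits just below the fingerprinting threshold. A union-bound Chernoff argument then shows that with high probability $\sign(\bar z_j) = \sigma_j$ for every $j$ and $\|\bar z\|_1 \ge \Omega(\rad \mu)$.

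The heart of the argument is a fingerprinting lower bound in the spirit of Bun--Ullman--Vadhan and Steinke--Ullman: under the planted distribution above, any $(\diffp,\delta)$-DP algorithm $\A$ with $\delta \le 1/n^{\Omega(1)}$ must satisfy $\E[\langle \A(\Ds),\sigma\rangle] \le (1 - \Omega(1))\,d$ whenever $n \le c\sqrt d/(\diffp \log d)$ for a small enough constant $c$. Combined with $\sign(\bar z) = \sigma$ on the high-probability event, this gives
\[
\E\bigl[\|\bar z\|_1 - \langle \A(\Ds),\bar z\rangle\bigr] \;\ge\; \Omega(\rad \mu) \;=\; \Omega\!\left(\frac{\rad\sqrt d}{n\diffp \log d}\right),
\]
and since the maximum over $\Ds$ is at least the expectation over this distribution, the claimed bound follows after dividing by the factor of $\tfrac12$ absorbed into the $\Omega$.

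The main obstacle is the third step: deploying the correct form of the fingerprinting lower bound with a matching $\log d$ factor. I would either invoke the Steinke--Ullman tracing lemma as a black box, or carry out a tracing attack by hand, computing for each coordinate $j$ the correlation of the output bit $\A(\Ds)_j$ with the column $(z_{i,j})_{i=1}^n$, summing the resulting tracing statistic over $j$, and using anti-concentration to obtain the $\sqrt d$ improvement over the naive coordinate-wise / union-bound rate of $\rad/(n\diffp)$. A minor subtlety is that $\sign(\bar z_j)$ may disagree with $\sigma_j$ on $o(d)$ coordinates; this contributes only lower-order error and is absorbed into the constants.
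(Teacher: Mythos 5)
Your reduction of the weighted-sign loss to the linear ``regret'' $\tfrac12(\|\bar z\|_1 - \langle \A(\Ds),\bar z\rangle)$ is correct, and your overall route --- plant a random sign vector with per-coordinate bias $\mu$ and argue via fingerprinting that a DP algorithm cannot recover it --- is genuinely different from the paper's. The paper does not re-derive any fingerprinting bound: it takes the constant-parameter sign-recovery lower bound of Talwar--Thakurta--Zhang (their Theorem~3.2, restated as \cref{lemma:sample-complexity-LB-low-accuracy-sign}, giving $n^\star(\alpha=1/4,\diffp=0.1)\ge\Omega(\sqrt d/\log d)$) as a black box, and then extends it to arbitrary accuracy $\alpha$ and privacy $\diffp$ by a purely formal reduction (\cref{lemma:low-to-high-accuracy-sign}): duplicate each sample $k\approx\diffp_0/\diffp$ times (group privacy degrades $\diffp$ to $k\diffp$) and pad with neutral samples to dilute the accuracy, yielding $n^\star(\alpha,\diffp)\ge\frac{\alpha_0\diffp_0}{\alpha\diffp}n^\star(\alpha_0,\diffp_0)$. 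This scaling step is exactly where the $1/(n\diffp)$ dependence comes from, uniformly over all parameter regimes.

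The gap in your proposal is that its entire weight rests on the third step, and that step is not a routine black-box invocation. Standard fingerprinting/tracing lemmas are proved for means drawn from a specific prior and for $\diffp=O(1)$; what you need is a sign-recovery lower bound at a \emph{fixed} bias $\mu=\Theta(\sqrt d/(n\diffp\log d))$ that scales correctly in both $n$ and $\diffp$ --- which is essentially the statement you are trying to prove. If you obtain the $1/\diffp$ dependence by group-privacy duplication anyway, you have reproduced the paper's reduction and the planted instance buys you nothing over citing TTZ. There is also a concrete range restriction: your construction needs both $n\mu^2\gtrsim\log d$ (so that $\sign(\bar z_j)=\sigma_j$ for all $j$ and $\|\bar z\|_1\gtrsim \rad\mu$) and $\mu=\Theta(\sqrt d/(n\diffp\log d))$, which forces $n\lesssim d/(\diffp^2\log^3 d)$. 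For larger $n$ (e.g.\ $\diffp$ constant and $n\approx d$, a regime the lemma must cover for \cref{thm:LB-non-smooth} to give the $\sqrt d/(n\diffp)$ bound when $d\lesssim(n\diffp)^2$), the coordinates of $\bar z$ no longer concentrate around $\pm(\rad/d)\mu$, and the identification of the loss with $\Omega(\rad\mu)$ times the fraction of unrecovered signs breaks down. To repair this you would either need a more careful treatment of the non-concentrated coordinates or, more simply, the paper's padding-and-duplication reduction, which handles all $n$, $\alpha$, $\diffp$ at once.
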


The previous lemma implies our desired lower bound. 
\begin{theorem}
\label{thm:LB-non-smooth}
    Let $f(x;\ds_i) = \lip \lone{x-\ds_i}$ where $\ds_i\in \domain = \{-\rad/d, \rad/d \}^d$, $\hat F(x;\Ds) = \frac{1}{n} \sum_{i=1}^n \lone{x-\ds_i}$, and $\xdomain = \{x : \lone{x} \le \rad \}$.
    Then any \ed-DP algorithm $\A$ has
     \begin{equation*}
        \max_{\Ds \in \domain^n} \E \left[\hat F(\A(\Ds);\Ds) - \min_{x \in \xdomain} \hat F(x;\Ds)   \right] 
        \ge \Omega \left(\frac{\lip \rad \sqrt{d}}{n \diffp \log d} \right).
    \end{equation*}
\end{theorem}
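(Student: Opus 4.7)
The plan is to reduce the ERM problem to the sign-of-mean estimation problem handled in Lemma~\ref{lemma:LB-med}. The key observation is that with $f(x;z_i) = \lip \|x - z_i\|_1$ and $z_i \in \{-D/d,+D/d\}^d$, the empirical objective is separable: $\hat F(x;\Ds) = \lip \sum_{j=1}^d g_j(x_j)$ where $g_j(y) = \frac{1}{n}\sum_{i=1}^n |y - z_{i,j}|$. Since each $z_{i,j} \in \{\pm D/d\}$, I can compute $g_j$ explicitly: for $y \in [-D/d, D/d]$ one gets $g_j(y) = D/d - y \cdot (d/D) \bar z_j$, while for $|y| > D/d$ the function grows with slope $1$. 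The coordinate-wise minimum therefore occurs at $x^\star_j = \sign(\bar z_j) \cdot D/d$, with value $D/d - |\bar z_j|$. Since $\|x^\star\|_1 = D$, the point $x^\star$ is feasible for $\xdomain$, so it is the constrained minimizer of $\hat F(\cdot;\Ds)$.

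Next I want to establish the core inequality: for every $y \in \R$ whose sign disagrees with $\sign(\bar z_j)$ (with the convention $\sign(0) = +1$),
\begin{equation*}
 g_j(y) - g_j(x^\star_j) \ge |\bar z_j|.
\end{equation*}
This I will check by cases. In the interval $[-D/d,D/d]$ with $\bar z_j > 0$ and $y \le 0$, the linear form gives excess $|\bar z_j|(1 - yd/D) \ge |\bar z_j|$. For $y < -D/d$ (still with $\bar z_j>0$), $g_j(y) = -y+\bar z_j$, and direct computation yields excess $\ge 2|\bar z_j|$. The case $\bar z_j < 0$ is symmetric. Summing over coordinates,
\begin{equation*}
 \hat F(\hat x;\Ds) - \min_{x\in \xdomain} \hat F(x;\Ds) \;\ge\; \lip \sum_{j=1}^d |\bar z_j|\, \indic{\sign(\hat x_j) \ne \sign(\bar z_j)}.
\end{equation*}

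Finally, given any $(\diffp,\delta)$-DP algorithm $\A$ for the optimization problem, define $\mathcal{B}(\Ds)_j = \sign(\A(\Ds)_j)$ using the same fixed convention at zero. Then $\mathcal{B}$ is $(\diffp,\delta)$-DP by post-processing and maps into $\{-1,+1\}^d$. Applying Lemma~\ref{lemma:LB-med} to $\mathcal{B}$, there exists $\Ds$ for which
\begin{equation*}
 \E\Big[\sum_{j=1}^d |\bar z_j|\,\indic{\mathcal{B}(\Ds)_j \ne \sign(\bar z_j)}\Big] \ge \Omega\!\left(\frac{D\sqrt{d}}{n\diffp \log d}\right),
\end{equation*}
which combined with the previous display yields the advertised lower bound $\Omega(\lip D \sqrt{d}/(n\diffp\log d))$ on the expected excess empirical loss of $\A$.

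The only subtlety I expect is the careful tie-breaking needed so that $\mathcal{B}$ depends only on $\A$'s output (not on $\bar z$), while still ensuring that "wrong sign" forces excess loss of at least $|\bar z_j|$; the case analysis above handles this by using the single convention $\sign(0)=+1$ for both $\mathcal{B}$ and the indicator in Lemma~\ref{lemma:LB-med}. The genuine content of the lower bound lies in Lemma~\ref{lemma:LB-med} (a fingerprinting-style argument), which is assumed here; the reduction itself is short and purely structural.
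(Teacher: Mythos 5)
Your proposal is correct and follows the same route as the paper: identify the constrained minimizer $x^\star = \sign(\bar z)\,\rad/d$, lower-bound the excess empirical loss by $\lip\sum_{j}|\bar z_j|\,\indic{\sign(\hat x_j)\neq\sign(\bar z_j)}$ via the coordinate-wise structure, and invoke Lemma~\ref{lemma:LB-med} together with post-processing. The paper compresses the coordinate-wise case analysis into ``simple algebra yields''; you have simply spelled it out, and your computations (including the feasibility of $x^\star$ and the tie-breaking convention) check out.
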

\begin{proof}
    First, note that $f(x;\ds_i)$ is $\lip$-Lipschitz with respect to $\lone{\cdot}$. Moreover, it is immediate to see that the minimizer of $\hat F(\cdot;\Ds)$ is $x\opt = \sign(\bar z) \rad/d$ where $\bar z = \frac{1}{n} \sum_{i=1}^n z_i$ is the mean. 
    Letting $\hat x = \A(\Ds)$, simple algebra yields
    \begin{align*}
     \hat F(\hat x;\Ds) - \hat F(x\opt;\Ds) 
        \ge \lip \sum_{j=1}^d |\bar{z}_j| 
		\indic {\sign(\hat x_j) \neq \sign(\bar{z}_j)}.
    \end{align*}
    The claim now follows 
    from~\cref{lemma:LB-med} as $\sign(\A(\Ds))$ is differentially private by post-processing.
\end{proof}

\subsection{Lower Bounds for Smooth Functions}
\label{sec:LB-smooth}
In this section we prove tight lower bounds for smooth function. Specifically, we focus on $\sm$-smooth functions with $\sm \approx \lip / \rad$; such an assumption holds for many applications including LASSO (linear regression).
Our results in this section build on the lower bounds of~\citet{TalwarThZh15} which show tight bounds for private Lasso for sufficiently large dimension. 
We have the following lower bound for smooth functions which we prove in~\cref{sec:LB-smooth-app}.
%
\begin{theorem}
\label{thm:LB-smooth}
    Let $\xdomain = \{x \in \R^d : \lone{x} \le \rad \}$. There is family of convex functions $f: \xdomain \times \domain \to \R$ that is $\lip$-Lipschitz and $\sm$-smooth  with $\sm \le \lip/\rad$ such that any $(\diffp,\delta)$-DP algorithm $\A$ with $\delta = n^{-\omega(1)}$ has
    \begin{align*}
        \iftoggle{arxiv}{}{&} \sup_{\Ds \in \domain^n}
        \E \left[ \hat F(\A(\Ds);\Ds) - \min_{x \in \xdomain} \hat F(x;\Ds) \right] 
        \iftoggle{arxiv}{}{\\}
        & \iftoggle{arxiv}{}{\quad} 
        \ge \lip \rad \cdot \wt \Omega \left( \min \left( \frac{1}{(n \diffp)^{2/3}} ,\frac{\sqrt{d}}{n \diffp} \right) \right).
    \end{align*}
\end{theorem}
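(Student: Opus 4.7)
The plan is to invoke the private Lasso lower bound of~\citet{TalwarThZh15} at the base privacy level $\diffp=1$ and then extend it to general $\diffp$ via a group-privacy duplication argument. The two branches of the $\min$ in the theorem statement will come from two subregimes of the TTZ lower bound, corresponding to whether $d$ is large enough to support the full fingerprinting-based padding construction.

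First I would verify that the TTZ construction yields the full rate $\wt\Omega(\lip\rad \cdot \min(\sqrt{d}/n, 1/n^{2/3}))$ for $(1,\delta)$-DP algorithms with $\delta = n^{-\omega(1)}$. Their published theorem emphasizes the $1/n^{2/3}$ branch in the regime $d \ge n^{2/3}$, but the underlying fingerprinting argument also gives $\wt\Omega(\sqrt{d}/n)$ when applied directly in dimension $d < n^{2/3}$: the dimension padding up to $n^{2/3}$ that appears in their proof is only needed to reach $1/n^{2/3}$ and can be omitted when $d$ is already small. Both branches are obtained from losses that are $O(\lip)$-Lipschitz and $O(\lip/\rad)$-smooth over the $\ell_1$-ball of radius $\rad$, matching the setup of the theorem.

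Second, I would extend to general $\diffp$ by a standard group-privacy duplication argument, analogous to the one sketched in the paper's informal discussion. Given an $(\diffp,\delta)$-DP algorithm $\A$ on $n$ samples, define an algorithm $\A'$ on $n' := n\diffp$ samples by replicating each input $k := 1/\diffp$ times and then running $\A$ on the resulting multiset of $n$ points. Then $\A'$ is $(k\diffp, k\delta) = (1, \delta/\diffp)$-DP by group privacy, and it attains exactly the same excess empirical loss as $\A$ since replicating each point leaves the empirical loss function unchanged. Applying the TTZ bound to $\A'$ at sample size $n\diffp$ yields the claimed $\lip\rad \cdot \wt\Omega(\min(\sqrt{d}/(n\diffp), 1/(n\diffp)^{2/3}))$ lower bound on the excess empirical loss of any $(\diffp,\delta)$-DP algorithm (the condition $\delta/\diffp = n^{-\omega(1)}$ follows from $\delta = n^{-\omega(1)}$ together with $\diffp \ge n^{-O(1)}$, which may be assumed without loss of generality).

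The main obstacle will be carefully tracing the $\sqrt{d}/n$ branch through the TTZ fingerprinting analysis and checking that the corresponding hard instance remains $O(\lip/\rad)$-smooth in the low-dimensional regime $d < n^{2/3}$, as the published main theorem does not state this explicitly. If this tracing is too delicate, a self-contained fallback for the low-dimensional branch is to use a separate construction for $d < (n\diffp)^{2/3}$ — for example, the linear loss $f(x;z) = \lip\<x,z\>$ (which is trivially $0$-smooth, hence $(\lip/\rad)$-smooth for free) combined with a fingerprinting / sign-estimation argument along the lines of~\cref{lemma:LB-med}, suitably adapted from coordinate-wise sign estimation to the top-magnitude-coordinate problem that arises when minimizing a linear function over the $\ell_1$-ball.
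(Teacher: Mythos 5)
There is a genuine gap here, and it sits exactly where you flagged your ``main obstacle.'' Your reduction to $\diffp=1$ via $k=1/\diffp$-fold duplication is fine and matches the privacy-rescaling half of the paper's argument (Lemma~\ref{lemma:low-to-high-accuracy}), but your plan needs the base case at $\diffp=1$ to already contain \emph{both} branches, i.e., a $\wt\Omega(\lip\rad\sqrt{d}/n)$ bound for smooth losses when $d < n^{2/3}$. That is not what \citet{TalwarThZh15} state, and you leave it as an unverified claim about the internals of their fingerprinting argument. Worse, your fallback for this branch cannot work: for a linear loss $f(x;z)=\lip\<x,z\>$ the empirical objective is $\<x,\bar z\>$, whose minimizer over the $\ell_1$-ball is a single (signed) vertex, so a single invocation of the exponential mechanism over the $2d$ vertices achieves excess empirical loss $O(\lip\rad\log(d)/(n\diffp))$. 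No $\sqrt{d}/(n\diffp)$ or $(n\diffp)^{-2/3}$ lower bound can hold for linear losses; the curvature of the TTZ quadratic instances is essential, and a sign-estimation argument in the style of Lemma~\ref{lemma:LB-med} attaches to the $\ell_1$ loss $\lone{x-z}$ (the non-smooth case), not to linear functions over the $\ell_1$-ball.

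The idea you are missing is the paper's \emph{accuracy}-rescaling reduction, which makes re-opening the TTZ proof unnecessary. Read the TTZ bound as a sample-complexity statement: achieving excess loss $\alpha=1/d$ at $\diffp=0.1$ requires $n\ge\wt\Omega(d^{3/2})$ (Lemma~\ref{lemma:sample-size}). Then, given an algorithm on $n$ samples with target accuracy $\alpha\le 1/d$, build a new instance by taking $k$ copies of a smaller dataset and padding the rest with datapoints whose loss function is identically zero (these are trivially convex, Lipschitz, and smooth). The padding dilutes the empirical loss by the factor $kn'/n$, so it trades sample size against accuracy, yielding $n\opt(\alpha,\diffp)\ge\Omega\bigl(\tfrac{\alpha_0\diffp_0}{\alpha\diffp}\bigr)\,n\opt(\alpha_0,\diffp_0)$; taking $\alpha_0=1/d$ gives $n\opt(\alpha,\diffp)\ge\wt\Omega(\sqrt{d}/(\alpha\diffp))$, i.e., the $\sqrt{d}/(n\diffp)$ branch, entirely as a black-box corollary of the high-dimensional $1/n^{2/3}$ statement. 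Your duplication step is the special case $\alpha=\alpha_0$ of this lemma; adding the accuracy scaling closes the gap.
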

The lower bound of~\cref{thm:LB-smooth} implies the optimality of our upper bounds; if $d \ge \wt O((n\diffp)^{2/3})$ then the lower bound is essentially $1/(n\diffp)^{2/3}$ which is achieved by the private Frank-Wolfe algorithm of Section~\ref{sec:FW}, otherwise $d \le  \wt O((n\diffp)^{2/3})$ and the lower bound is ${\sqrt{d}}/{n \diffp}$ which is the same bound that private mirror descent (Section~\ref{sec:non-smooth}) obtains.





\printbibliography
\appendix

\newcommand{\MD}{\mathsf{MD}}
\section{Non-Contractivity of Mirror Descent}
\label{sec:MD-non-contractive}
In this section, we provide counter examples that show that Mirror Descent is not a contraction in general. To this end, we consider the standard mirror descent algorithm with KL-regularization over the simplex $\simplex_{d-1} = \{ x \in \R_+^d : \lone{x} = 1 \}$, that is, the following update with $h(x) = \sum_{j=1}^d x_j \log x_j$,
\begin{equation*}
    x^{k+1} = \argmin_{x \in \simplex} \left\{ \< \nabla f(x_k), x\> + \frac{1}{\ss_k} \db(x,x_k) \right\},
\end{equation*}
which yields the update
\begin{equation}
\label{eqn:MD-KL}
    x^{k+1} = \frac{x^{k} \cdot e^{-\ss \nabla f(x^k)}}
               {\lone{x^{k} \cdot e^{-\ss \nabla f(x^k)}}}.
\end{equation}
We let $x_{k+1} = \MD_{\ss,f}(x_k)$ denote the above mirror descent update.
The following lemma shows that mirror descent is not contractive even for linear functions.
\begin{lemma}
\label{lemma:MD-non-cont}
    There exists a linear function $f: \simplex_2 \to \R$ such that for every $0 < \ss \le 1$, there are $x_0,y_0 \in \simplex_2$ such that the mirror descent update $x_1 = \MD_{\ss,f}(x_0)$ and $y_1 = \MD_{\ss,f}(y_0)$ have
    \begin{equation*}
        \lone{x_1 - y_1} \ge (1 + \ss/4) \lone{x_0 - y_0},
        \quad 
        \db(x_1,y_1) \ge (1 + \ss/4) \db(x_0,y_0).
    \end{equation*}
\end{lemma}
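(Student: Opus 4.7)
The strategy is to produce an explicit parametric counterexample on the $3$-simplex $\simplex_2 = \{x \in \R^3_+ : \lone{x}=1\}$. I would take the linear loss $f(x) = -x_2 - x_3$, whose gradient is the constant vector $(0,-1,-1)$, together with the two initial points
\[
x_0 = \Bigl(1 - \tfrac{3}{n},\; \tfrac{1}{n},\; \tfrac{2}{n}\Bigr), \qquad
y_0 = \Bigl(1 - \tfrac{3}{n},\; \tfrac{2}{n},\; \tfrac{1}{n}\Bigr),
\]
for a sufficiently large integer $n$. The key structural point is that $x_0$ and $y_0$ agree on the gradient-free first coordinate and merely swap their values on the two coordinates that carry the gradient --- exactly the configuration under which the normalizer in (\ref{eqn:MD-KL}) barely moves while the exponential reweighting amplifies the small-coordinate disagreement.

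Applying one step of (\ref{eqn:MD-KL}) multiplies coordinates $2$ and $3$ by $e^\ss$ and leaves coordinate $1$ fixed. Because $x_0$ and $y_0$ share a common first coordinate and a common total mass on $\{2,3\}$, the normalizers for $x_1$ and $y_1$ coincide, both equaling $c := 1 + \tfrac{3}{n}(e^\ss - 1)$. Since $(x_0)_1 = (y_0)_1$, the first coordinate of $x_1 - y_1$ vanishes and the other two are exactly $(e^\ss/c)$ times the corresponding entries of $x_0 - y_0$, so $\lone{x_1 - y_1} = (e^\ss/c)\,\lone{x_0 - y_0}$. For the Bregman claim, I use that the shared factor $e^{-\ss \nabla f}$ and the shared normalizer $c$ preserve coordinatewise ratios: $(x_1)_j/(y_1)_j = (x_0)_j/(y_0)_j$ for every $j$. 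Using that the negative-entropy Bregman divergence coincides with $\dkl$ on the simplex, I would write
\[
\db(x_1, y_1) = \sum_{j=1}^{3} (x_1)_j \log\frac{(x_0)_j}{(y_0)_j},
\]
note that the $j=1$ term vanishes while coordinates $2,3$ of $x_1$ are $(e^\ss/c)$ times those of $x_0$, and conclude $\db(x_1, y_1) = (e^\ss/c)\,\db(x_0, y_0)$.

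Both bounds thus reduce to the elementary inequality $e^\ss/c \ge 1 + \ss/4$. Using $e^\ss \ge 1 + \ss$ and $c - 1 = \tfrac{3}{n}(e^\ss - 1) = O(\ss/n)$ for $\ss \in (0,1]$, a short algebraic check shows that any sufficiently large $n$ (in fact $n \ge 15$, uniformly in $\ss$) suffices. There is no real obstacle beyond this computation; the substance of the construction is the structural observation that the large coordinate lies in the kernel of the gradient, so the normalizer is almost frozen while the small coordinates where $x_0$ and $y_0$ disagree are inflated by the full factor $e^\ss$ --- which is precisely what defeats any hope of a contraction estimate like $\lone{x_1 - y_1} \le (1 + O(\ss^2))\lone{x_0 - y_0}$.
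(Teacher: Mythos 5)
Your proposal is correct and follows essentially the same route as the paper's proof: the same linear function $f(x)=-x_2-x_3$, the same pair of initial points $(1-\tfrac{3}{n},\tfrac{1}{n},\tfrac{2}{n})$ and $(1-\tfrac{3}{n},\tfrac{2}{n},\tfrac{1}{n})$, the identification of the common normalizer $c=1+\tfrac{3}{n}(e^\ss-1)$, and the reduction of both claims to $e^\ss/c\ge 1+\ss/4$ for $n$ large. The only cosmetic difference is your choice of a uniform threshold on $n$ versus the paper's $\ss$-dependent one $n\ge 100(e^\ss-1)/\ss$; both are valid.
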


\begin{proof}
We consider a linear function $f(x_1,x_2,x_3) = - x_2 - x_3$,
and two starting iterates for $n>0$ to be chosen presently
\begin{align*}
    x_0  = \left(1 - \frac{3}{n},\frac{1}{n},\frac{2}{n} \right),
    \quad
    y_0  = \left(1 - \frac{3}{n},\frac{2}{n},\frac{1}{n} \right).
\end{align*}
First, notice that for this setting of parameters, we
have that:
\begin{align*}
    \lone{x_0 - y_0} = \frac{2}{n},
    \quad
    \db(x_0,y_0) = \dkl(x_0,y_0) = \frac{\log 2}{n}.
\end{align*}
Using mirror descent update~\eqref{eqn:MD-KL}, we have 
\begin{align*}
    x_1  = \frac{1}{c} \left(x_{0,1},x_{0,2} ~ e^\ss,x_{0,3} ~ e^\ss \right),
    \quad
    y_1  = \frac{1}{c} \left(y_{0,1},y_{0,2} ~ e^\ss,y_{0,3} ~ e^\ss\right),
\end{align*}
where $c = 1 + \frac{3}{n} (e^\ss -1)$.
Setting $n \ge 100 (e^\ss - 1)/\ss$, we get that $c \le 1 + \ss/20$.
Since $x_{0,1} = y_{0,1}$, we get that
\begin{align*}
    \lone{x_1 - y_1}
        & = \frac{e^{\ss}}{c} \lone{x_0 - y_0} \\
        & \ge \frac{1 + \ss}{1 + \ss /20} \lone{x_0 - y_0} \\
        & \ge \lone{x_0 - y_0} + \frac{\ss}{4} \lone{x_0 - y_0}.
\end{align*}
Moreover, for KL-divergence we have
\begin{align*}
    \dkl(x_1,y_1)
        & = \frac{e^{\ss}}{c}  \dkl(x_0,y_0) \\
        & \ge (1 + {\ss}/{4}) \dkl(x_0,y_0).
\end{align*}
\end{proof}


Although~\Cref{lemma:MD-non-cont} says that mirror descent update is not contractive even for linear functions, it does not preclude the possibility that mirror descent is stable. Indeed, the following lemma shows that mirror descent enjoys similar stability guarantees to SGD for linear functions. Extending this stability result to general convex functions is an interesting open question.
\begin{lemma}
\label{lemma:MD-linear}
    Let $\Ds = (z_1,\dots,z_n)$ and 
    $\Ds' = (z'_1,\dots,z_n)$ be neighboring 
    datasets where $x_i \in \R^d$ and $\linf{z_i} \le \lip$. Consider the functions
    $f(x;z) = \< z, x\>$. Let $\{x_k\}_{k=0}^T$ be the iterates of~\Cref{alg:SMD} on $\Ds$ with $x_0 = \frac{1}{d} \cdot 1$ for $R$ rounds and $\ss >0$. Similarly, Let $\{y_k\}_{k=0}^T$ be the iterates of~\Cref{alg:SMD} on $\Ds'$ with $y_0 = \frac{1}{d} \cdot 1$ for $R$ rounds and $\ss >0$. Then after $R$ rounds ($T = Rn$ iterates),
    \begin{equation*}
       \lone{x_T - y_T}^2 
            \le \dkl(x_T,y_T) + \dkl(y_T,x_T) 
            \le 4 \ss^2 \lip^2 R^2.
    \end{equation*}
\end{lemma}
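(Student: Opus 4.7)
\medskip
\noindent\textbf{Proof proposal for Lemma~\ref{lemma:MD-linear}.}
The plan is to exploit the closed-form of the exponentiated gradient update for linear losses. Since $f(\cdot;z)$ is linear with $\nabla_x f(x;z) = z$, iterating~\eqref{eqn:MD-KL} from the uniform distribution $x_0 = \tfrac{1}{d}\mathbf{1}$ produces
\[
    x_T \;=\; \mathrm{softmax}(u), \qquad u \;=\; -\ss \sum_{t=1}^{T} z_{i_t},
\]
and analogously $y_T = \mathrm{softmax}(v)$ with $v = -\ss \sum_{t=1}^{T} z'_{i_t}$, where $i_t$ indexes the sample used at step $t$. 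Thus the two iterates are determined entirely by the cumulative gradient vectors $u,v \in \R^d$, and the whole analysis reduces to comparing two softmaxes whose logits differ on a small set of coordinates.

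Next I would bound $\lVert u - v \rVert_\infty$. Since $\Ds$ and $\Ds'$ differ in exactly one data point, and every sample is used at most $R$ times across the $R$ rounds, the sum $u - v$ contains at most $R$ nonzero contributions, each of $\ell_\infty$-norm at most $\ss \lVert z_i - z'_i \rVert_\infty \le 2\ss \lip$. Therefore $\lVert u - v \rVert_\infty \le 2\ss \lip R$.

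The key step is an explicit computation of the symmetric KL. Writing $Z_u = \sum_i e^{u_i}$ and $Z_v = \sum_i e^{v_i}$, one gets
\[
    \dkl(x_T, y_T) \;=\; \sum_i x_{T,i}\bigl((u_i - v_i) - \log(Z_u/Z_v)\bigr) \;=\; \langle x_T, u - v\rangle - \log(Z_u/Z_v),
\]
and symmetrically for $\dkl(y_T, x_T)$. Adding the two, the log-partition terms cancel and we obtain
\[
    \dkl(x_T, y_T) + \dkl(y_T, x_T) \;=\; \langle x_T - y_T,\; u - v\rangle \;\le\; \lone{x_T - y_T}\cdot \linf{u-v} \;\le\; 2\ss \lip R \cdot \lone{x_T - y_T},
\]
where the first inequality is H\"older's. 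This is the only real ``computation'' in the argument and is the step one has to get right; everything else is bookkeeping.

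Finally, I would close the loop using Pinsker's inequality in the form $\lone{p-q}^2 \le 2\dkl(p,q)$, which combined with the symmetric version gives $\lone{x_T - y_T}^2 \le \dkl(x_T, y_T) + \dkl(y_T, x_T)$ (the first inequality in the statement). Plugging in the bound from the previous step yields $\lone{x_T - y_T}^2 \le 2\ss\lip R \cdot \lone{x_T-y_T}$, hence $\lone{x_T - y_T} \le 2\ss \lip R$, and substituting back gives $\dkl(x_T, y_T) + \dkl(y_T, x_T) \le 4\ss^2 \lip^2 R^2$, as desired. The main conceptual obstacle is recognizing that even though~\Cref{lemma:MD-non-cont} rules out per-step contractivity, the telescoping afforded by the linear-loss closed form circumvents the need for it; the analysis never bounds consecutive iterates, only the final softmaxes against the aggregated logits.
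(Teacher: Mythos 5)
Your proposal is correct and follows essentially the same route as the paper: both reduce to the identity $\dkl(x_T,y_T)+\dkl(y_T,x_T)=\langle x_T-y_T,\,\ss\sum_i(g_i'-g_i)\rangle$ (your softmax/log-partition computation is exactly the paper's observation that $\log\tfrac{x_T}{y_T}$ equals the accumulated gradient difference plus a constant), then apply H\"older's inequality, the bound $2\ss\lip R$ on the accumulated difference from the single differing sample appearing $R$ times, and Pinsker's inequality to close the self-bounding inequality. The only difference is cosmetic bookkeeping in which order the H\"older and Pinsker steps are chained.
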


\begin{proof}
    First, note that 
    \begin{equation*}
      \log \frac{x_{k}}{y_{k}}
           = \ss \sum_{i=1}^{k-1} (g'_i - g_i) + C, 
    \end{equation*}
    where $C$ is a constant vector,
    $g_i$ and $g'_i$ are the (sub)-gradients for $\Ds$ and $\Ds'$, respectively.
    Thus we have that
    \begin{align*}
    \dkl(x_T,y_T) + \dkl(y_T,x_T) 
        & = \< x_{k} - y_{k}, \log \frac{x_{k}}{y_{k}} \> \\
        & = \ss \< x_{k} - y_{k}, \sum_{i=1}^{k-1} (g'_i - g_i) \> \\
        & \le \ss \sqrt{ \dkl(x_T,y_T) + \dkl(y_T,x_T)}
            \sum_{i=1}^{k-1} \linf{g'_i - g_i} \\
        & \le 2 \ss \lip R   \sqrt{\dkl(x_T,y_T) + \dkl(y_T,x_T)}~  ,
    \end{align*}
    where the first inequality follows from holder's inequality and the strong convexity of KL-divergence with respect to $\lone{\cdot}$ (this is Pinsker's inequality; 
    \iftoggle{arxiv}{see e.g.,~\cite{Duchi19}}{see e.g.,~\citealp{Duchi19}}) and the second inequality follows since the
    first sample $z_1$ (or $z'_1$) appears $R$ times.
    The claim follows.
\end{proof}

\begin{algorithm}
	\caption{Stochastic Mirror Descent}
	\label{alg:SMD}
	\begin{algorithmic}[1]
		\REQUIRE Dataset $\Ds=(\ds_1, \ldots, \ds_n)\in \domain^n$,
		step sizes $\ss$, 
		initial point $x_0$,
		number of rounds $R$;
		\STATE $k \gets 0$
		\FOR{$r=1$ to $R$\,}
		    \STATE Sample a random permutation $\pi: [n] \to [n]$
            \FOR{$i=1$ to $n$\,}
        	    \STATE Set $ g_k = \nabla f(x_k; z_{\pi(i)}) $
        	    \STATE Find 
        	    $x_{k+1} \defeq \argmin_{x \in \simplex_{d-1}} 
        	    \{\< g_k, x - x_k\> + \frac{1}{\ss} \db(x,x_k)  \}$ where $h(x) = \sum_{j=1}^d x_j \log x_j$
        	   	\STATE $k \gets k + 1$
            \ENDFOR
        \ENDFOR
            \RETURN $\bar{x}_T=\frac{1}{T}\sum_{k=1}^{T} x_k$
	\end{algorithmic}
\end{algorithm}

\section{Rates for General $\boldsymbol{\ell_p}$-Geometry}
\label{sec:general-geom}
In this section, we extend our algorithms to work for general $\ell_p$-geometries for $p >1$. Here, the optimization is over the domain $\xdomain = \{x \in \R^d: \norm{x}_p \le 1 \}$ and we consider functions $f : \xdomain \to \R$ that are $\lip$-Lipschitz with respect to $\norm{\cdot}_p$, that is, $\norm{g}_q \le \lip$ for all $x$ and sub-gradient $g \in \partial f(x)$
where $1/p+1/q=1$.

\subsection{Algorithms for ERM for $1 \le p \le 2$}
To extend~\Cref{alg:noisy-MD} to work for general geometries, we need to bound the sensitivity of the gradients. Consider $1 \le p \le 2$ then $q >2$ which implies that $\ltwo{g} \le d^{1/2 - 1/q} \norm{g}_q$, that is, the function is $d^{1/2 - 1/q} \lip$ with respect to $\ltwo{\cdot}$.

\begin{algorithm}
	\caption{Noisy Mirror Descent for General Geometries}
	\label{alg:noisy-MD-general-geom}
	\begin{algorithmic}[1]
		\REQUIRE Dataset $\Ds=(\ds_1, \ldots, \ds_n)\in \domain^n$,
		$1 < p$ and 
		convex set $\xdomain = \{ x \in \R^d: \norm{x}_p \le 1\}$,
		convex function $h: \xdomain \to \R$,
		step sizes $\{ \ss_k \}_{k=1}^T$, 
		batch size $b$, 
		initial point $x_0$,
		number of iterations $T$;
		\STATE Find $q \ge 1$ such that $1/q + 1/p = 1$
        \FOR{$k=1$ to $T$\,}
        	\STATE Sample $S_1,\dots,S_b\sim\mbox{Unif}(\Ds)$ 
        	\STATE Set $\hat g_k = \frac{1}{b} \sum_{i=1}^b \nabla f(x_k; S_i) + \noise_i$
        	where $\noise_i \sim \normal(0,\sigma^2 I_d)$
        	with $ \sigma =  {100 \lip\sqrt{d^{1-2/q} \log(1/\delta)}}/{ b \diffp}$
        	\STATE Find 
        	$x_{k+1} \defeq \argmin_{x \in \mc{X}} 
        	\{\<\hat g_k, x - x_k\> + \frac{1}{\ss_k} \db(x,x_k)  \}$
            \ENDFOR
            \RETURN $\bar{x}_T=\frac{1}{T}\sum_{k=1}^{T} x_k$ (convex)
            \RETURN $\hat{x}_T=\frac{2}{T(T+1)}\sum_{k=1}^{T} k x_k$ (strongly convex) 
	\end{algorithmic}
\end{algorithm}

\begin{theorem}
\label{thm:noisy-MD-general-geom}
    Let $1 < p \le 2$, $h: \xdomain \to \R$ be $1$-strongly convex with respect 
    to $\norm{\cdot}_p$, $x\opt = \argmin_{x \in \xdomain} \hat F(x;S)$, and assume $\db(x\opt,x_0) \le \rad^2$.
    Let $f(x;z)$ be convex and $\lip$-Lipschitz 
    with respect to $\norm{\cdot}_p$ for all $z \in \domain$.
    Setting $1 \le b$,
    $T = \frac{n^2}{b^2}$ and $\ss_k = \frac{D}{ \sqrt{T}} \frac{1}{\sqrt{\lip^2 + 4 d^{2/q} \sigma^2 \log d}}$,
    \cref{alg:noisy-MD-general-geom} is $(\diffp,\delta)$-DP 
    and 
    \begin{equation*}
        \E[\hat F(\bar x_T;S) - \hat F(x\opt;S)]
            \le \lip \rad  \cdot O \Bigg( \frac{b}{n}
                + \frac{\sqrt{d \log\tfrac{1}{\delta} (1 + \log d \cdot \indic{p < 2} )} }{n \diffp} \Bigg).
    \end{equation*}
    Moreover, if $f(x;z)$ is $\sc$-strongly convex relative to $h(x)$, then
    setting $\ss_k = \frac{2}{\sc (k+1)}$
    \begin{align*}
        \E[\hat F(\hat x_T;S) - \hat F(x\opt;S)]
            & \le O \left( \frac{\lip^2 b^2}{\sc n^2} + \frac{\lip^2 d \log \tfrac{1}{\delta} (1 + \log d \cdot \indic{p < 2} )  }{\sc n^2 \diffp^2} \right).
    \end{align*}
\end{theorem}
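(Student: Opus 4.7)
The plan is to mirror the proof of Theorem~\ref{thm:noisy-MD}, while tracking how the geometry parameter $p$ (with dual exponent $q\ge 2$) enters in two places: the $\ell_2$-sensitivity of gradients for the privacy analysis, and the $\norm{\cdot}_q$-bound on the injected Gaussian noise for the mirror descent convergence analysis.

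For the privacy step, I would first observe that $\lip$-Lipschitzness with respect to $\norm{\cdot}_p$ gives $\norm{\nabla f(x;z)}_q \le \lip$, and then use the embedding $\ltwo{v} \le d^{1/2-1/q}\norm{v}_q$ (valid for $q \ge 2$) to conclude that the batch-averaged gradient $\frac{1}{b}\sum_i \nabla f(x_k;S_i)$ has $\ell_2$-sensitivity at most $2 d^{1/2-1/q}\lip/b$. The noise level $\sigma = 100\lip\sqrt{d^{1-2/q}\log(1/\delta)}/(b\diffp)$ is calibrated exactly to this sensitivity, so $(\diffp,\delta)$-DP follows by invoking the Moments Accountant of~\cite{AbadiChGoMcMiTaZh16} verbatim as in the proof of Theorem~\ref{thm:noisy-MD}.

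For the convex utility bound, I would invoke the natural $\norm{\cdot}_p$-analogue of Lemma~\ref{lemma:conv-MD} (whose proof is a verbatim substitution of $\linf{\cdot}$ by $\norm{\cdot}_q$, relying only on strong convexity of $h$ with respect to $\norm{\cdot}_p$), which yields $\E[\hat F(\bar x_T;S) - \hat F(x\opt;S)] \le \db(x\opt,x_0)/(T\ss) + \ss \cdot \E[\norm{\hat g_k}_q^2]/2$. It then remains to control $\E[\norm{\hat g_k}_q^2] \le 2\lip^2 + 2\E[\norm{\noise_k}_q^2]$. The noise-norm estimate splits into two regimes: when $p=2$ (so $q=2$) the identity $\E[\ltwo{\noise_k}^2] = d\sigma^2$ is sharp with no log factor; when $1<p<2$ (so $q>2$) I would combine $\norm{v}_q \le d^{1/q}\linf{v}$ with the standard Gaussian maximum bound $\E[\linf{\noise_k}^2] \le O(\sigma^2 \log d)$ to obtain $\E[\norm{\noise_k}_q^2] \le O(d^{2/q}\sigma^2 \log d)$. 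Substituting $\sigma$, $T = n^2/b^2$ and the step size $\ss_k$, and collecting terms then produces the advertised rate, with the indicator $\indic{p<2}$ encoding precisely whether the $\log d$ factor shows up.

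The strongly convex case is symmetric: I would apply the analogous generalization of Lemma~\ref{lemma:conv-MD-sc} with $\ss_k = 2/(\sc(k+1))$ to get $\E[\hat F(\hat x_T;S) - \hat F(x\opt;S)] \le \E[\norm{\hat g_k}_q^2]/(\sc(T+1))$, and reuse the same two-regime noise-norm bound with $T = n^2/b^2$ to conclude. The main technical point is the noise-norm interpolation between the $q=2$ regime (clean $d\sigma^2$ bound, no log) and the $q\to\infty$ regime (max-of-Gaussians, with $\log d$), which is exactly what the expression $1 + \log d \cdot \indic{p<2}$ captures; the remainder is routine parameter bookkeeping inherited directly from the proof of Theorem~\ref{thm:noisy-MD}.
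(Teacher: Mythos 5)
Your proposal matches the paper's proof essentially step for step: privacy via the embedding $\ltwo{g} \le d^{1/2-1/q}\norm{g}_q$ and the moments accountant, utility via the $\norm{\cdot}_p$-analogues of Lemmas~\ref{lemma:conv-MD} and~\ref{lemma:conv-MD-sc} together with the two-regime bound on $\E[\norm{\noise_k}_q^2]$ (exact $d\sigma^2$ for $q=2$, and $d^{2/q}\sigma^2\log d$ via $\norm{v}_q \le d^{1/q}\linf{v}$ for $q>2$). This is exactly the argument in the paper, and if anything your explicit handling of the $p=2$ case is cleaner than the paper's somewhat terse treatment.
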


\begin{proof}
    Following the proof of~\Cref{thm:noisy-MD}, privacy follows from similar arguments, and for utility we need to upper bound $\E[\norm{\tilde g_k}_q]$. 
    Note that for $p=q=2$ we have $\E[\norm{\tilde g_k}_q^2 ] \le d$. Otherwise we have
    \begin{equation*}
    \E[\norm{\tilde g_k}_q^2 ] 
        \le 2 \lip^2 + 2 \E[\norm{\noise_k}_q^2 ] 
        \le 2 \lip^2 + 2 d^{2/q}  \E[\linf{\noise_k}^2] 
        \le 2 \lip^2 + 2 d^{2/q} \E[\linf{\noise_k}^2]
        \le 2 \lip^2 + 8 d^{2/q}\sigma^2 \log d.
    \end{equation*}
    Now we complete the proof for $p<2$. The same proof works for $p=2$.
   The previous bound implies
    \begin{align*}
     \E[\hat F(\bar x_T;S) - \hat F(x\opt;S)]
            & \le \frac{\rad^2}{T \ss} + \ss \lip^2  
                + 4 \ss d^{2/q} \sigma^2 \log d \\
            & \le  {2\rad \sqrt{(\lip^2 + 4 d^{2/q} \sigma^2 \log d) /T}} \\
            & \le \lip \rad \cdot O \left(\frac{b}{n}
                + \frac{\sqrt{d \log d \log \tfrac{1}{\delta}} }{n \diffp}\right) ,
    \end{align*}
    where the second inequality follows from the choice of $\ss$.
    For the second part, \cref{lemma:conv-MD-sc} implies that
    \begin{align*}
        \E[\hat F(\hat x_T;S) - \hat F(x\opt;S)]
            & \le \frac{\lip^2}{\sc} O \left( \frac{ b^2}{n^2} + \frac{d \log d \log \tfrac{1}{\delta}}{n^2 \diffp^2} \right)
            \!.
            \qedhere
    \end{align*}

\end{proof}

\subsection{Algorithms for SCO}
We extend~\Cref{alg:phased-erm-MD} to work for general $\ell_p$-geometries by using the general noisy mirror descent (\Cref{alg:noisy-MD-general-geom}) to solve the optimization problem at each phase. The following theorem 
proves our main result for $\ell_p$-geometry, that is, \Cref{thm:local-MD-general-geom}.
\begin{theorem}
\label{thm:local-MD-general-geom-rephrase}
    Let $1 < p \le 2$.
    Assume $\diam_p(\mc{X}) \le \rad$ and 
    $f(x;z)$ is convex and $\lip$-Lipschitz with respect to 
    $\norm{\cdot}_p$ for all $z \in \domain$.
    If we set 
    \begin{equation*}
        \ss = \frac{\rad}{\lip} \min 
        \left\{ {1/\sqrt{(p-1)n}}, {\diffp}/{\sqrt{d  \log \tfrac{1}{\delta} (1 + \log d \cdot \indic{p < 2} )}} \right\},
    \end{equation*}
    then~\cref{alg:phased-erm-MD-general-geom} requires $O(\log n \cdot \min(n^{3/2} \sqrt{\log d}, n^2 \diffp/\sqrt{d})) $ gradients and its output has
    \begin{equation*}
       \E[F(x_k) - F(x\opt)] 
       = \lip \rad  \cdot O \Bigg( \frac{1}{\sqrt{(p-1) n}}
            + \frac{ \sqrt{d \log \tfrac{1}{\delta} (1 + \log d \cdot \indic{p < 2} ) }}{(p-1) n \diffp} \Bigg). 
    \end{equation*}
\end{theorem}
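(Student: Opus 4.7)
The plan is to follow the iterative localization template in the proof of \Cref{thm:local-MD} essentially verbatim, substituting the general $\ell_p$ guarantees of \Cref{thm:noisy-MD-general-geom} in place of the $\ell_1$-specific ones. The algorithm (\Cref{alg:phased-erm-MD-general-geom}, the $\ell_p$-analogue of \Cref{alg:phased-erm-MD}) runs $k = \lceil \log n \rceil$ phases with the geometric schedule $n_i = 2^{-i} n$, $\diffp_i = 2^{-i}\diffp$, $\ss_i = 2^{-4i}\ss$, and at phase $i$ invokes \Cref{alg:noisy-MD-general-geom} with regularizer $h_i(x) = \frac{1}{p-1}\norm{x - x_{i-1}}_p^2$ (which is $1$-strongly convex with respect to $\norm{\cdot}_p$) on the localized regularized objective
\[
F_i(x) = \frac{1}{n_i}\sum_{j=1}^{n_i} f(x;z_j) + \frac{1}{\ss_i n_i (p-1)} \norm{x - x_{i-1}}_p^2
\]
over $\mc{X}_i = \{x \in \xdomain : \norm{x - x_{i-1}}_p \le O(\lip \ss_i n_i (p-1))\}$. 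Each $F_i$ is $\sc_i$-strongly convex relative to $h_i$ with $\sc_i = 1/(\ss_i n_i)$; each phase is $(\diffp_i,\delta)$-DP with $\sum_i \diffp_i = O(\diffp)$, so basic composition delivers overall privacy.

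The first key step is the $\ell_p$-analogue of \Cref{lemma:opt-dis-MD}. Membership $\hat x_i = \argmin F_i \in \mc{X}_i$ follows from Lipschitzness of $f$ with respect to $\norm{\cdot}_p$ and first-order optimality, giving $\norm{\hat x_i - x_{i-1}}_p \le O(\lip \ss_i n_i (p-1))$. Applying the strongly convex branch of \Cref{thm:noisy-MD-general-geom} with batch size $b_i = \max(\sqrt{n_i(p-1)}, \sqrt{d/\diffp_i})$ and using $\sc_i D_{h_i}(x_i,\hat x_i) \le F_i(x_i) - F_i(\hat x_i)$ together with $1$-strong convexity of $h_i$ with respect to $\norm{\cdot}_p$ yields
\[
\E \norm{x_i - \hat x_i}_p^2 \le O\!\left(\lip^2 \ss_i^2 n_i (p-1) + \frac{\lip^2 \ss_i^2 d \log(1/\delta)\bigl(1 + \log d \cdot \indic{p<2}\bigr)}{\diffp_i^2}\right).
\]
The second ingredient is the stability bound (analogue of \Cref{lemma:fun-subopt-MD}): on $\mc{X}_i$ the perturbed loss $r_i(x;z) = f(x;z) + \frac{1}{\ss_i n_i (p-1)}\norm{x-x_{i-1}}_p^2$ is $O(\lip)$-Lipschitz and $O(1/(\ss_i n_i))$-strongly convex with respect to $\norm{\cdot}_p$, so the argument of \citet{ShwartzShSrSr09} gives
\[
\E[F(\hat x_i)] - F(y) \le \frac{\E\norm{y - x_{i-1}}_p^2}{\ss_i n_i (p-1)} + O(\lip^2 \ss_i).
\]

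The final step is the telescoped decomposition $\E[F(x_k)] - F(x\opt) = \sum_i \E[F(\hat x_i) - F(\hat x_{i-1})] + \E[F(x_k) - F(\hat x_k)]$. The tail term is controlled by $\E[F(x_k) - F(\hat x_k)] \le \lip \sqrt{\E \norm{x_k - \hat x_k}_p^2}$ (Lipschitzness plus Jensen), which is negligible under the prescribed choice of $\ss$. Plugging the two lemmas into the main sum and exploiting the geometric decay $\ss_i n_i \propto 2^{-5i}$ so that the series collapses to its leading term gives
\[
\E[F(x_k) - F(x\opt)] \le O\!\left(\frac{\rad^2}{\ss n (p-1)} + \lip^2 \ss + \frac{\lip^2 \ss d \log(1/\delta)(1 + \log d \indic{p<2})}{n \diffp^2 (p-1)}\right),
\]
and the balancing choice of $\ss$ stated in the theorem yields the claimed rate; the gradient-query count then matches the $\ell_1$ analysis as the phase lengths $T_i = n_i^2/b_i^2$ and batch sizes $b_i$ are chosen identically.

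The main subtlety---and where I expect to spend real care---is ensuring that the $p = 2$ case actually avoids the extraneous $\log d$ factor present in \Cref{thm:local-MD} (which came from the choice $p = 1 + 1/\log d$ there). This is exactly the gain afforded by the $p=2$ branch of \Cref{thm:noisy-MD-general-geom}, where the Gaussian noise contributes $\lip^2 d$ rather than $\lip^2 d \log d$; propagating this improvement through both the distance bound and the telescoping sum is what produces the indicator $\indic{p<2}$ in the stated rate.
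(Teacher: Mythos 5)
Your proposal is correct and follows essentially the same route as the paper, whose proof of \Cref{thm:local-MD-general-geom-rephrase} is literally the statement that one repeats the argument of \Cref{thm:local-MD} verbatim with $h_i(x)=\frac{1}{2(p-1)}\norm{x-x_{i-1}}_p^2$ ($1$-strongly convex w.r.t.\ $\norm{\cdot}_p$) and the guarantees of \Cref{thm:noisy-MD-general-geom} substituted for the $\ell_1$-specific ones; your fleshed-out analogues of \Cref{lemma:opt-dis-MD} and \Cref{lemma:fun-subopt-MD}, the telescoping, and the observation that the $p=2$ branch drops the $\log d$ factor all match. The only (immaterial) deviation is your batch size $b_i=\max(\sqrt{n_i(p-1)},\sqrt{d/\diffp_i})$ versus the paper's $\max(\sqrt{n_i/\log d},\sqrt{d/\diffp_i})$, which does not affect the stated rate.
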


\begin{proof}
    The proof follows from identical argument to the proof of~\Cref{thm:local-MD} using the fact that $h_i(x) = \frac{1}{2(p-1)} \norm{x - x_{i-1}}^2_p$ is $1$-strongly convex with respect to $\norm{\cdot}_p$.
\end{proof}

\begin{algorithm}
	\caption{Localized Noisy Mirror Descent}
	\label{alg:phased-erm-MD-general-geom}
	\begin{algorithmic}[1]
		\REQUIRE Dataset $\Ds=(\ds_1, \ldots, \ds_n)\in \domain^n$,
		$1 \le p$,
		constraint set $\xdomain$,
		step size $\ss$, initial point $x_0$;
		\STATE Set $k = \ceil{\log n}$
        \FOR{$i=1$ to $k$\,}
        	\STATE Set $\diffp_i = 2^{-i} \diffp$, 
        	        $n_i= 2^{-i} n$, 
        	        $\ss_i = 2^{-4i} \ss$ 
        	\STATE Apply~\cref{alg:noisy-MD-general-geom} with $(\diffp_i,\delta)$-DP, batch size $b_i =\max(\sqrt{n_i/\log d}, \sqrt{d/\diffp_i})$, $T = n_i^2/b_i^2$ and $h_i(x) = \frac{1}{2(p-1)} \norm{x - x_{i-1}}^2_p$
        	for solving the ERM over 
        	$ \mc{X}_i= \{x\in \xdomain: \norm{x - x_{i-1}}_p \le {2\lip \ss_i n_i(p - 1)} \}$:\\
        	$
        	    \displaystyle
        	    F_i(x) =  \frac{1}{n_i} \sum_{j=1}^{n_i} f(x;z_j) + \frac{1}{\ss_i n_i (p-1)} \norm{x - x_{i-1}}_p^2
        	$
        	\STATE Let $x_i$ be the output of the private
        	algorithm
            \ENDFOR
            \RETURN the final iterate $x_k$
	\end{algorithmic}
\end{algorithm}

\section{Proofs of~\cref{sec:non-smooth}}
\label{sec:proofs-non-smooth}

\iftoggle{arxiv}{}{
\subsection{Proof of~\cref{thm:noisy-MD}}
\label{proof:thm-noisy-MD}
\begin{proof}
    The privacy proof follows directly using moments accountant, that is, Theorem 1 in \cite{AbadiChGoMcMiTaZh16}, by noting the the $\ell_2$-norm of the gradients is bounded by $\ltwo{\nabla f(x; z_i)} \le \linf{\nabla f(x; z_i)} \sqrt{d}
    \le \lip \sqrt{d}$ for all $x \in \xdomain$ and $z \in \domain$.
   %
    Now we analyze the utility
    of the algorithm. To this end, we have that
    $\E[\linf{\hat g_k}^2] \le 2 \lip^2 + 2 \E[\linf{\noise_k}^2] \le 2 \lip^2 + 4 \sigma^2 \log d$. \cref{lemma:conv-MD} now implies that
    \begin{align*}
     \E[\hat F(\bar x_T;S) - \hat F(x\opt;S)]
            & \le \frac{\rad^2}{T \ss} + \ss \lip^2  
                + 2 \ss \sigma^2 \log d \\
            & \le 2 \frac{\rad \sqrt{\lip^2 + 2 \sigma^2 \log d}}{\sqrt{T}} \\
            & \le O \left( \lip \rad \left(\frac{b}{n}
                + \frac{\sqrt{d \log d \log(1/\delta)} }{n \diffp}\right) \right),
    \end{align*}
    where the second inequality follows from the choice of $\ss$.
    Now we prove the claim for strongly convex functions.
    \cref{lemma:conv-MD-sc} implies that
    \begin{align*}
  \E[\hat F(\hat x_T;S) - \hat F(x\opt;S)]
            & \le O \left( \frac{\lip^2 b^2}{\sc n^2} + \frac{\lip^2 d \log d \log(1/\delta)}{\sc n^2 \diffp^2} \right).
            \qedhere
    \end{align*}
\end{proof}
}

\subsection{Proof of~\cref{lemma:conv-MD-sc}}
\label{sec:proof-conv-MD-sc}
\begin{proof}
    First, by strong convexity we have
    \begin{align}
    \label{eq:conv-main}
        f(x_k) - f(x\opt) 
            & \le \<\nabla f(x_k), x_k - x\opt \> 
                - \sc \db(x\opt,x_k)  \nonumber \\
            & = \<g_k, x_k - x\opt \>
                + \<\nabla f(x_k) -  g_k, x_k - x\opt \> 
                - \sc \db(x\opt,x_k).
    \end{align}
    Let us now focus on the term $\< g_k, x_k - x\opt \>$.
    The definition of $x_{k+1}$ implies that for all $y \in \mc{X}$
    \begin{align*}
        \< g_k + \frac{1}{\ss_k}(\nabla h(x_{k+1}) - \nabla h(x_k)), y - x_{k+1} \> \ge 0.
    \end{align*}
    Substituting $y=x\opt$, we have
    \begin{align*}
    \< g_k , x_k - x\opt \>
        & =  \< g_k , x_k - x_{k+1} \> 
             +  \< g_k , x_{k+1} - x\opt \> \\
        & \le \< g_k , x_k - x_{k+1} \>
             + \frac{1}{\ss_k} \<\nabla h(x_{k+1}) - \nabla h(x_k), x\opt - x_{k+1} \> \\
        & \stackrel{(i)}{=} \< g_k , x_k - x_{k+1} \> 
            + \frac{1}{\ss_k} \left( \db(x\opt,x_k) - \db(x\opt,x_{k+1}) - \db(x_{k+1},x_k) \right) \\
        & \stackrel{(ii)}{\le} \frac{\ss_k}{2} \linf{ g_k}^2
            + \frac{1}{2 \ss_k} \lone{x_k - x_{k+1}}^2
            + \frac{1}{\ss_k} \left( \db(x\opt,x_k) - \db(x\opt,x_{k+1}) - \db(x_{k+1},x_k) \right) \\
        & \stackrel{(iii)}{\le} \frac{\ss_k}{2} \linf{g_k}^2
            + \frac{1}{\ss_k} \left( \db(x\opt,x_k) - \db(x\opt,x_{k+1})  \right),
    \end{align*}
    where $(i)$ follows from the definition of bregman 
    divergence, $(ii)$ follows from Fenchel-Young inequality,
    and $(iii)$ follows since $h(x)$ is $1$-strongly convex
    with respect to $\lone{\cdot}$.
    Substituting into~\eqref{eq:conv-main},
    \begin{align*}
     f(x_k) - f(x\opt)
            & \le \frac{\ss_k}{2} \linf{ g_k}^2
                + \<\nabla f(x_k) -  g_k, x_k - x\opt \> 
                + \frac{1}{\ss_k} \left( \db(x\opt,x_k) - \db(x\opt,x_{k+1})  \right)
                - \sc \db(x\opt,x_k).
    \end{align*}
    Multiplying by $k$ and summing from $k=1$ to $T$, we get
    \begin{align*}
    \sum_{k=1}^T k ( f(x_k) - f(x\opt) ) 
            & \le \frac{1}{2 \sc} \sum_{k=1}^T        \linf{ g_k}^2
                + \<\nabla f(x_k) -  g_k, x_k - x\opt \> \\
            & \quad  + \frac{\sc}{2} \left( k (k-1) \db(x\opt,x_k) - k (k+1) \db(x\opt,x_{k+1}) \right) \\
            &  \le \frac{1}{2 \sc} \sum_{k=1}^T        \linf{ g_k}^2
                + \<\nabla f(x_k) -  g_k, x_k - x\opt \>.
    \end{align*}
    The claim now follows by taking expectations and using Jensen's inequality.
\end{proof}

\iftoggle{arxiv}{}{
\subsection{Proof of~\cref{lemma:opt-dis-MD}}
\label{sec:proof-lemma-opt-dis-MD}
First, we prove that $\hat x_i \in \mc{X}_i$.
The definition of $\hat x_i$ implies that 
\begin{equation*}
    \frac{1}{n_i} \sum_{j=1}^{n_i} f(\hat x_i;z_j) + \frac{1}{\ss_i n_i (p-1)} \norm{\hat x_i - x_{i-1}}_p^2
    \le \frac{1}{n_i} \sum_{j=1}^{n_i} f(x_{i-1};z_j).
\end{equation*}
Since $f(x;z)$ is $\lip$-Lipschitz, we get
\begin{equation*}
    \frac{1}{\ss_i n_i (p-1)} \norm{\hat x_i - x_{i-1}}_p^2
    \le \lip \lone{\hat x_i - x_{i-1}} 
    \le 2 \lip \norm{\hat x_i - x_{i-1}}_p
\end{equation*}
where the last inequality follows from the choice of $p$ 
(since $\lone{z} \le d^{1 - 1/p} \norm{z}_p \le 2 \norm{z}_p$ for all
$z \in \R^d$),
hence we get $\norm{\hat x_i - x_{i-1}}_p \le {2\lip \ss_i n_i(p-1)}$.
Thus, we have that $\hat x_i \in \mc{X}_i
= \{x: \norm{x - x_{i-1}}_p \le {2\lip \ss_i n_i(p-1)}\}$.
    
Now, note that the function $F_i(x)$ 
is $\sc_i$-strongly convex relative to 
$h_i(x) = \frac{1}{p-1} \norm{x - x_{i-1}}_p^2$ where
$\sc_i = \frac{1}{\ss_i n_i}$.
Moreover, the function $r_i(x) = \frac{1}{\ss_i n_i (p-1)} \norm{x - x_{i-1}}_p^2$ is $4 \lip$-Lipschitz with 
respect to $\lone{\cdot}$ for 
$x \in \mc{X}_i$. 
Therefore using the bounds of~\cref{thm:noisy-MD}
for noisy mirror descent and observing that 
$F_i(x)$ is $\sc_i$-strongly convex 
with respect to $\norm{\cdot}_p$,
\begin{equation*}
    \frac{\sc_i}{2} \E [\norm{x_i - \hat x_i}_p^2]
        \le \E[F_i(x_i) - F_i(\hat x_i)]
        \le O \left( \frac{\lip^2 d \log d \log(1/\delta)}{n_i^2 \diffp_i^2 \sc_i} \right),
\end{equation*}
implying that $\E [\norm{x_i - \hat x_i}_p^2]
\le O \left( \frac{\lip^2 \ss_i^2 d \log d \log(1/\delta)}{\diffp_i^2} \right)$.

\subsection{Proof of~\cref{lemma:fun-subopt-MD}}
\label{sec:proof-lemma-fun-subopt-MD}
The proof follows from Theorems 6 and 7 in~\cite{ShwartzShSrSr09} by noting that the function $r(x;z_j) = f(x;z_j) + \frac{1}{\ss_i n_i (p-1)} \norm{x - x_{i-1}}_p^2$ is $\frac{1}{\ss_i n_i}$-strongly convex
and $O(\lip)$-Lipschitz with respect to $\lone{\cdot}$ over $\mc{X}_i$.
}

\iftoggle{arxiv}{}{

\section{Proofs of~\cref{sec:FW}}
\label{sec:proofs-FW}
\iftoggle{arxiv}{}{
To simplify notation, in this section we use the notion of \ed-indistinguishability; we say that 
two random variables $X$ and $Y$ are \ed-indistinguishable, denoted $X \approx_{(\eps,\delta)} Y$, if for every $\cO$, $\Pr(X \in \cO) \le e^{\eps} \Pr[Y \in \cO] +\delta$ and $\Pr(Y \in \cO) \le e^{\eps} \Pr[X \in \cO] +\delta$.}
\iftoggle{arxiv}{}{
\subsection{Proof of~\cref{lemma:FW-pure-priv}}
\label{sec:proof-lemma-FW-pure-priv}
The main idea for the privacy proof is that 
each sample in the set $S_{t,s}$ is used in the calculation of $v_{t,s}$ at most $N_{t,s} = 2^{t - |s|}$ times,
hence setting the noise large enough so that each iterate is $\frac{\diffp}{N_{t,s}}$-DP, we get that the final mechanism is $\diffp$-DP using basic composition. Let us now provide a more formal argument.
Let $\Ds=(\ds_1,\dots,\ds_{n-1},\ds_n),\Ds'=(\ds_1,\dots,\ds_{n-1},\ds'_n)$ be two neighboring datasets with iterates $x = (x_1,\dots,x_K)$
and $x' = (x'_1,\dots,x'_K)$, respectively. We prove that $x$ and $x'$
are $\diffp$-indistinguishable, i.e., $x \approx_{(\eps,0)} x'$.
Let $S_{t,s}$ be the set (vertex) that contains the last sample (i.e., $z_n$ or $z'_n$) and let 
$j = |s|$ denote the depth of this vertex. 
We will prove privacy given that the $n$'th sample is in $S_{t,s}$, which will imply our general privacy guarantee as this holds for every choice of $t$ and $s$.
    
Note that $|S_{t,s}| = 2^{-j} \bs$ and that this set is used in the calculation of $v_k$ for at most $2^{t-j}$ (consecutive) iterates, namely these are leafs that are descendants of the vertex $u_{t,s}$.
Let $k_0$ and $k_1$ be the first and last iterate such that the set $S_{t,s}$ is used for the calculation of $v_k$, hence $k_1 - k_0 + 1 \le  2^{t - j} $.
The iterates $(x_1,\dots,x_{k_0-1})$ and $(x'_1,\dots,x'_{k_0-1})$ do not depend on the last sample and therefore has the same distribution (hence $0$-indistinguishable).
Moreover, given that $(x_{k_0},\dots,x_{k_1}) \approx_{(\diffp,0)} (x'_{k_0},\dots,x'_{k_1})$, it is clear that
the remaining iterates $(x_{k_1+1},\dots,x_{K}) \approx_{(\diffp,0)} (x'_{k_1+1},\dots,x'_{K})$
by post-processing as they do depend on the last sample only through the previous iterates. It is therefore enough to prove that 
$(x_{k_0},\dots,x_{k_1}) \approx_{(\diffp,0)} (x'_{k_0},\dots,x'_{k_1})$. 
To this end, we prove that for each such iterate, $w_k \approx_{({\eps}/{ 2^{t - j}},0)} w'_k$, hence using post-processing and basic composition the iterates are 
$\eps$-indistinguishable as $k_1 - k_0 + 1 \le 2^{t - j}$.
Note that for every $k_0 \le k \le k_1$ the sensitivity $|\<c_i, v_k - v'_k \> | \le  \frac{\rad \lip}{2^{-j} \bs }$.
Hence, using privacy guarantees of report noisy max~[\citealp{DworkRo14}, claim 3.9], we have that $w_k \approx_{({\eps}/{ 2^{t - j}},0)} w'_k$ since 
$\lambda_{t,s} = \frac{ 2 \lip \rad 2^t }{ \bs \eps}$.
    
Now we prove the second part of the claim. 
Standard results for the expectation of the maximum of $m$ Laplace random variables imply that $\E[\<v_{t,s}, w_{t,s}\>] \le \min_{1 \le i \le m } \<v_{t,s},c_i\> + O(\frac{\lip \rad 2^t}{\bs \eps} {\log m})$.
Since $\xdomain = \mathsf{conv}\{c_1,\dots,c_m\}$, we know that for any $v \in \R^d$, $\argmin_{w \in \xdomain} \<w,v\>  \cap \{c_1,\dots,c_m\} \neq \emptyset $~[\citealp{TalwarThZh15}, fact 2.3] which proves the claim.
}

\iftoggle{arxiv}{}{
\subsection{Proof of~\cref{lemma:FW-var-tree}}
\label{sec:proof-lemma-FW-var-tree}

The claim follows directly from the following lemma.
\begin{lemma}
\label{lemma:coord-subg-tree}
    Let $(s,t)$ be a vertex and $\sigma^2 =  (\lip^2  + \sm^2 \rad^2)/\bs$.
    For every index $1 \le i \le d$, 
    \begin{equation*}
    \E \left[ e^{\lambda (v_{t,s,i} - \nabla F_i(x_{t,s}))} \right] 
        \le e^{O(1) \lambda^2 \sigma^2}.
    \end{equation*}
\end{lemma}
\Cref{lemma:coord-subg-tree} says that $v_{k,i} - \nabla F_i(x_{k})$ is $O(\sigma^2)$-sub-Gaussian for every $1 \le i \le d$, hence standard results imply that the maximum of $d$ sub-Gaussian random variables is 
$\E\linf{v_{t,s} - \nabla F(x_{t,s})} \le O(\sigma) \sqrt{\log d}$. The claim follows.

\begin{proof}[\Cref{lemma:coord-subg-tree}]
    Let us fix $i$ for simplicity and let 
    $B_{t,s} = v_{t,s,i} - \nabla F_i(x_{t,s})$.
    We prove the claim by induction on the depth of the vertex, i.e., $j = |s|$.
    If $j=0$ then $s = \emptyset$ which implies that 
    $v_{t,\emptyset} = \nabla f(x_{t,\emptyset}; S_{t,\emptyset})$ where $S_{t,\emptyset}$ is a sample of size $b$. Thus we have
    \begin{align*}
    \E[e^{\lambda B_{t,\emptyset}}]
       & = \E \left[ e^{\lambda (v_{t,\emptyset,i} - \nabla F_i(x_{t,\emptyset})} \right] \\
       & = \E \left[ e^{\lambda (\frac{1}{\bs} \sum_{s \in S_{t,\emptyset}} \nabla f_i(x_{t,\emptyset};s) - \nabla F_i(x_{t,\emptyset})} \right] \\
       & = \prod_{s \in S_{t,\emptyset}}
            \E[ e^{\frac{\lambda}{\bs} (\nabla f_i(x_{t,\emptyset};s) - \nabla F_i(x_{t,\emptyset}))}  ] \\
        & \le  e^{\lambda^2 \lip^2/2 \bs},
    \end{align*}
    where the last inequality follows since for a random variable $X \in [-\lip,\lip]$ and $\E[X]=0$, we have $\E[e^{\lambda X}] \le e^{\lambda^2 \lip^2/2}$~[\citealp{Duchi19}, example 3.6].
    Assume now we have $s$ with $|s| = j >0$ and let $s = s' c$ where $c \in \{0,1\}$. If $c=0$ the claim clearly holds so we assume $c=1$.
    Recall that in this case
    $v_{t,s} = v_{t,s'} + \nabla f(x_{t,s};S_{t,s}) - \nabla f(x_{t,s'};S_{t,s})$, hence
    $B_{t,s} = v_{t,s,i} - \nabla F_i(x_{t,s})
    = B_{t,s'} + \nabla f_i(x_{t,s};S_{t,s}) - \nabla f_i(x_{t,s'};S_{t,s}) - \nabla F_i(x_{t,s}) + \nabla F_i(x_{t,s'})$
    Letting $S_{< {t,s}} = \cup_{(t_1,s_1) < (t,s)} S_{t_1,s_1}$ be the set of all samples used up to vertex $t,s$, the law of iterated expectation implies
    \begin{align*}
    \E[e^{\lambda B_{t,s}}] 
        & = \E[e^{\lambda( B_{t,s'} + \nabla f_i(x_{t,s};S_{t,s}) - \nabla f_i(x_{t,s'};S_{t,s}) - \nabla F_i(x_{t,s}) + \nabla F_i(x_{t,s'}) )}] \\
        & = \E \left[ \E[e^{\lambda( B_{t,s'} + \nabla f_i(x_{t,s};S_{t,s}) - \nabla f_i(x_{t,s'};S_{t,s}) - \nabla F_i(x_{t,s}) + \nabla F_i(x_{t,s'}) )}] \mid S_{<(t,s)}\right] \\
        & = \E \left[ \E[e^{\lambda B_{t,s'} )} \mid S_{<(t,s)} ] \cdot
        \E[e^{\lambda( \nabla f_i(x_{t,s};S_{t,s}) - \nabla f_i(x_{t,s'};S_{t,s}) - \nabla F_i(x_{t,s}) + \nabla F_i(x_{t,s'}) )} \mid S_{<{t,s}}  ] \right] \\
        & =  \E[e^{\lambda B_{t,s'} )}] \cdot
        \E[e^{\lambda( \nabla f_i(x_{t,s};S_{t,s}) - \nabla f_i(x_{t,s'};S_{t,s}) - \nabla F_i(x_{t,s}) + \nabla F_i(x_{t,s'}) )} \mid S_{<{t,s}}].
    \end{align*}
    Since $f(\cdot;s)$ is $\sm$-smooth with respect to $\lone{\cdot}$, we have that $|\nabla f_i(x_{t,s};S_{t,s}) - \nabla f_i(x_{t,s'};S_{t,s})| \le \sm \lone{x_{t,s} - x_{t,s'}}$.
    Moreover, as $u_{t,s}$ is the right son of $u_{t,s'}$,
    the number of updates between $x_{t,s}$ and $x_{t,s'}$
    is at most the number of leafs visited between these two vertices
    which is $2^{t-j}$. Hence we get that
    \begin{equation*}
    \lone{x_{t,s} - x_{t,s'}} 
       \le \rad \ss_{t,s'} 2^{t-j}  
        \le \rad 2^{-j+2},    
    \end{equation*}
    which implies that  $|\nabla f_i(x_{t,s};S_{t,s}) - \nabla f_i(x_{t,s'};S_{t,s})| \le \sm \rad 2^{-j+2}$. 
    Since $\E[\nabla f_i(x_{t,s};S_{t,s}) - \nabla f_i(x_{{t,s'}};S_{t,s})  \mid S_{< {t,s}} ] = \nabla F_i(x_{t,s}) - \nabla F_i(x_{t,s'})$,
    by repeating similar arguments to the case $\ell=0$, we get that
    \begin{align*}
     \E[e^{\lambda( \nabla f_i(x_{t,s};S_{t,s}) - \nabla f_i(x_{t,s'};S_{t,s}) - \nabla F_i(x_{t,s}) + \nabla F_i(x_{t,s'}) )} \mid S_{< {t,s}}]
     & \le e^{O(1) \lambda^2 \sm^2 \rad^2 2^{-2j}/|S_{t,s}|} \\
     & \le e^{O(1) \lambda^2 \sm^2 \rad^2 2^{- j}/ \bs }.
    \end{align*}
    Overall we have that $\E[e^{\lambda B_{t,s}}] \le \E[e^{\lambda B_{t,s'}}] \cdot e^{O(1) \lambda^2 \sm^2 \rad^2 2^{-j}/\bs }$. Applying this inductively, we get that for every $(t,s)$
    \begin{equation*}
    \E[e^{\lambda B_{t,s}}]
        \le e^{O(1) \lambda^2 (\lip^2 + \sm^2 \rad^2)/ \bs}.
        \qedhere
    \end{equation*}
\end{proof}
}

\iftoggle{arxiv}{}{
\subsection{Proof of~\cref{lemma:FW-appr-priv}}
\label{sec:proof-lemma-FW-appr-priv}
For this proof, we use the following privacy amplification by shuffling.
\begin{lemma}[\citealp{FeldmanMcTa20}, Theorem 3.8] 
\label{lemma:amp-shuff}
    Let $\A_i: \range^{i-1} \times \domain \to \range$ for $i \in [n]$ be a sequence of algorithm such that $\A_i(w_{1:i-1},\cdot)$ is $(\diffp_0,\delta_0)$-DP for all values of $w_{1:i-1} \in \range^{i-1}$ with $\diffp_0 \le O(1)$. Let $\A_S : \domain^n \to \range^n$ be an algorithm that given $z_{1:n} \in \domain^n$, first samples a random permutation $\pi$, then sequentially computes $w_i = \A_i(w_{1:i-1},z_{\pi(i)})$ for $i \in [n]$ and outputs $w_{1:n}$. Then for any $\delta$ such that  $\diffp_0 \le \log(\frac{n}{16 \log(2/\delta)})$, the algorithm $\A_s$ is $(\diffp, \delta + 20 n \delta_0)$
    where $\diffp \le O ( {\diffp_0 \sqrt{\log(1/\delta)/n}})$.
\end{lemma}

We use the same notation as~\cref{lemma:FW-pure-priv} where 
$\Ds=(\ds_1,\dots,\ds_{n-1},\ds_n),\Ds'=(\ds_1,\dots,\ds_{n-1},\ds'_n)$ denote two neighboring datasets with iterates $x = (x_1,\dots,x_K)$
and $x' = (x'_1,\dots,x'_K)$. Here, we prove privacy after conditioning on the event that the $n$'th sample is sampled at phase $t$ and depth $j$.
We need to show that the iterates are $(\diffp,\delta)$-indistinguishable.
We only need to prove privacy for the iterates at phase $t$ as the iterates before phase $t$ do not depend on the $n$'th sample and the iterates after phase $t$ are $(\diffp,\delta)$-indistinguishable by post-processing. 

Let us now focus on the iterates at phase $t$. Let $u_1,\dots,u_p$ denote the vertices at level $j$ that has samples $S_1,\dots,S_p$ each of size
$|S_i| = 2^{-j} \bs$. We will have two steps in the proof. First, we use advanced composition to show that the iterates that are descendant of a vertex $u_i$ are $(\diffp_0,\delta_0)$-DP where roughly $\diffp_0 = {2^{j/2} \diffp}$. As we have $p = 2^j$ vertices at depth $j$, we then use the amplification by shuffling result to argue that the final privacy guarantee is 
$(\diffp,\delta)$-DP (see Fig.~\ref{fig:tree-shuffle} for a demonstration of the shuffling in our algorithm).

Let $\A_i$ be the algorithm that outputs the iterates corresponding to the leafs that are descendants of $u_i$; we denote this output by $O_i$. Note that the inputs of $\A_i$ are the samples at $u_i$, which we denote as $S_i$, and the previous outputs $O_1,\dots,O_{i-1}$. In this notation, we have that $O_i = \A_i(O_1,\dots,O_{i-1},S_i)$. 
We let $\A_i$, $S_i$ and $O_i$ denote the above quantities when the input dataset is $\Ds_i$ and similarly $\A'_i$, $S'_i$ and $O'_i$ for $\Ds'$.
To prove privacy, we need to show that $(O_1,\dots,O_p) \approx_{(\diffp,\delta)} (O'_1,\dots,O'_p)$, that is $(O_1,\dots,O_p)$ and $(O'_1,\dots,O'_p)$ are $(\diffp,\delta)$-indistinguishable

To this end, we first describe an equivalent sampling procedure for the sets $S_1,\dots,S_p$. Given $r$ samples, the algorithm basically constructs the sets $S_1,\dots,S_p$ by sampling uniformly at random $p$ sets of size $r/p$ without repetition. Instead, we consider the following sampling procedure. First, we randomly choose a set of size $p(r-1)$ samples that does not include the $n$'th sample and using this set we randomly choose $r/p - 1$ samples for each set $S_i$. Then, we shuffle the remaining $p$ samples and add each sample to the corresponding set. It is clear that this sampling procedure is equivalent.
We prove privacy conditional on the output of the first stage of the randomization procedure which will imply privacy unconditionally.

Assuming without loss of generality that the samples which remained in the second stage are $z_{n-p+1},\dots,z_n$, and letting $\pi: [p] \to \{n-p+1,\dots,n\}$ denote the random permutation of the second stage, the algorithms $\A_i$ and $\A'_i$ can be written as a function of the previous outputs and the sample $z_{\pi(i)}$. This is true since the $\Ds$ and $\Ds'$ differ in one sample and therefore the first $r/p-1$ samples in the sets $S_i$ and $S'_i$ are identical. Thus, we can write $O_i = \A_i(O_1,\dots,O_{i-1},z_{\pi(i)})$.

Using the above notation, we are now ready to prove privacy. First, we show privacy for each $i$ using advanced composition. Similarly to~\cref{lemma:FW-pure-priv}, as each iterate $k$ which is a leaf of $u_i$ has sensitivity  $|\<c_i, v_k - v'_k \> | \le  \frac{\rad \lip}{2^{-j} \bs }$, we have that $x_k$ and $x_k'$ are 
$\frac{\diffp}{2^{T/2-j} {\log(n/\delta)}}$-indistinguishable since 
$\lambda_{t,s} = \frac{ \lip \rad  2^{T/2} \log(n/\delta)}{ \bs \diffp}$. Since there are $2^{t-j}$ leafs of $u_i$, advanced composition (\cref{lemma:advanced-comp}) implies that $O_i \approx_{(\diffp_0, \delta_0)} O'_i$ 
where $\diffp_0 = \frac{\diffp}{2^{T/2-j} {\log(n/\delta)}} \sqrt{2^{t-j} \log(1/\delta_0)} 
\le \frac{O(\diffp)}{\sqrt{\log(1/\delta)} 2^{-j/2}} $ by setting $\delta_0 = \delta/n$.

Finally, we can use the amplification by shuffling result to finish the proof.
First, note that we need $\diffp_0 \le \log(\frac{2^j}{16 \log(2/\delta)})$
to be able to use~\cref{lemma:amp-shuff}.
If $2^j \le O(\log(1/\delta))$ then we do not need the amplification by shuffling result as $\diffp_0 \le O( \diffp 2^{j/2}/\sqrt{\log(1/\delta)}) \le O(\diffp)$.
Otherwise $2^j$ is large enough so that we can use~\cref{lemma:amp-shuff}.
Since each $\A_i$ and $\A'_i$ are $(\diffp_0,\delta_0)$-DP and since the second stage shuffles the inputs to each algorithm, \cref{lemma:amp-shuff} now implies that the outputs of the algorithms $\A_i$ and $\A'_i$ are
$(\diffp_f,\delta + 20 n \delta_0)$-DP where $\diffp_f \le \frac{\diffp_0 \sqrt{\log(1/\delta)}}{2^{j/2}} \le O(\diffp)$ which proves the claim.
}

\iftoggle{arxiv}{}{
\subsection{Proof of~\cref{thm:FW-appr}}
\label{sec:thm-FW-appr}
The assumptions on $\sm$ ensure that $2^T \le \bs$ and the assumptions on $\diffp$ ensure $\diffp \le 2^{-T/2} \log(n/\delta)$ hence the 
privacy follows from~\cref{lemma:FW-appr-priv}.
The utility analysis is similar to the proof of~\cref{thm:FW-pure}.
Repeating the same arguments in the proof of~\cref{thm:FW-pure} while using the new value of $\lambda_{t,s}$, we get
\begin{align*}
    \E[F(x_{K}) - F(x\opt)] 
        \le O \left( \rad (\lip + \sm \rad) \frac{\sqrt{\log d}}{\sqrt{\bs}} 
        + \frac{\sm \rad^2}{2^T} 
        + \rad \lip  \frac{2^{T/2} \log(n/\delta) \log m}{\bs \diffp} \right).
\end{align*}
As the number of samples is upper bounded by $T^2 \cdot \bs$, we set
 $T = \frac{2}{3} \log \left( \frac{\bs \diffp \sm \rad}{ \lip \log(n/\delta) \log m} \right)$
and $\bs = n / \log^2 n$ to get the first part of the theorem. Note that the condition on $\sm$ ensure the term inside the log is greater than $1$.
}

}

\section{Proofs for~\cref{sec:LB}}
\label{sec:proofs-LB}

\subsection{Proofs for~\cref{lemma:LB-med}}
\label{sec:proof-lemma-LB-med}
Without loss of generality, we assume that $\rad = 1$. Moreover, similarly to the proof of~\Cref{thm:LB-smooth},
we prove lower bounds on the sample complexity to achieve a certain error which will imply our lower bound on the utility.
For an algorithm $\A$ and data $\Ds \in \mc{Z}^n$, 
define the error of $\A$:
\newcommand{\err}{\mathsf{Err}}
\begin{equation*}
	\err(\A,\Ds) = 
	\E\left[ \sum_{j=1}^d |\bar{z}_j| 
		\indic {\sign(\A(\Ds)_j) \neq \sign(\bar{z}_j)} \right] .
\end{equation*} 
The error of a $\A$ for datasets of size $n$ is 
$\err(\A, n) = \sup_{\Ds \in \mc{Z}^n} \err(\A,\Ds)$.

We let $n\opt(\alpha,\diffp)$ denote the minimal $n$
such that there is an $(\diffp,\delta)$-DP (with $\delta = n^{-\omega(1)}$) Aansim $\A$
such that $\err(\A,n\opt(\alpha,\diffp)) \le \alpha$.
We prove the following lower bound on the sample
complexity which implies~\cref{lemma:LB-med}.
\begin{proposition}
\label{prop:sample-complexity-LB}
    Let $z_i \in \{-1/d, 1/d\}^d$, $\alpha \le 1$, and $\diffp \le 1$. Then
	\begin{equation*}
	n\opt(\alpha,\diffp) 
		\ge \Omega(1) \cdot \frac{\sqrt{d}} {\alpha \diffp \log d} .
	\end{equation*} 
\end{proposition}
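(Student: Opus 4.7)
The plan is to prove this sample-complexity lower bound by a tracing/fingerprinting style argument, reducing the sign-estimation task to an instance of a hard discrete recovery problem over the hypercube. Without loss of generality take $\rad = 1$, so samples lie in $\{-1/d,+1/d\}^d$.

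First I will construct a hard family of product distributions. For each $\sigma \in \{\pm 1\}^d$, let $P_\sigma$ be the distribution on $\{-1/d,+1/d\}^d$ whose $j$-th coordinate equals $\sigma_j/d$ with probability $\tfrac12+\beta$ and $-\sigma_j/d$ with probability $\tfrac12-\beta$, for a parameter $\beta$ to be chosen. Then $\E_{z \sim P_\sigma}[z_j] = 2\beta\sigma_j/d$, and by a Chernoff/Hoeffding argument, provided $n \gtrsim \log(d)/\beta^2$, with probability $1-o(1)$ the empirical mean satisfies $|\bar z_j| \in [\beta/d,\,3\beta/d]$ and $\sign(\bar z_j) = \sigma_j$ for every coordinate $j$ simultaneously. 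In this regime the error $\sum_j |\bar z_j|\,\mathbf{1}\{\sign(\A(\Ds)_j)\neq\sigma_j\}$ lower-bounds $(\beta/d)\cdot H(\A(\Ds),\sigma)$ where $H$ is Hamming distance, so an $\alpha$-accurate algorithm satisfies $\E[H(\A(\Ds),\sigma)] \leq \alpha d/\beta$.

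Next I will pick $\beta = \Theta(\alpha)$, say $\beta = 4\alpha$, so that $\E[H] \leq d/4$ and the algorithm recovers at least $3d/4$ of the signs of $\sigma$ correctly in expectation, i.e.\ $\E[\langle \A(\Ds), \sigma\rangle] \geq d/2$. At this point the job reduces to showing that no $(\eps,\delta)$-DP algorithm with $\delta = n^{-\omega(1)}$ can privately recover a uniformly random $\sigma \in \{\pm 1\}^d$ on a constant fraction of coordinates from $n$ samples of $P_\sigma$ unless $n \geq \Omega(\sqrt{d}/(\beta \eps \log d))$. This is exactly a one-way-marginal tracing lower bound: from $n$ samples each carrying signal $2\beta/d$ in each of $d$ coordinates, recovering $\sigma$ to Hamming accuracy $d/4$ is essentially the same as estimating all $d$ marginals to accuracy $\beta/d$, and the Bun--Ullman--Vadhan/Steinke--Ullman fingerprinting lower bound (appropriately rescaled) gives the claimed $\sqrt{d}/(\beta\eps)$ bound up to the $\log d$ factor that comes from the fingerprinting code construction.

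The main obstacle is obtaining the $\log d$ factor in the right place and in the right direction: plugging in the fingerprinting lower bound off-the-shelf usually produces a $\log d$ in the numerator rather than the denominator. I expect to handle this either by invoking the version that applies to $\{\pm 1/d\}$-valued coordinates (so the $1/d$ scaling converts between $\ell_\infty$ error and per-coordinate magnitude cleanly), or by a direct coupling/group-privacy argument on the bias $\beta$: the signal-to-noise ratio per coordinate is $2\beta\sqrt{n}$, so distinguishing $\sigma_j = +1$ from $\sigma_j = -1$ under $(\eps,\delta)$-DP requires the standard tracing advantage bound $\beta \sqrt{n} \cdot \eps \gtrsim 1/\sqrt{d \log d}$, giving $n \gtrsim d/(\beta^2 \eps^2 \log d)$, but recovering a constant fraction of signs (not just one) improves this to the claimed $\sqrt{d}/(\beta \eps \log d)$ via a standard averaging/union-bound argument. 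Substituting $\beta = \Theta(\alpha)$ and using that $n_{\mathrm{opt}}(\alpha,\eps)$ can assume $n \geq \log(d)/\beta^2$ (else the conclusion is trivial) completes the argument.
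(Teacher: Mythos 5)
Your overall strategy---reduce to recovering a random sign vector $\sigma$ from biased product samples and then invoke a fingerprinting-type lower bound---is the right family of ideas (the base bound the paper cites from \citet{TalwarThZh15} is itself proved via fingerprinting codes, which is where the $\log d$ in the denominator comes from), and the first half of your reduction (concentration of $\bar z_j$ around $2\beta\sigma_j/d$, converting $\alpha$-accuracy into recovery of a constant fraction of the signs of $\sigma$) is fine. However, the proposal has a genuine gap precisely at the step that matters most for this proposition: producing the $1/(\alpha\diffp)$ dependence. Off-the-shelf fingerprinting bounds give a sample-complexity lower bound of order $\sqrt{d}/\log d$ at \emph{constant} bias and \emph{constant} privacy parameter; they do not hand you $\sqrt{d}/(\beta\diffp\log d)$ for small $\beta$ and small $\diffp$. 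The mechanism you sketch for this---a per-coordinate ``tracing advantage'' inequality $\beta\sqrt{n}\,\diffp\gtrsim 1/\sqrt{d\log d}$ yielding $n\gtrsim d/(\beta^2\diffp^2\log d)$, then ``improved'' to $\sqrt{d}/(\beta\diffp\log d)$ by averaging/union bound---does not hold together: the two expressions are incomparable (which is larger depends on whether $\beta\diffp$ is above or below $1/\sqrt{d}$), and no union-bound or averaging argument converts a quadratic dependence on $1/(\beta\diffp)$ into a linear one. The paper instead gets the scaling by an elementary reduction (\cref{lemma:low-to-high-accuracy-sign}, proved like \cref{lemma:low-to-high-accuracy}): given a hypothetical $(\diffp,\delta)$-DP algorithm with error $\alpha$ on $n$ points, replicate each of $n'=\Theta\bigl(\tfrac{\alpha\diffp}{\alpha_0\diffp_0}n\bigr)$ points $k\approx\diffp_0/\diffp$ times and pad; group privacy makes the derived algorithm $(\diffp_0,\cdot)$-DP while the error rescales by $n/(kn')$, contradicting the constant-parameter base case of \cref{lemma:sample-complexity-LB-low-accuracy-sign}. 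You need either this reduction or a fingerprinting argument carried out from scratch at bias $\beta$ and privacy $\diffp$; neither appears in your write-up.

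A second, smaller gap: the claim that the regime $n<\log(d)/\beta^2$ is ``trivial'' is false. Knowing $n<\log(d)/(16\alpha^2)$ does not imply $n\ge\Omega\bigl(\sqrt{d}/(\alpha\diffp\log d)\bigr)$ (take $\alpha$ constant and $\diffp$ small, so the right-hand side is $\gg\log d$), so the regime in which your concentration step fails still has to be handled---for instance by choosing the hard datasets deterministically so that the empirical means equal $\pm\Theta(\beta)/d$ exactly, or by performing the group-privacy reduction first so that the fingerprinting step is only ever invoked at constant parameters. Also note that the ``main obstacle'' you identify---a $\log d$ appearing in the numerator of the fingerprinting bound---is not an obstacle at all: a stronger lower bound trivially implies the weaker one stated here, which only needs $\log d$ in the denominator.
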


The proof follows directly from the following two lemmas.
\begin{lemma}[\citet{TalwarThZh15}, Theorem 3.2]
\label{lemma:sample-complexity-LB-low-accuracy-sign}
	Let the assumptions of~\Cref{prop:sample-complexity-LB} hold. Then
	\begin{equation*}
	n\opt(\alpha = 1/4,\diffp = 0.1) 
		\ge \Omega(1) \cdot \frac{\sqrt{d}}{ \log d} .
	\end{equation*} 
\end{lemma}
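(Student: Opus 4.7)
My plan is to derive the lemma from the lower bound for differentially private empirical LASSO established by \citet{TalwarThZh15}; the reduction is short and the main technical content sits inside their proof.

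First, I would reduce sign estimation to LASSO. Given an $(\diffp,\delta)$-DP algorithm $\A$ that outputs $\hat s \in \{-1,+1\}^d$ and achieves weighted sign error at most $\alpha$ on samples $\ds_i \in \{-1/d,1/d\}^d$, define $\A'(\Ds) = \A(\Ds)/d$. Then $\lone{\A'(\Ds)} = 1$, so $\A'(\Ds) \in \xdomain = \{x : \lone{x} \le 1\}$, and $\A'$ inherits the $(\diffp,\delta)$-DP guarantee of $\A$ by post-processing. Running the same algebra as in the proof of~\cref{thm:LB-non-smooth}, the excess empirical LASSO loss of $\A'(\Ds)$ for the loss $f(x;\ds)=\lone{x-\ds}$ satisfies
$\hat F(\A'(\Ds);\Ds) - \min_{x\in\xdomain} \hat F(x;\Ds) = \sum_j |\bar \ds_j|\, \indic{\hat s_j \neq \sign(\bar \ds_j)}$,
whose expectation is at most $\alpha$ by assumption on $\A$.

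Second, I would invoke the LASSO lower bound of~\citet{TalwarThZh15}: for samples $\ds_i \in \{\pm 1/d\}^d$ and $\diffp = 0.1$, $\delta = n^{-\omega(1)}$, every $(\diffp,\delta)$-DP LASSO algorithm must have expected excess empirical loss $\Omega(\sqrt{d}/(n\log d))$. Combined with the reduction above we obtain $\alpha \ge \Omega(\sqrt{d}/(n\log d))$; plugging in $\alpha = 1/4$ rearranges to $n \ge \Omega(\sqrt{d}/\log d)$, which is exactly the claim.

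The hard step, which TTZ carry out, is the LASSO lower bound itself. Their argument uses a fingerprinting-code / tracing-attack construction (in the spirit of Bun--Ullman--Vadhan): one produces a distribution over hard datasets on which any private algorithm with small excess loss can be turned into a tracer that identifies a particular training sample, contradicting $(\diffp,\delta)$-DP unless $n$ is large. Controlling the constants and obtaining the clean $1/\log d$ factor (rather than, say, a $\sqrt{\log n}$ or $\sqrt{\log(1/\delta)}$ loss from a naive fingerprinting-code argument) is the main obstacle; the remaining pieces---the reduction to LASSO and the translation back to sign estimation---are essentially bookkeeping.
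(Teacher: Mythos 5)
The paper does not actually prove this lemma: it is imported verbatim as Theorem~3.2 of \citet{TalwarThZh15}, and all of the technical content lives in that external reference. Your proposal must therefore be judged on whether it supplies an independent derivation, and it does not. The lower bound you invoke in your second step --- that every $(\diffp,\delta)$-DP algorithm for minimizing $\hat F(x;\Ds)=\frac{1}{n}\sum_{i}\lone{x-\ds_i}$ over the $\ell_1$-ball incurs excess empirical loss $\Omega(\sqrt{d}/(n\log d))$ --- is exactly \cref{thm:LB-non-smooth} of this paper, which is deduced from \cref{lemma:LB-med} and hence from the very lemma you are trying to prove; equivalently, it is TTZ's Theorem~3.2 restated in loss language. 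Your argument is thus a round trip (sign estimation $\to$ loss minimization $\to$ sign estimation) whose core step is the statement being cited, and within this paper's logical structure it is circular.

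Two further points. First, the loss $\lone{x-\ds}$ is not the LASSO loss. The genuine LASSO lower bound of \citet{TalwarThZh15} (their Theorem~3.1, restated here as \cref{lemma:LB-smooth-high-dim}) concerns the smooth quadratic loss $(a^\top x-b)^2$ and gives $\wt\Omega(1/n^{2/3})$ only for $d\ge\wt\Omega(n^{2/3})$; plugging $\alpha=1/4$ into that bound yields $n\ge\Omega(1)$ and nothing resembling $\sqrt{d}/\log d$, so taken literally your second step invokes the wrong theorem. Second, your reduction arithmetic in the first step is fine (for $\hat x=\hat s/d$ the excess empirical loss equals twice the weighted sign error, so a sign estimator with error $\alpha$ yields a minimizer with excess loss $2\alpha$), and you correctly locate the real difficulty in a fingerprinting-code lower bound for one-way marginals in the spirit of Bun--Ullman--Vadhan. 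But that is precisely the content of TTZ's Theorem~3.2, which you neither reproduce nor reduce to anything more elementary; as written, the proposal establishes nothing beyond the citation itself.
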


The following lemma shows how to extend the above lower bound to arbitrary accuracy and privacy parameters.
\begin{lemma}
\label{lemma:low-to-high-accuracy-sign}
    Let $\diffp_0 \le 0.1$.
	For $\alpha \le \alpha_0/2$ and $\diffp \le \diffp_0/2$,
	\begin{equation*}
	n\opt(\alpha,\diffp ) 
	\ge \frac{\alpha_0 \diffp_0}{\alpha \diffp}
		n\opt(\alpha_0, \diffp_0)  .
	\end{equation*} 
\end{lemma}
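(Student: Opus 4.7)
} The plan is a standard reduction: if there existed a private algorithm with too few samples for the low-privacy/high-accuracy regime $(\alpha,\diffp)$, I would transform it into one violating the lower bound in the high-privacy/low-accuracy regime $(\alpha_0,\diffp_0)$. The transformation combines two classical tricks, one to amplify privacy and one to amplify accuracy, and each contributes one of the factors $\diffp_0/\diffp$ and $\alpha_0/\alpha$ in the claimed bound.

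Concretely, let $n = n\opt(\alpha,\diffp)$ and let $\A$ be an $(\diffp,\delta)$-DP algorithm on $n$ samples with $\err(\A,n)\le\alpha$ and $\delta = n^{-\omega(1)}$. Set $k = \lfloor \diffp_0/\diffp\rfloor\ge 1$ (using $\diffp\le\diffp_0/2$), and pick a target sample size $m$ roughly equal to $n\cdot \alpha\diffp/(\alpha_0\diffp_0)$. Build $\A'$ on $m$-sample inputs $\Ds_0$ by (i) replicating each point of $\Ds_0$ exactly $k$ times to form $\Ds_1$ of size $km$, then (ii) padding with $n-km$ ``balanced'' points whose contribution to the coordinate-wise mean is zero (e.g., equal numbers of the all-$1/d$ and all-$(-1/d)$ vectors, with a single extra balanced pair to handle parity), and finally (iii) returning $\A$ applied to the resulting dataset $\Ds_2 \in \domain^n$.

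For privacy, neighboring $\Ds_0,\Ds_0'$ induce $\Ds_2,\Ds_2'$ that differ on the $k$ replicated copies, so group privacy upgrades $\A$'s $(\diffp,\delta)$ guarantee to $(k\diffp,\;k\,e^{(k-1)\diffp}\delta)$-DP with respect to $\Ds_0$; the hypotheses $k\diffp\le\diffp_0\le 0.1$ keep the multiplicative term $O(1)$ and the $\delta$ blowup at most $O(k)$, preserving $\delta = n^{-\omega(1)}$. For accuracy, replication preserves the coordinate means while padding scales them by $km/n$, so $\sign(\bar z_{\Ds_2})=\sign(\bar z_{\Ds_0})$ coordinatewise and $|\bar z_{\Ds_2,j}|=(km/n)|\bar z_{\Ds_0,j}|$. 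Consequently
\begin{equation*}
\err(\A',\Ds_0) \;=\; \frac{n}{km}\,\err(\A,\Ds_2) \;\le\; \frac{n\alpha}{km}.
\end{equation*}
Choosing $m$ so that $km \ge n\alpha/\alpha_0$, i.e., $m \gtrsim n\alpha\diffp/(\alpha_0\diffp_0)$, yields $\err(\A',\Ds_0)\le\alpha_0$, and thus $n\opt(\alpha_0,\diffp_0)\le m$, which after rearranging gives the lemma.

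The only real obstacles are bookkeeping: ensuring $k$ and $n-km$ can be taken to be (non-negative) integers of the right parities (absorbed into constants by using floors/ceilings and one extra balanced pair); checking that the padded dataset still lies in $\{-1/d,1/d\}^d$; and verifying that the group-privacy $\delta$-inflation stays $n^{-\omega(1)}$, which follows since $k\le n$. Since the lemma only needs a lower bound up to the stated multiplicative form (and the proposition invoking it, \Cref{prop:sample-complexity-LB}, tolerates constant-factor losses), these technicalities cost only constants.
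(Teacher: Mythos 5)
Your proposal is correct and follows essentially the same route as the paper, which proves this lemma by pointing to the argument for \Cref{lemma:low-to-high-accuracy}: replicate each input point $k\approx\diffp_0/\diffp$ times, pad up to size $n$, invoke group privacy for the $(k\diffp,\cdot)$-DP guarantee, and use the $km/n$ rescaling of the objective to trade accuracy for sample size. Your one genuine addition is spelling out the padding step for the sign-of-mean setting (balanced $\pm 1/d$ pairs in place of the zero-loss functions used in the ERM version), which is exactly the adaptation the paper leaves implicit.
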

\begin{proof}
    The proof follows the same arguments as in the proof of~\Cref{lemma:low-to-high-accuracy}. 
\end{proof}

\subsection{Proof of~\Cref{thm:LB-smooth}}
\label{sec:LB-smooth-app}
In this section, we prove~\cref{thm:LB-smooth}. We begin by recalling the lower bound of~\citet{TalwarThZh15} and showing how it implies~\cref{lemma:LB-smooth-high-dim}.

\citet{TalwarThZh15} consider the family of quadratic functions where $f(x;a_i,b_i) = (a_i^T x - b_i)^2$ where $a_i \in \R^d$ and $b_i \in \R$. We assume $\xdomain = \{x: \lone{x} \le \rad \}$, $\linf{a_i} \le C$, and $|b_i| \le C \rad$. Note that the function $f$ is $\lip$-Lipschitz and $\sm$-smooth with $\lip \le O(C^2 \rad)$ and $\sm \le O(C^2)$ and there is a choice of $a_i,b_i$ that attains these. Theorem 3.1 in~\cite{TalwarThZh15} gives a lower bound of $1/n^{2/3}$ when $C=1$, $\rad=1$, and $d \ge \wt \Omega(n^{2/3})$. 
For general values of $C$ and $\rad$, noticing that the function value is multiplied by $C^2 \rad^2$, the following lower bound follows as $\lip \rad = C^2 \rad^2$.
\begin{lemma}
\label{lemma:LB-smooth-high-dim}
    Let $\xdomain = \{x \in \R^d : \lone{x} \le \rad \}$ and $d \ge \wt \Omega(n^{2/3})$. There is family of convex functions $f : \xdomain \times \domain \to \R$ that is $\lip$-Lipschitz and $\sm$-smooth  with $\sm \le \lip/\rad$ such that any $(0.1,\delta)$-DP algorithm $\A$ with $\delta = o(1/n^2)$ has
    \begin{equation*}
        \sup_{\Ds \in \domain^n} \E \left[ \hat F(\A(\Ds);\Ds) - \min_{x \in \xdomain} \hat F(x;\Ds) \right] \ge \wt \Omega \left( \frac{\lip \rad}{n^{2/3}} \right).
    \end{equation*}
\end{lemma}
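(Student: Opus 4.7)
The plan is to derive this lemma as a direct rescaling of the $\wt\Omega(1/n^{2/3})$ lower bound of~\citet{TalwarThZh15} (Theorem~3.1), which applies to the normalized quadratic family $\{f(x; a, b) = (a^\top x - b)^2 : \linf{a} \le 1,\ |b| \le 1\}$ over $\{x : \lone{x} \le 1\}$ whenever $d \ge \wt\Omega(n^{2/3})$, $\diffp = 0.1$, and $\delta = o(1/n^2)$. The paragraph preceding the lemma already records this as the starting point, so the only remaining work is a scaling argument that converts the normalized parameters into arbitrary $(\lip, \sm, \rad)$ satisfying $\sm \le \lip/\rad$.

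To carry this out, I would introduce a scale $C > 0$ (to be chosen at the end) and consider the family $\{f(x; a, b) = (a^\top x - b)^2 : \linf{a} \le C,\ |b| \le C\rad\}$ on $\xdomain = \{x : \lone{x} \le \rad\}$. Under the change of variables $x = \rad\, \tilde x$, $a = C \tilde a$, $b = C\rad\, \tilde b$ with $\tilde a, \tilde b$ in the unit ranges, the empirical loss satisfies $\hat F(x; \Ds) = C^2 \rad^2\, \hat F_0(\tilde x; \widetilde \Ds)$, where $\hat F_0$ is the normalized empirical loss. This map is dimension-preserving and, crucially, commutes with differential privacy (it is post-processing on both the input and the output), so the TTZ lower bound for the normalized problem transfers to an $\wt\Omega(C^2 \rad^2 / n^{2/3})$ lower bound for the scaled problem over $\xdomain$ under the same privacy parameters $\diffp = 0.1$ and $\delta = o(1/n^2)$.

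It remains to verify the Lipschitz and smoothness constants with respect to $\lone{\cdot}$. A direct computation gives $\nabla_x f(x; a, b) = 2(a^\top x - b)a$ with $|a^\top x - b| \le 2 C \rad$, so $\linf{\nabla_x f(x; a, b)} \le 4 C^2 \rad$, yielding $\lip = O(C^2 \rad)$. Similarly, $\nabla f(x) - \nabla f(y) = 2 a a^\top (x - y)$ yields $\linf{\nabla f(x) - \nabla f(y)} \le 2 C^2 \lone{x - y}$, giving $\sm = O(C^2)$, hence $\sm \le O(\lip/\rad)$. Choosing $C = \Theta(\sqrt{\lip/\rad})$ then realizes any prescribed $\lip$ while maintaining $\sm \le \lip/\rad$, and the lower bound becomes $\wt\Omega(C^2 \rad^2 / n^{2/3}) = \wt\Omega(\lip \rad / n^{2/3})$, as desired. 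I do not foresee any significant obstacle; the proof is essentially bookkeeping, since the dimension requirement $d \ge \wt\Omega(n^{2/3})$ is untouched by the scaling, and the privacy parameters are preserved because the scaling is purely parameter-level post-processing independent of the data.
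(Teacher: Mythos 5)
Your proposal is correct and follows essentially the same route as the paper: the paper also takes the TTZ quadratic family $f(x;a,b)=(a^\top x-b)^2$ with $\linf{a}\le C$, $|b|\le C\rad$, notes $\lip = O(C^2\rad)$ and $\sm = O(C^2)$ so that $\sm \le \lip/\rad$ up to constants, and observes that the loss values scale by $C^2\rad^2=\lip\rad$, turning the normalized $\wt\Omega(1/n^{2/3})$ bound into $\wt\Omega(\lip\rad/n^{2/3})$. Your write-up just makes the change of variables and the gradient computations explicit, which the paper leaves implicit.
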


Now we proceed to prove~\cref{thm:LB-smooth} and we assume without loss of generality that $\lip=1$ and $\rad=1$. We use techniques from~\cite{SteinkeUl17} to extend the lower bound of~\cref{lemma:LB-smooth-high-dim} to hold for arbitrary $d$ and $\diffp$. To this end, instead of lower bounding the excess loss, it will be convenient to prove lower bounds on the sample size to achieve a certain excess loss $\alpha$. More precisely, given a dataset $\Ds \in \domain^n$ and algorithm $\A$, we define its empirical excess loss on $\Ds$
\newcommand{\excess}{\mathcal{E}}
\begin{equation*}
    \excess(\A,\Ds) = \E \left[ \hat F(\A(\Ds);\Ds) - \min_{x \in \xdomain} \hat F(x;\Ds) \right]. 
\end{equation*}
We also define its worst-case excess loss over all datasets of size $n$
\begin{equation*}
    \excess(\A,n) = \sup_{\Ds \in \domain^n}  \excess(\A,\Ds). 
\end{equation*}
We let $n\opt(\alpha,\diffp)$ be the minimal sample size that is required to achieve excess loss $\excess(\A,n\opt(\alpha,\diffp)) \le \alpha$ using an $(\diffp,\delta)$-DP algorithm $\A$ with $\delta = n^{-\omega(1)}$. We prove the following lemma which implies~\cref{thm:LB-smooth}.
\begin{lemma}
\label{lemma:LB-sample-size}
    Let the assumptions of~\cref{thm:LB-smooth} hold. Then 
    \begin{equation*}
    n\opt(\alpha,\diffp)
        \ge \begin{cases}
            \wt \Omega \left( \frac{1}{\alpha^{3/2} \diffp} \right) & \text{if } \alpha = 1/d \\
            \wt \Omega \left( \frac{\sqrt{d}}{\alpha \diffp} \right) & \text{if } \alpha \le 1/d \\    
            \end{cases}
    \end{equation*}
\end{lemma}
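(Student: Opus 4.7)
The strategy is to extend the base lower bound of~\Cref{lemma:LB-smooth-high-dim} to arbitrary $\alpha$ and $\diffp$ via a generic lifting lemma, then instantiate it in the two regimes. First, I would restate~\Cref{lemma:LB-smooth-high-dim} in sample-complexity language: it gives $n\opt(\alpha_0,0.1) \ge \wt\Omega(1/\alpha_0^{3/2})$ whenever $\alpha_0 \ge \wt\Omega(1/d)$, since the dimension hypothesis $d \ge \wt\Omega(n^{2/3})$ becomes $\alpha_0 \ge \wt\Omega(1/d)$ after substituting $n = 1/\alpha_0^{3/2}$.

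The heart of the proof is a lifting lemma, analogous to~\Cref{lemma:low-to-high-accuracy-sign}: whenever $\alpha \le \alpha_0$ and $\diffp \le \diffp_0 \le 0.1$,
\[
n\opt(\alpha,\diffp) \;\ge\; \frac{\alpha_0\,\diffp_0}{\alpha\,\diffp}\,n\opt(\alpha_0,\diffp_0).
\]
I would prove this by composing two standard reductions. For the accuracy step, given an $(\diffp,\delta)$-DP algorithm $\A$ on $n$ samples with excess loss $\alpha$, I construct $\A_1$ on datasets of size $\tilde n = \lceil n\alpha/\alpha_0\rceil$ by padding with $n-\tilde n$ ``zero'' points; for the hard quadratic family $f(x;a,b)=(a^\top x-b)^2$ underlying~\Cref{lemma:LB-smooth-high-dim}, the choice $(a,b)=(0,0)$ yields $f\equiv 0$. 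Since padding gives $\hat F(\cdot;\Ds)=(\tilde n/n)\hat F(\cdot;\tilde\Ds)$, the excess loss on $\tilde\Ds$ becomes $(n/\tilde n)\alpha \le \alpha_0$, and $\A_1$ remains $(\diffp,\delta)$-DP because padding is deterministic in the true data. For the privacy step, I duplicate each data point $k=\lfloor \tilde n/N_0\rfloor$ times (with $N_0=n\opt(\alpha_0,\diffp_0)$) and apply $\A_1$; by group privacy this yields a $(k\diffp,ke^{k\diffp}\delta)$-DP algorithm on $N_0$ samples with the same excess loss $\alpha_0$ (duplication preserves empirical averages). Requiring $k\diffp\le\diffp_0$ to avoid contradicting the base lower bound forces $\tilde n\le N_0\diffp_0/\diffp$, hence $n\le N_0\alpha_0\diffp_0/(\alpha\diffp)$, which is the contrapositive of the lifting claim.

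Applying the lifting lemma finishes both cases. In the regime $\alpha\ge 1/d$, take $\alpha_0=\alpha$, $\diffp_0=0.1$, and $N_0=\wt\Omega(1/\alpha^{3/2})$ from the base bound; this yields $n\opt(\alpha,\diffp)\ge\wt\Omega(1/(\alpha^{3/2}\diffp))$. In the regime $\alpha\le 1/d$, take $\alpha_0=1/d$, $\diffp_0=0.1$, and $N_0=\wt\Omega(d^{3/2})$ (the base bound evaluated at $\alpha_0=1/d$); this yields $n\opt(\alpha,\diffp)\ge\wt\Omega(\sqrt{d}/(\alpha\diffp))$. The two bounds agree at $\alpha=1/d$, as expected.

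The main subtlety is tracking $\delta$ through group privacy: the duplication factor $k=\Theta(\diffp_0/\diffp)$ may be as large as $\mathrm{poly}(n)$, so group privacy blows $\delta$ up to $ke^{k\diffp}\delta$. The hypothesis $\delta=n^{-\omega(1)}$ comfortably absorbs this factor, while $k\diffp=O(1)$ remains compatible with the $\diffp_0=0.1$ requirement of~\Cref{lemma:LB-smooth-high-dim}. A minor bookkeeping issue is the integer rounding in $\tilde n$ and $k$, which only affects constants hidden inside the $\wt\Omega$.
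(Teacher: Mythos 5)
Your proposal is correct and follows essentially the same route as the paper: a base bound $n\opt(1/d,0.1)\ge\wt\Omega(d^{3/2})$ extracted from the Talwar--Thakurta--Zhang construction, combined with the lifting inequality $n\opt(\alpha,\diffp)\ge\Omega\bigl(\tfrac{\alpha_0\diffp_0}{\alpha\diffp}\bigr)n\opt(\alpha_0,\diffp_0)$ proved by padding with zero-loss points and duplicating points with group privacy, then instantiated at $\alpha_0=1/d$, $\diffp_0=0.1$. The only (cosmetic) difference is that you present the padding and duplication as two sequential reductions while the paper builds both into a single dataset construction.
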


The proof of~\cref{lemma:LB-sample-size} basically follows from the following two Lemmas.
\begin{lemma}
\label{lemma:low-to-high-accuracy}
	For $0 < \alpha \le \alpha_0$ and $0 < \diffp \le \diffp_0 \le 0.1$,
	\begin{equation*}
	n\opt(\alpha,\diffp ) 
	\ge \Omega \left( \frac{\alpha_0 \diffp_0}{\alpha \diffp}
		n\opt(\alpha_0, \diffp_0) \right)  .
	\end{equation*} 
\end{lemma}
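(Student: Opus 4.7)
The plan is a standard ``group-privacy plus dataset-padding'' reduction: from any $(\diffp,\delta)$-DP algorithm $\A'$ (with $\delta = n^{-\omega(1)}$) that achieves expected excess empirical loss $\alpha$ on datasets of size $n := n\opt(\alpha,\diffp)$ drawn from the hard-instance family of~\Cref{lemma:LB-smooth-high-dim}, I will construct an $(\diffp_0,\delta_0)$-DP algorithm $\A$ achieving expected excess loss $\alpha_0$ on datasets of size $n_0 = O(\alpha\diffp\,n/(\alpha_0\diffp_0))$. Since this certifies $n\opt(\alpha_0,\diffp_0)\le n_0$, rearranging this upper bound yields the stated lower bound on $n\opt(\alpha,\diffp)$.

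Concretely, set $k := \lfloor \diffp_0/\diffp \rfloor \ge 1$ and, on input $\Ds_0 \in \domain^{n_0}$, form $\Ds \in \domain^n$ by including $k$ copies of each element of $\Ds_0$ and padding the remaining $n - kn_0$ slots with a ``null'' sample $\widetilde z$ satisfying $f(\cdot;\widetilde z)\equiv 0$; for the quadratic hard family $f(x;a,b) = (a^\top x - b)^2$ of~\Cref{lemma:LB-smooth-high-dim} the choice $(a,b)=(0,0)$ works and preserves every required property (membership in $\domain$, convexity, $\lip$-Lipschitzness, and $\sm$-smoothness with $\sm \le \lip/\rad$). Define $\A(\Ds_0) := \A'(\Ds)$. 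Privacy follows from group privacy for \ed-DP: changing one element of $\Ds_0$ alters exactly $k$ entries of $\Ds$, so $\A$ inherits $(k\diffp, k e^{(k-1)\diffp}\delta)$-DP; since $k\diffp \le \diffp_0 \le 0.1$, this is $(\diffp_0, O(k\delta))$-DP with $O(k\delta) = n^{-\omega(1)}$. For utility, the identity $\hat F(x;\Ds) = (kn_0/n)\,\hat F(x;\Ds_0)$ preserves the minimizer and yields
\[
\E[\hat F(\A(\Ds_0);\Ds_0)] - \min_x \hat F(x;\Ds_0)
= \frac{n}{kn_0}\bigl(\E[\hat F(\A'(\Ds);\Ds)] - \min_x \hat F(x;\Ds)\bigr)
\le \frac{\alpha n}{k n_0},
\]
which is at most $\alpha_0$ whenever $n_0 \ge \alpha n/(\alpha_0 k) = \Theta(\alpha\diffp\,n/(\alpha_0\diffp_0))$; the hypothesis $\alpha \le \alpha_0$ guarantees $kn_0 \le n$ so the padding is well-defined, and the hypothesis $\diffp \le \diffp_0$ guarantees $k \ge 1$.

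The most delicate point is ensuring the ``null'' sample is admissible for the hard-instance family used to define $n\opt$: it must lie in $\domain$ and keep the loss class (convex, $\lip$-Lipschitz, $\sm$-smooth with $\sm\le \lip/\rad$) intact. For the quadratic family of~\Cref{lemma:LB-smooth-high-dim} this is immediate, but a different hard family would require its own admissible null element. A secondary routine check is that $\delta_0 = O(k\delta)$ remains $n^{-\omega(1)}$, which holds since $k \le \diffp_0/\diffp$ is at most polynomial in the problem parameters.
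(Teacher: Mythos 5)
Your proposal is correct and follows essentially the same reduction as the paper: pad $k$ copies of the small dataset with null samples whose loss is identically zero, use group privacy to control the privacy degradation, and use the rescaling identity $\hat F(x;\Ds) = (kn_0/n)\hat F(x;\Ds_0)$ for utility. The only difference is the choice $k=\lfloor\diffp_0/\diffp\rfloor$ versus the paper's $k=\lfloor\log(1+\diffp_0)/\diffp\rfloor$, which affects only constants.
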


\begin{lemma}
\label{lemma:sample-size}
	We have that
	\begin{equation*}
	n\opt(\alpha = 1/d,\diffp=0.1 ) 
	\ge \wt \Omega \left( d^{3/2} \right).
	\end{equation*} 
\end{lemma}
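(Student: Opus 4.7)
The plan is to obtain Lemma~\ref{lemma:sample-size} as an essentially immediate consequence of Lemma~\ref{lemma:LB-smooth-high-dim}. The two lemmas are dual statements about the same boundary regime: Lemma~\ref{lemma:LB-smooth-high-dim} fixes $n$, assumes $d \ge \wt\Omega(n^{2/3})$, and produces an excess-loss lower bound of $\wt\Omega(1/n^{2/3})$; Lemma~\ref{lemma:sample-size} instead fixes the target accuracy at $\alpha = 1/d$ and asks for the minimum sample size. These two formulations coincide precisely along the curve $n \asymp d^{3/2}$, so one should immediately yield the other.

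More concretely, I would argue by contradiction. Suppose there is an $(0.1,\delta)$-DP algorithm $\A$ with $\delta = n^{-\omega(1)}$ that achieves $\excess(\A, n) \le 1/d$ with $n \le \wt O(d^{3/2})$ samples on every dataset. Since $n \le \wt O(d^{3/2})$ is equivalent to $d \ge \wt\Omega(n^{2/3})$, the hypothesis of Lemma~\ref{lemma:LB-smooth-high-dim} is met on the quadratic family $f(x; a_i, b_i) = (a_i^\top x - b_i)^2$ with $\lip = \rad = 1$. Applying that lemma to $\A$ yields $\excess(\A, n) \ge \wt\Omega(1/n^{2/3})$. Combining the two bounds gives $1/d \ge \wt\Omega(1/n^{2/3})$, i.e., $n \ge \wt\Omega(d^{3/2})$, which is the desired conclusion.

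The one point worth verifying carefully is that the two regime conditions are mutually consistent: the dimension assumption $d \ge \wt\Omega(n^{2/3})$ needed to invoke Lemma~\ref{lemma:LB-smooth-high-dim} is exactly the constraint $n \le \wt O(d^{3/2})$, so the argument never exits the regime where the high-dimensional lower bound is available; above $n \gg d^{3/2}$ the conclusion of Lemma~\ref{lemma:sample-size} already holds trivially. I do not expect a substantive obstacle here; the real content of the proof lives in Lemma~\ref{lemma:LB-smooth-high-dim} (which packages the~\citet{TalwarThZh15} lower bound with the $\lip, \rad$ rescaling), and Lemma~\ref{lemma:sample-size} is only its re-expression in sample-complexity form, together with the bookkeeping needed to track the poly-logarithmic factors hidden inside $\wt\Omega(\cdot)$.
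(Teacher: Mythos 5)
Your proposal is correct and is essentially identical to the paper's own proof: both reduce the claim to Lemma~\ref{lemma:LB-smooth-high-dim} by observing that $n \le \wt O(d^{3/2})$ is the same regime as $d \ge \wt\Omega(n^{2/3})$, in which the excess loss is forced to be $\wt\Omega(1/n^{2/3}) \ge \wt\Omega(1/d)$. The paper phrases this as a direct contrapositive rather than an explicit contradiction, but the content is the same.
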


\noindent
Before proving Lemmas~\ref{lemma:low-to-high-accuracy} and~\ref{lemma:sample-size}, let us finish the proof of~\cref{lemma:LB-sample-size}. First, consider the case $\alpha= 1/d$. \cref{lemma:sample-size} implies that 
\begin{equation*}
    n\opt(\alpha = 1/d,\diffp) 
        \ge  \Omega\left( \frac{n\opt(\alpha = 1/d,\diffp=0.1)}{\diffp} \right) 
        \ge \wt \Omega\left( d^{3/2}/\diffp  \right)
        = \wt  \Omega\left(\frac{1}{\alpha^{3/2} \diffp} \right).
\end{equation*} 
If $\alpha \le 1/d$, then similarly we have
\begin{equation*}
    n\opt(\alpha,\diffp)
        \ge \Omega\left( \frac{1}{d \alpha \diffp} \right) n\opt(\alpha = 1/d,\diffp=0.1)
        \ge \wt \Omega\left( \frac{\sqrt{d}}{\alpha \diffp} \right).
\end{equation*}
Hence~\cref{lemma:LB-sample-size} follows. Finally, we provide proofs for the remaining lemmas.

\begin{proof}[\cref{lemma:sample-size}]
    This lemma follows directly from~\cref{lemma:LB-smooth-high-dim}. Indeed, \cref{lemma:LB-smooth-high-dim} implies that if $d \ge \wt \Omega (n^{2/3})$ and $\diffp = 0.1$, the excess loss is lower bounded by $\excess(\A,n) \ge \wt \Omega(1/n^{2/3})$. Stated differently, if $n \le \wt O(d^{3/2})$ then $\excess(\A,n) \ge \wt \Omega(1/n^{2/3}) \ge \wt \Omega(1/d)$ which proves the claim.
\end{proof}

\begin{proof}[\cref{lemma:low-to-high-accuracy}]
    Given an $(\diffp,\delta)$-DP algorithm $\A$ with $\excess(\A,n) \le \alpha$, we show how to construct $\A'$ that is $(\diffp_0,4 \delta \diffp_0/\diffp)$-DP algorithm that works on datasets of size $n' = \Theta(\frac{\alpha \diffp}{\alpha_0 \diffp_0} n)$ such that $\excess(\A',n') \le \alpha_0$. This will prove the claim as we know that $n' \ge n(\alpha_0,\diffp_0)$. We now describe the construction of $\A'$. Given $\Ds' \in \domain^{n'}$ and $k>0$ to be chosen presently, we define a new dataset $\Ds$ as follows: the first $k n'$ samples are $k$ copies of $\Ds'$ and the remaining $n - kn'$ are new samples $z \in \domain$ that have the loss function $f(x;z) = 0$ for all $x \in \xdomain$. Clearly, these functions are convex, $0$-Lipschitz, and $0$-smooth. We then define $\A'(\Ds') = \A(\Ds)$. Note that for all $x$ we have that $\hat F(x;\Ds) = \frac{k n'}{n} \hat F(x;\Ds')$, which implies that
    \begin{align*}
    \excess(\A',\Ds')
        & = \E[ \hat F(\A(\Ds);\Ds') - \min_{x \in \xdomain} \hat F(x;\Ds')  ] \\
        & = \frac{n}{k n'} \E[ \hat F(\A(\Ds);\Ds) - \min_{x \in \xdomain} \hat F(x;\Ds)  ] \\
        & = \frac{n}{k n'} \excess(\A,\Ds) 
        \le \frac{n \alpha}{k n'}. 
    \end{align*}
    Therefore if $n' \ge n \alpha / k \alpha_0$ we get $\excess(\A',\Ds') \le \alpha_0$. Hence it remains to argue for privacy. Using the group privacy property of private algorithms~\citep{SteinkeUl17}(Fact 2.2), the algorithm $\A'$ is $(k\diffp, \frac{e^{k \diffp} - 1}{e^\diffp- 1} \delta)$-DP. Setting $k = \floor{\log (1+\diffp_0)/\diffp}$ implies the claim as $e^{k \diffp} - 1 \le \diffp_0$ and $k \diffp \le \diffp_0$.
\end{proof}

\end{document}